\definecolor{Red}{rgb}{1,0,0}
\definecolor{Blue}{rgb}{0,0,1}
\definecolor{Olive}{rgb}{0.41,0.55,0.13}
\definecolor{Yarok}{rgb}{0,0.5,0}
\definecolor{Green}{rgb}{0,1,0}
\definecolor{MGreen}{rgb}{0,0.8,0}
\definecolor{DGreen}{rgb}{0,0.55,0}
\definecolor{Yellow}{rgb}{1,1,0}
\definecolor{Cyan}{rgb}{0,1,1}
\definecolor{Magenta}{rgb}{1,0,1}
\definecolor{Orange}{rgb}{1,.5,0}
\definecolor{Violet}{rgb}{.5,0,.5}
\definecolor{Purple}{rgb}{.75,0,.25}
\definecolor{Brown}{rgb}{.75,.5,.25}
\definecolor{Grey}{rgb}{.5,.5,.5}
\newcommand{\pr}{\mathbb{P}}
\newcommand{\E}[1]{\mathbb{E}\!\left[#1\right]}
\newcommand{\R}{\mathbb{R}}
\newcommand{\ignore}[1]{\relax}
\newtheorem{theorem}{Theorem}[section]
\newtheorem{lemma}[theorem]{Lemma}
\newtheorem{proposition}[theorem]{Proposition}
\newtheorem{coro}[theorem]{Corollary}
\definecolor{Red}{rgb}{1,0,0}
\definecolor{Blue}{rgb}{0,0,1}
\definecolor{Olive}{rgb}{0.41,0.55,0.13}
\definecolor{Green}{rgb}{0,1,0}
\definecolor{MGreen}{rgb}{0,0.8,0}
\definecolor{DGreen}{rgb}{0,0.55,0}
\definecolor{Yellow}{rgb}{1,1,0}
\definecolor{Cyan}{rgb}{0,1,1}
\definecolor{Magenta}{rgb}{1,0,1}
\definecolor{Orange}{rgb}{1,.5,0}
\definecolor{Violet}{rgb}{.5,0,.5}
\definecolor{Purple}{rgb}{.75,0,.25}
\definecolor{Brown}{rgb}{.75,.5,.25}
\definecolor{Grey}{rgb}{.5,.5,.5}
\definecolor{Pink}{rgb}{1,0,1}
\definecolor{DBrown}{rgb}{.5,.34,.16}
\definecolor{Black}{rgb}{0,0,0}
\author{
{\sf David Gamarnik}\thanks{MIT; e-mail: {\tt gamarnik@mit.edu}. Research supported  by the NSF grants CMMI-1335155.}
\and
{\sf Ilias Zadik}\thanks{MIT; e-mail: {\tt izadik@mit.edu}}
}
\begin{document}

\title{High-Dimensional Regression with Binary Coefficients. Estimating Squared Error and the Phase Transition}
\date{\today}

\maketitle

\begin{abstract}
 We consider a sparse linear regression model $Y=X\beta^{*}+W$ where $X$ is $n\times p$ matrix Gaussian i.i.d.  entries, $W$
is $n\times 1$ noise vector with i.i.d. mean zero Gaussian entries and standard deviation $\sigma$, 
and $\beta^{*}$ is $p\times 1$ binary vector with support size (sparsity)  $k$.
Using a  novel conditional second moment method  we obtain a tight up to a 
multiplicative constant approximation of the optimal squared error $\min_{\beta}\|Y-X\beta\|_{2}$,  
where the minimization is over all $k$-sparse binary
vectors $\beta$.
The approximation reveals interesting structural properties of the 
underlying regression problem. In particular,

\begin{enumerate}
\item [(a)] We establish that $n^{*}=2k\log p/\log (2k/\sigma^{2}+1)$ is a phase transition point
with the following ``all-or-nothing'' property.
When $n$ exceeds $n^{*}$,  $(2k)^{-1}\|\beta_{2}-\beta^*\|_0\approx 0$, and when $n$ is  below $n^{*}$, 
$(2k)^{-1}\|\beta_{2}-\beta^*\|_0\approx 1$, where $\beta_2$ is the optimal solution achieving the smallest squared error.
With this we prove that  $n^{*}$ is the asymptotic threshold for recovering $\beta^*$  information theoretically. Note that $n^*$ is asymptotically 
 below the threshold  $n_{\text{LASSO/CS}}=(2k+\sigma^2)\log p$, above which 
the LASSO and Compressive Sensing methods are able to recover $\beta^*$.

\item [(b)] We compute the squared error for an intermediate problem $\min_{\beta}\|Y-X\beta\|_{2}$ where minimization is restricted to vectors
$\beta$ with $\|\beta-\beta^{*}\|_0=2k \zeta$, for some fixed ratio $\zeta\in [0,1]$. 
We show that a lower bound part $\Gamma(\zeta)$ of the estimate, which essentially corresponds to the estimate based on the first moment method,
undergoes a phase transition  at three different thresholds, namely 
$n_{\text{inf,1}}=\sigma^2\log p$, which is information theoretic bound for recovering $\beta^*$ when $k=1$
and $\sigma$ is large, then at $n^{*}$ and finally at $n_{\text{LASSO/CS}}$. 

\item [(c)] We establish a certain Overlap Gap Property (OGP) on the space of all binary vectors $\beta$ when $n\le ck\log p$ for sufficiently small constant $c$.
By drawing a connection with a similar OGP exhibited by many randomly generated constraint satisfaction problems and statistical physics models, 
we conjecture that OGP is the source of algorithmic
hardness of solving the minimization problem $\min_{\beta}\|Y-X\beta\|_{2}$ in the regime $n<n_{\text{LASSO/CS}}$.

\end{enumerate}

\end{abstract}


\section{Introduction}\label{section:Intro}

In this paper we study the linear regression model $$Y=X\beta^{*}+W$$where $X$ is a data $n\times p$ matrix, $W$
is a $n\times 1$ noise vector, and $\beta^{*}$ is the (unknown) $p\times 1$ vector of regression coefficients. We refer to $n$ as the number of samples and $p$ as the number of features for the model. The goal is to recover $\beta^*$ from having access only to the data matrix $X$ and the noisy linear observations $Y$. Throughout the paper we focus on the case of stochastic error measurement noise where each $W_i$ is an i.i.d. sample from a $N\left(0,\sigma^2\right)$ for some parameter $\sigma>0$.

This work studies the high dimensional regime where $n \ll p$ and $p \rightarrow + \infty$. High-dimensionality is motivated by various statistical applications over the last decade for example in the field of radiology and biomedical imaging (see e.g. \cite{DonohoMRI} and references therein) and in the field of genomics  \cite{BickelGenome}, \cite{JASAgenomics}. Note that this is an, in principle, impossible regime for (exact) inference of $\beta^*$ from $(Y,X)$  ; the underlying linear system, even at the extreme case $\sigma=0$, is underdetermined. For this reason, following a large line of research, we study the linear model under the additional structural assumption that the vector of coefficients is $k$-sparse, that is the support size of $\beta^*$ (i.e. the number of regression coefficients with non-zero value)  equals to some positive integer parameter $k$ which is usually taken much smaller than $p$. Sparsity is a well-established assumption in the statistics literature, with various applications for example in compressed sensing~\cite{candes2005decoding}, \cite{donoho2006compressed} , biomedical imaging \cite{MRImed}, \cite{DonohoMRI}  and sensor networks \cite{CSwireless}, \cite{CSsensor}.

 In this paper we focus on the fundamental statistical task of recovering the support of $\beta^*$   \cite{Cai2012}, \cite{Martin11}, that is inferring from $(Y,X)$ the location of its non-zero coordinates of $\beta^*$. The support recovery problem has attracted a lot of attention in  recent years, because it naturally arises in many contexts including gene selection in genomics \cite{Ho08}, \cite{Hu10}, \cite{Hu09} and radar signal processing \cite{Du17}, \cite{ZH01}, \cite{Chen99}. It is worth mentioning that support recovery is also known in the literature as sparsity pattern recovery task \cite{Galen13}, variable selection (see \cite{Ed2012} and references therein) or model selection \cite{Model93}, \cite{Aos06}.


We investigate the fundamental statistical and algorithmic limits of the high dimensional linear regression setting.  Specifically we are interested in answering the following questions,
\begin{center}
\textit{For which values of $n$ is it information-theoretic possible to infer the support of $\beta^*$? \\
When can this inference task be made in a computationally efficient way?}
\end{center} As it is well-known the study of high dimensional linear regression poses multiple computational and statistical challenges leading to a vast research literature on the topic (see e.g.  \cite{Fan10}, \cite{Book15} and references therein). For this reason, at the goal of establishing tight answers to the above questions, we study the problem under additional assumptions on the data matrix $X$ and the vector $\beta^*$. We expect our result to provide intuition for general setting of the high dimensional linear regression model under the sparsity constraint. First we assume that each row of $X$ is generated as an iid sample from an isotropic $\mathcal{N}\left(0,\Sigma\right),$ where we take $\Sigma=I_{p}$. Note that the Gaussianity of the data rows is, in a standard way, justified from the Central Limit theorem and is very common in the literature \cite{Casto11}, \cite{Lucas17}, \cite{Van13}, \cite{Tony18}, \cite{wainwright2009sharp}, \cite{wainwright2009information},\cite{wang2010information}. Furthermore, the case $\Sigma=I_p$, which can be considered unrealistic from an applied point of view, has been considered broadly in the literature as an idealized assumption which allows broader technical development which can usually be generalized  \cite{Casto11}, \cite{Lucas17}, \cite{wainwright2009sharp}, \cite{wainwright2009information}, \cite{wang2010information}. A famous line of research, also relevant to this results of this paper, where this assumptions has been very helpful is the analysis of the LASSO optimization problem (see Chapter 11 in \cite{Book15} and references therein). Second, we assume that the non-zero regression coefficients $\beta^*_i$ are all equal with each other and (after rescaling) equal to one; that is we assume we assume a binary $\beta^* \in \{0,1\}^p$. Despite our technical motivation for focusing on the binary case, the case of binary and more generally discrete-valued $\beta^*$ has received a large interest in the study of wireless communications and information-theory literature \cite{Hassibi98}, \cite{Hassibi02}, \cite{Brunel99}, \cite{GZ18}, \cite{Thr19}, \cite{Zad19}.  All the earlier results in the literature discussed below are adopted to these assumptions.

A lot of work has been devoted in particular to finding computationally efficient ways for recovering the support of $\beta^*$. 
In the noiseless setting ($W=0$), Donoho and Tanner show in~\cite{donoho2006counting} that the simple linear program: 
$\min ||\beta||_1$ subject to $Y=X\beta$,  
will have with high probability (w.h.p.) $\beta^*$ as its optimal solution if 
$n \geq 2\left(1+\epsilon\right)k\log p$. Here and below $\|\cdot\|_1$ and $\|\cdot\|_2$
denote the standard $\ell_1$ and $\ell_2$ norms, respectively: $\|x\|_1=\sum_{1\le i\le p} |x_i|$ and 
$\|x\|_2=\left(\sum_{1\le i\le p} x_i^2\right)^{1\over 2}$
for every $x\in\R^p$.
In the noisy setting, sufficient and necessary conditions have been found so that the $\ell_{1}$- constrained quadratic programming, also known as 
LASSO: $\min_{\beta \in \mathbb{R}^p} \{ ||Y-X\beta||_2^2+\lambda_p||\beta||_1 \},$ for appropriately chosen $\lambda_p>0$, recovers the correct support
of $\beta^{*}$,
\cite{graph},\cite{wainwright2009sharp},\cite{Zhao}. See also the recent book~\cite{foucart2013mathematical}.
In particular, Wainwright~\cite{wainwright2009sharp} showed that if $X$ is a Gaussian random matrix 
and $W$ is a Gaussian noise vector with variance $\sigma^2$ such that $\frac{\sigma^2}{k} \rightarrow 0$, 
then for every arbitrarily small constant $\epsilon>0$ and for $n>\left(1+\epsilon\right)(2k+\sigma^2)\log p$,
the LASSO based method recovers the support of $\beta^{*}$ exactly w.h.p. At the same time
given any $\epsilon>0$, if $n<\left(1-\epsilon\right)(2k+\sigma^2)\log p$, then the LASSO based method provably fails to recover the support of $\beta^*$ exactly, 
also w.h.p. We note that the impact of $\sigma^2$ on this threshold is asymptotically negligible when $\sigma^2/k\rightarrow 0$. 
It will be convenient for us to keep it though and thus we denote $(2k+\sigma^2)\log p$ by $n_{\text{LASSO/CS}}$. 
At the present
time no tractable (polynomial time) algorithms are known for the support recovery when $n\le n_{\rm LASSO/CS}$.

On the complimentary direction, results regarding the information theoretic limits for the problem of support 
recovery have also been obtained~\cite{donoho2006counting},\cite{wainwright2009information},\cite{wang2010information}, \cite{Galen12}, \cite{Reeves13}, \cite{Scarlett15}.
These papers are devoted to obtaining bounds on the minimum
sampling size $n$ so that the support recovery problem is solvable by any algorithmic methods, regardless of the algorithmic complexity,
including for example the brute force method of exhaustive search. An easy corollary of Theorem~2 in~\cite{wainwright2009information}, which follows from an appropriate use of Fano's inequality,
when applied to our context below involving vectors $\beta^{*}$ with binary values, yields one information-theoretic lower bound. it is shown that if $n<\left(1-\epsilon\right) \sigma^2 \log p$, 
then for every support recovery algorithm, a binary vector $\beta^*$ can be constructed in such a way 
that the underlying algorithm  fails to recover $\beta^*$ exactly, with probability at least $\frac{\epsilon}{2}$. Interestingly, this lower bound value does not depend on the value of $k$. Viewing the problem from the Gaussian channel 
perspective, vector $Y$ can be viewed as a noisy encoding of $\beta^*$ through the code book $X$ and in our case
the sparsity $k$ becomes the strength of this Gaussian channel. Using the tight characterization of the Gaussian communication channel capacity (see e.g. Theorem 10.1.1. in \cite{Cover}) when $k=1$, the information theoretic limit of recovering the unit
bit support of $\beta^*$ is $\log p/\log(1+1/\sigma^2)$ which is $\sigma^2\log p$ asymptotically when $\sigma$ is large.
We let  $n_{\rm inf,1}\triangleq \sigma^2\log p$. Subsequently, it was shown by Wang et al~\cite{wang2010information} using similar ideas that the exact recovery of $\beta^*$ is information theoretically impossible when $n$ smaller than $n^*\triangleq 2k\log p/\log(1+2k/\sigma^2)$,
where $n^*$ is the information theoretic limit of this Gaussian channel for general $k$.
The critical threshold $n^*$ will play a fundamental role in our paper. We note that the result above does not preclude the possibility
of the existence of an algorithm which recovers some portion of the support of $\beta^*$ and this question is one of the motivation for the present work.

The regime $n\in [n_{\rm inf,1}, n_{\rm LASSO/CS}]$ remains largely unexplored from the algorithmic perspective,
 and the present paper is devoted to studying this regime.
Towards this goal, for the regression model  $Y=X\beta^{*}+W$, we consider the corresponding maximum likelihood estimation problem: 
\begin{align*}
\begin{array}{clc} \left(\Phi_2\right) & \min  &n^{-\frac{1}{2}}\|Y-X\beta\|_{2} \\ &\text{s.t.}&\beta \in \{0,1\}^p  \\
&& \|\beta\|_0=k,
\end{array}
\end{align*}
where $\|\beta\|_0$ is the sparsity of $\beta$. Namely, it is the cardinality of the set $\{ i \in [p] \big{|} \beta_i \not = 0 \}$.
We denote by $\phi_2$ its optimal value and by $\beta_{2}$ the unique optimal solution. 
As above, the matrix $X$ is assumed to have i.i.d. standard normal entries,  the elements of the noise
vector $W$ are assumed to have i.i.d. zero mean normal entries with variance $\sigma^{2}$, and the vector $\beta^{*}$ is assumed 
to be binary $k$-sparse; $\|\beta^*\|_0=k$.
In particular, we assume that the sparsity $k$ is known to the optimizer.
The normality of the entries of $X$ is not an essential assumption for our results, since the Central Limit Theorem based estimates
can be easily used instead. We adopt however the normality assumption for simplicity. The normality of the entries of $W$
is more crucial, since our large deviation estimates arising in the application of the conditional second moment depend on this 
assumption. It is entirely possible though that similar results are derivable by applying the large deviations estimates for the 
underlying distribution of entries of $Y$ in the general case.

We address two questions in this paper: (a) What is the  value of the  squared error  estimator 
$\min_{\beta \in \{0,1\}^p,\|\beta\|_0=k}\|Y-X\beta\|_{2}=\|Y-X\beta_2\|_2$; and (b) how well does the optimal vector $\beta_{2}$  approximate  the 
ground truth vector $\beta^{*}$?

Our problem setup, including the assumption that $\beta^{*}$ is binary, has an important theoretical motivation. The gap between the information theoretic and algorithmic
bounds is particularly profound when $\beta^{*}$ is binary. Observe, for example, that in the noiseless setting ($W=0$) even one sample ($n=1$)
is sufficient to recover $\beta^{*}$ by brute force search, whereas $n_{\text{LASSO/CS}}=(2k+\sigma^2)\log p$. The optimization problem $\Phi_{2}$ is naturally hard
algorithmically since it involves a combinatorial constraint $\|\beta\|_{0}=k$. At the same time, it can be cast as an integer programming
optimization problem, and the advances in this area make such problems solvable in many practical settings (see ~\cite{BertsimasRegression} and references therein). Thus
the performance of the optimization problem $\Phi_2$ is still of interest, even though formally, it is not proven to be a tractable algorithmic problem.
Note though that the 
algorithmic hardness of solving the minimization problem subject to the constraint on $\|\beta\|_0$
pertains to the worst case instances and does not apply to settings involving randomly generated data such as $X$ and $Y$.
In fact, one of the goals of this paper is to shed some light on possible sources of the apparent 
algorithmic hardness of this problem in the case when $X$ and $Y$ are indeed random.

\subsection*{Results}
Towards the goals outlined above we obtain several structural results regarding the optimization problem $\Phi_{2}$, its optimal value $\phi_2$, 
and its optimal
solution $\beta_{2}$. We introduce a new method of analysis based on a certain conditional second moment method. The method
will be explained below in high level terms.  Using this method we obtain a tight up to a 
multiplicative constant approximation of the squared error $\phi_2$ w.h.p., as parameters $p,n,k$ diverge to infinity,
and $n\le ck\log p$ for a small constant $c$. Some additional assumptions
on $p,n$ and $k$ are needed and will be introduced in the statements of the results.
The approximation enables us to  reveal interesting structural properties of the 
underlying optimization problem $\Phi_{2}$. In particular,

\begin{enumerate}
\item [(a)] We prove that  $n^{*}=2k\log p/\log (2k/\sigma^{2}+1)$ which was shown in~\cite{wang2010information}
to be the information theoretic lower bound 
for the exact recovery of $\beta^*$
is the phase transition point
with the following ''all-or-nothing'' property.
When $n$ exceeds $n^{*}$ asymptotically,  $(2k)^{-1}\|\beta_{2}-\beta^*\|_0\approx 0$, and when $n$ is asymptotically below $n^{*}$, 
$(2k)^{-1}\|\beta_{2}-\beta^*\|_0\approx 1$. Namely, when $n>n^{*}$ the recovery of $\beta^{*}$ is achievable via solving $\Phi_{2}$, 
whereas below $n^{*}$ the optimization problem $\Phi_{2}$ ``misses'' the ground truth vector $\beta^{*}$ almost entirely.
Since, as discussed above, when $n<n^*$, the recovery of $\beta^*$ is impossible information theoretically, our result implies
that $n^*$  is indeed the information theoretic threshold for this problem.
We recall that  $n^{*}$
exceeds asymptotically the asymptotic one-bit ($k=1$) information theoretic threshold $n_{{\rm inf,1}}=\sigma^{2}\log p$, and is asymptotically below 
the LASSO/Compressive Sensing threshold 
$n_{\text{LASSO/CS}}=(2k+\sigma^2)\log p$. We note also that our result improves upon the result of Wainwright~\cite{wainwright2009information},
who shows that the recovery of $\beta^*$ is possible by the brute force search method, though only when $n$ is of the order $O(k\log p)$.

\item [(b)] We consider an intermediate optimization problem $\min_{\beta}\|Y-X\beta\|_{2}$ when the minimization is restricted to vectors
$\beta$ with $\|\beta-\beta^{*}\|_0=2k \zeta$, for some fixed ratio $\zeta\in [0,1]$. This is done towards deeper understanding of the problem 
$\Phi_{2}$.  We show that the function 
\begin{align*}
\Gamma(\zeta)\triangleq
\left(2\zeta k+\sigma^2\right)^{1\over 2}\exp\left(-\frac{\zeta k \log p}{n}\right),
\end{align*}
is, up to a multiplicative constant, a lower bound on this restricted optimization problem, and in the special case of $\zeta=0$ and $\zeta=1$,
it is also an upper bound, up to a multiplicative constant. Since $\Gamma$ is a log-concave function in $\zeta$,
returning to part (a) above, this implies that  that the squared error of the 
original optimization problem $\Phi_{2}$ is w.h.p. $\Gamma(0)=\sigma$ when $n>n^{*}$, and  is w.h.p.
$\Gamma(1)=\left(2 k+\sigma^2\right)^{1\over 2}\exp\left(-\frac{k \log p}{n}\right)$ when $n<n^{*}$, both up to multiplicative constants.
We further establish that the function $\Gamma$ exhibits phase transition property at all three important 
thresholds $n_{{\rm inf,1}}, n^{*}$ and $n_{\text{LASSO/CS}}$, described pictorially on Figures~\ref{fig:subfigures} in 
the next section.
In particular, we prove that when $n>n_{\text{LASSO/CS}}$,
$\Gamma(\zeta)$ is a strictly increasing function with minimum at $\zeta=0$, and when $n<n_{{\rm inf,1}}$, it is a strictly
decreasing function with minimum at $\zeta=1$. When $n^{*}<n<n_{\text{LASSO/CS}}$, $\Gamma(\zeta)$ is non-monotonic and achieves the
minimum value at $\zeta=0$, and when $n_{{\rm inf,1}}<n<n^{*}$, $\Gamma(\zeta)$ is again non-monotonic and achieves the
minimum value at $\zeta=1$. In the critical case $n=n^{*}$, both $\zeta=0$ and $\zeta=1$ are minimum values of $\gamma$. 

The results above suggest the following, albeit completely intuitive and heuristic picture, which 
is based on assuming that the function $\Gamma$ provides an accurate approximation of the value of $\phi_2$. 
When $n>n_{\rm LASSO/CS}$, a closer overlap with the ground truth vector $\beta^{*}$ allows for lower squared error value 
($\Gamma$ is increasing in $\zeta$). 
In this case the convex relaxation based methods such as LASSO and Compressive Sensing succeed in identifying $\beta^{*}$. We
conjecture that in this case even more straightforward, greedy type algorithms based on one step improvements might be able to 
recover $\beta^{*}$. At this stage, this remains a conjecture.

When $n$ is below $n_{\text{LASSO/CS}}$ but above $n^{*}$, the optimal solution $\beta_{2}$ of $\Phi_{2}$ 
still approximately coincides with $\beta^{*}$, but in this
case there is a proliferation of solutions which, while they 
achieve a sufficiently low squared error value, at the same time have  very little overlap with $\beta^{*}$. Considering
a cost value below the largest value of the function $\Gamma$, we obtain two groups of solutions: those with a ``substantial'' overlap with $\beta^{*}$
and those with a ``small'' even zero overlap with $\beta^{*}$. This motivates looking at the so-called Overlap Gap Property discussed in (c) below.

When $n$ is below $n^{*}$, there are solutions, and in particular the optimal solution $\beta_{2}$, which achieve  better squared error
value than
even the ground truth $\beta^{*}$. This is exhibited by the fact that the minimum value of $\Gamma$ is achieved at $\zeta=1$.
We are dealing here with the case of overfitting. While, information theoretically it is impossible to precisely recover $\beta^*$  in this regime, 
it is not clear whether in this case there exists any algorithm which can recover at least a portion of the support of $\beta^{*}$, 
algorithmic complexity aside.  We leave it as an interesting open question. 

When $n$ is below the ($k=1,$ large $\sigma$) information theoretic lower bound $n_{{\rm inf,1}}$, the overfitting situation is even more profound. 
Moving further away from $\beta^{*}$
allows for better and better squared error values ($\Gamma$ is decreasing in $\zeta$).

\item [(c)] Motivated by the results in the theory of spin glasses and the later results in the context of randomly generated
constraint satisfaction problems, and in light of  the evidence of the Overlap Gap Property (OGP) discussed above,
we consider the solution space geometry of the problem $\Phi_{2}$ as well as the restricted 
problem corresponding to the constraint $\|\beta-\beta^{*}\|_0=2\zeta k$. For many examples of randomly generated constraint 
satisfaction problems such as random K-SAT, proper coloring of a sparse random graph, the problem of finding a largest
independent subset of a sparse random graph, and many others, it has been conjectured and later established rigorously that solutions
achieving near optimality, or  solution satisfying a set of randomly generated constraints, break down into clusters separated
by cost barriers of a substantial size in some appropriate sense,~\cite{AchlioptasCojaOghlanRicciTersenghi},\cite{achlioptas2008algorithmic},
~\cite{montanari2011reconstruction},\cite{coja2011independent},\cite{gamarnik2014limits},\cite{rahman2014local},
\cite{gamarnik2014performance}. As a result, these models indeed exhibit the OGP. For example, independent
sets achieving near optimality in sparse random graph exhibit the OGP in the following sense. The intersection of every two such 
independent sets is either at most some value $\tau_{1}$ or at least some value $\tau_{2}>\tau_{1}$. This and similar properties were used
in~\cite{gamarnik2014limits},\cite{rahman2014local} and \cite{gamarnik2014performance} to establish a fundamental barriers on the power of so-called local algorithms
for finding nearly largest independent sets. The OGP was later established in a setting other than constraint satisfaction problems
on graphs, specifically in the context of finding a densest submatrix of a matrix with i.i.d. Gaussian entries~\cite{gamarnik2016finding}.

The non-monotonicity of the function $\Gamma$ for $n<n_{\text{LASSO/CS}}$ already suggests the presence of the OGP. Note that
for any value $r$ strictly below the maximum value $\max_{\zeta\in (0,1)}\Gamma(\zeta)$ we obtain the existence of two values
$\zeta_{1}<\zeta_{2}$, such  that for every $\zeta$ with $\Gamma(\zeta)\le r$, either $\zeta\le \zeta_{1}$ or $\zeta\ge \zeta_{2}$.
Namely, this property suggests that every binary vector achieving a cost at most $r$ either has the overlap at most $\zeta_{1}k$ with
$\beta^{*}$, or the  overlap at least $\zeta_{2}k$ with $\beta^{*}$. Unfortunately, this is no more than a guess, since $\Gamma(\zeta)$
provides only a lower bound on the optimization cost. Nevertheless, we establish that the OGP provably takes place w.h.p. when 
$C\sigma^{2}\log p\le n\le ck\log p$, for appropriately large constant $C$ and appropriately small constant $c$. Our result
takes advantage of the tight up to a multiplicative error estimates of the squared errors associated with the restricted 
optimization problem $\Phi_2$ with the restricted $\|\beta-\beta^*\|=2k\zeta$, discussed earlier.
It remains an intriguing open question to verify whether the optimization problem $\Phi_{2}$ is indeed algorithmically intractable
in this regime. 
\end{enumerate}

\subsection{On the Gaussian Assumptions on $X,W$.}

The high dimensional linear regression model on which the above results are obtained, is based on the idealized assumptions that $X \in \mathbb{R}^{n \times p}$ has iid rows drawn from a $\mathcal{N}\left(0,I_p\right)$ and $W \in \mathbb{R}^n$ has iid $\mathcal{N}\left(0,\sigma^2\right)$ entries. Naturally, the question is whether our structural results generalize beyond the present setting. 

Our results are expected to generalize much beyond these assumptions. For the case of $X$ and the Gaussian assumption $\mathcal{N}\left(0,I_p\right)$ on the rows of $X$, we expect similar results with respect to both the Gaussianity assumption and to the assumption of independence between the entries of each row. The reason is that the main use of the assumption, comes from estimating probabilities of the form $p_{t,y}:=\mathbb{P} \left(|\frac{X_{i1}+X_{i2}+\ldots +X_{ik}}{\sqrt{k}}-y| \leq t\right)$, over the randomness of $X$, for fixed values of $y \in \mathbb{R}, t>0,$ and an arbitrary row index $i \in \{1,2,\ldots,n\}$. Under our assumption of $X$, $\frac{X_{i1}+X_{i2}+\ldots +X_{ik}}{\sqrt{k}} $ follows a $\mathcal{N}\left(0,1\right)$ itself, allowing us for a standard estimation of $p_{t,y}$ by approximating the cumulative density function of the standard normal distribution. Note that under Gaussianity but with non-identity covariance matrix $\Sigma$, similar estimates can be easily established as $\frac{X_{i1}+X_{i2}+\ldots +X_{ik}}{\sqrt{k}} $ now follows $\mathcal{N}\left(0,\frac{1}{k}v_k^t \Sigma v_k\right)$ for $v_k \in \{0,1\}^p$ equal to 1 in the first $k$ coordinates and $0$ otherwise. Furthermore, if $X$ has non-Gaussian rows but weakly dependent entries per-row, since we consider the regime where $k \rightarrow + \infty$, similar estimates can be established using standard variants of Central Limit Theorem.

For the case of $W$ we expect our results to be more tied to assumptions following a similar to Gaussian behavior. Despite that, a generalization to any iid subgaussian noise distribution is expected. The reason is that the main probabilistic property used for each noise entry $W_i, i \in \{1,2,\ldots,n\}$  is that squared value $W_i^2$ has a finite moment generating function at some positive $\theta>0$, i.e. $\mathbb{E}\left[e^{\theta W_i^2}\right]< \infty$.

Finally, we would like to mention that one of the main results in this work, the presence of the Overlap Gap Property is \textit{a negative result}, as it presents a conjectured algorithmic barrier for the recovery problem. For such a result, establishing it in a specialized setting admittedly suffices to make a general hardness claim; the problem can only be at least as hard in a more general case.

\subsection{Methods} \label{Meth}
In order to obtain estimates of the squared error for the problem $\Phi_{2}$ we use a first and second moment method, which
we now describe in high level terms.
We begin with the following model which we call Pure Noise model, in which it is  assumed that $\beta^{*}=0$ and thus $Y$ is simply a vector of i.i.d.
zero mean  Gaussian random variables with variance $\sigma^{2}$. In this model the interest is on estimating the quantity $\min_{\beta} \|Y-X\beta\|_2$ where $\beta$ binary and $k$-sparse.

For every value $t>0$ we consider the counting random variable $Z_{t}$ equal to the
number of $k$-sparse binary $\beta$ such that $\|Y-X\beta\|_{\infty}\le t$, where $\|x\|_{\infty}=\max_{i}|x_{i}|$ is the infinity norm.
It turns out that while $\|\cdot\|_\infty$ norm estimates for the difference $Y-X\beta$ are easier to deal with, they provide sufficiently accurate information for the $\|\cdot\|_2$ norm of $Y-X\beta$ we originally care about; hence our focus on the former.
We compute the expected value of $Z_{t}$ and find a critical value $t^{*}$ such that for $t<t^*$ this expectation converges to zero.
 Combining with Markov inequality we have $\mathbb{P}\left( Z_t \geq 1 \right) \leq \mathbb{E}\left[Z_t\right] \rightarrow 0$ for all $t<t^*$ or $Z_t=0$ w.h.p. for all $t<t^*$. In particular, $t^*$ serves as a lower bound on $\min_{\beta} \|Y-X\beta\|_{\infty}$ where $\beta$ binary and $k$-sparse. This technique of finding the lower bound $t^*$ is known as the first moment method.

 We then consider the
second moment method for $Z_{t}$. In the naive form the second moment method would succeed if for $t>t^*$, $\E{Z_t^2}$ was close to $(\E{Z_t})^2$,
as in this case the Paley-Zigmund inequality would give $\pr(Z_t\ge 1)\ge \E{Z_t}^2/\E{Z_t^2} \rightarrow 1$ and therefore $t^*$ is also an upper bound for $\min_{\beta} \|Y-X\beta\|_{\infty}$. Unfortunately, the naive second moment estimation fails as it can be easily checked that for $t$ close to $t^*$, $\E{Z_t}^2/\E{Z_t^2} \rightarrow 0.$ 

We consider an appropriate conditioning to make the second moment method work. We notice that the fluctuations of $Y$ alone are enough to create a substantial gap between the two moments of $Z_t$. For this reason, we consider the conditional first and second moment of $Z_{t}$, where the conditioning is done on $Y$. The conditional
second moment involves computing large deviations estimates on a sequence of coupled bi-variate normal random variables.
A fairly detailed analysis of this large deviation estimate is obtained to arrive at the estimation of the ratio
$\E{Z_{t}|Y}^{2}/\E{Z_{t}^{2}|Y}$. We then employ the conditional version of the Paley-Zigmund inequality 
$\pr(Z_{t}\ge 1|Y)\ge \E{Z_{t}|Y}^{2}/\E{Z_{t}^{2}|Y}$ to obtain the lower bound $\pr(Z_{t}\ge 1) \geq \E {\E{Z_{t}|Y}^{2}/\E{Z_{t}^{2}|Y}} $ where expectation is taken over $Y$. Using the estimation on the lower bound we show that $t^*$, the first moment estimate, serves also as an upper bound for $\min_{\beta} \|Y-X\beta\|_{\infty}$, up to certain multiplicative constant factors.

To explain the success of the conditional technique notice that by tower property and Cauchy-Schwarz inequality
\begin{align*}
\E {\frac{\E{Z_{t}|Y}^{2}}{\E{Z_{t}^{2}|Y}}}\E{Z_{t}^{2}}  & =\E {\frac{\E{Z_{t}|Y}^{2}}{\E{Z_{t}^{2}|Y}}} \E {\E{Z_{t}^{2}|Y}} \geq\E{\E{Z_{t}|Y}  }^2=\E{Z_{t}}^2
\end{align*} 
which equivalently gives $$\E {\frac{\E{Z_{t}|Y}^{2}}{\E{Z_{t}^{2}|Y}}} \geq \frac{\E{Z_{t}}^{2}}{\E{Z_{t}^{2}}}$$certifying that the lower bound on $\pr(Z_{t}\ge 1)$ obtained through conditioning dominates the one from the direct application of Paley-Zigmund inequality.

Next we use the estimates from the Pure Noise model, for the original model involving the binary $\beta^{*}$ with $\|\beta^{*}\|_{0}=k$. We consider
the $2^k=2^{|\mathrm{Support}\left(\beta^*\right)|}$ restricted versions of the original problem of interest $(\Phi_2)$ in which the  optimization is conducted  over the space of binary $k$-sparse vectors $\beta$ where
the support of $\beta$ is constrained to intersect the support of $\beta^{*}$ in a specific way. In this form the problem can be reduced to the Pure Noise problem in a relative straightforward way (see Section \ref{section:Proof of MainResult1} for the exact reduction). This reduction alongside with the first and second moment estimates for the Pure Noise model described above allows us to approximate the optimal value of the restricted problems, and in particular of $(\Phi_2)$ as well. 

Note that conditional first and second moment methods have been used extensively in the literature (e.g. see \cite{Jiaming17}, \cite{Bollobas18} for two recent examples) but it is a common understanding that the appropriate choice of conditioning does not follow a universal reasoning. To the best of our knowledge this is the first time the conditional second moment method is used in the form described above and this might be of independent interest.

\paragraph{Organization}
The remainder of the paper is organized as follows. The description of the model, assumptions and the main results are found in the
next section. Section~\ref{section:BetaZero} is devoted to the analysis of the Pure Noise model which is also defined in this section.
Sections~\ref{section:Proof of MainResult1},~\ref{section:Problem_Phi2} and~\ref{section:OGP} are devoted to proofs of our main results.
We conclude in the last section with some open questions and directions for future research.

\section{Model and the Main Results}\label{section:MainResults}

We remind our model for convenience.
Let $X \in \mathbb{R}^{n \times p}$ be an $n \times p$ matrix with i.i.d. standard normal entries, and $W \in \mathbb{R}^p$ be a vector with i.i.d. $N\left(0,\sigma^2\right)$ entries. We also assume that $\beta^*$ is a $p \times 1$ binary vector with exactly $k$ entries equal to unity ($\beta^*$ is binary and $k$-sparse). For every binary vector $\beta\in \{0,1\}^{p}$ we let $\text{Support}(\beta):=\{i:\beta_{i}=0\}$. Namely,
$\beta_{i}=1$ if $i\in\text{Support}(\beta)$ and $\beta_{i}=0$ otherwise.
We observe $n$ noisy measurements $Y \in \mathbb{R}^n$ of the vector  $\beta^* \in \mathbb{R}^p$ given by 
\begin{align*}
Y=X\beta^*+W \in \mathbb{R}^n.
\end{align*}
Throughout the paper we are interested in the high dimensional regime where $p$ exceeds $n$ and both diverge to infinity.
Various assumptions on $k,n,p$ are required for technical reasons and some of the assumptions vary from theorem to theorem.
But almost everywhere we will be assuming that $n$ is at least of the order $k\log k$ and at most of the order $k\log p$. The results usually hold in the ``with high probability" (w.h.p.)
sense as $k,n$ and $p$ diverge to infinity, but for concreteness we usually explicitly say that $k$ diverges to infinity. This automatically 
implies the same  for $p$, since $p\ge k$, and for $n$ since it is assumed to be at least of the order  $O(k\log k)$.

In order to recover $\beta^*$, we  consider the following constrained optimization problem
\begin{align*}\begin{array}{clc} \left(\Phi_2\right) & \min  &n^{-\frac{1}{2}}||Y-X\beta||_{2} \\ &\text{s.t.}&\beta \in \{0,1\}^p  \\
&& ||\beta||_0=k.
\end{array}
\end{align*}
We denote by $\phi_{2}=\phi_{2}\left(X,W\right)$ its optimal value and by $\beta_2$ its (unique) optimal solution. Note that the solution
is indeed unique due to discreteness of $\beta$ and continuity of the distribution of $X$ and $Y$.
Namely, the optimization problem $\Phi_2$ chooses the $k$-sparse binary vector $\beta$ such that $X \beta$ 
is as close to $Y$  as possible, with respect to the $\mathbb{L}_2$ norm. 
Also note that since our noise vector, $W$, consists of i.i.d.  Gaussian entries, $\beta_2$ is also the Maximum Likelihood Estimator of $\beta^*$.

Consider now the following restricted version of the problem $\Phi_{2}$:

\begin{align*}\begin{array}{clc} \left(\Phi_2\left(\ell\right)\right) & \min  &n^{-\frac{1}{2}}||Y-X\beta||_{2} \\ &\text{s.t.}&\beta \in \{0,1\}^p  \\
&& ||\beta||_0=k, ||\beta-\beta^*||_0=2l ,
\end{array}
\end{align*}
where $\ell=0,1,2,..,k$. For every fixed $\ell$, denote by $\phi_2\left(\ell\right)$ the optimal value of $\Phi_2\left(\ell\right)$. 
$\Phi_2\left(\ell\right)$ is the problem of finding the $k$-sparse binary vector $\beta$, 
such that $X \beta$  is as close to $Y$ as possible with respect to the $\ell_2$ norm, 
but also subject to the restriction that the cardinality of the intersection of the supports of $\beta$ and $\beta^{*}$ 
is  exactly $k-\ell$. Then $\phi_{2}=\min_{\ell}\phi_{2}\left(\ell\right)$.  

Consider the extreme cases $\ell=0$ and $\ell=k$, we see that for $\ell=0$, the region that defines 
$\Phi_2\left(0\right)$ consists only of the vector $\beta^*$. On the other hand, 
for $\ell=k$, the region that defines $\Phi_2\left(k\right)$  consists of all $k$-sparse binary vectors $\beta$, whose common support with $\beta^*$
is empty.

We are now ready to state our first main result.
\begin{theorem}\label{theorem:MainResult1}
Suppose $k\log k\le Cn$ for some constant $C$ for all $k,n$. Then
\begin{enumerate}
\item [(a)] W.h.p. as $k$ increases
\begin{align}\label{eq:LowerBound}
\phi_2\left(\ell\right) \ge e^{-{3\over 2}} \sqrt{2\ell+\sigma^2}\exp \left(-\frac{\ell \log p}{n}\right),
\end{align}
for all $0\leq \ell \leq k$.

\item [(b)] Suppose further that $\sigma^2\le 2k$. Then for every sufficiently large constant $D_0$ 
if $n\le k\log p/(3\log D_0)$, then w.h.p. as $k$ increases, the cardinality of the set
\begin{align}\label{eq:UpperBound}
{\Big \{} \beta \in \{0,1\}^p: \|\beta\|_0=k, \|\beta-\beta^*\|_0=2k,~ n^{-\frac{1}{2}}\|Y-X\beta\|_2\le D_0\sqrt{2k+\sigma^2}\exp\left(-{k\log p\over n}\right){\Big\}}
\end{align}
is at least $D_0^{n\over 3}$.
In particular, this set  is exponentially large in $n$.
\end{enumerate}
\end{theorem}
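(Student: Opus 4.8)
# Proof Proposal for Theorem~\ref{theorem:MainResult1}

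\textbf{Overall strategy.} The two parts are essentially decoupled. Part~(a) is a first-moment (union bound) statement, while part~(b) is the hard direction and requires the conditional second moment method advertised in the introduction. The plan is to reduce both parts to the Pure Noise model analyzed in Section~\ref{section:BetaZero}. Fix $\ell$ and condition on the support $S^*=\text{Support}(\beta^*)$. For any competitor $\beta$ with $\|\beta\|_0=k$ and $\|\beta-\beta^*\|_0=2\ell$, write $\beta=\beta_{\cap}+\beta_{\text{new}}$ where $\beta_{\cap}$ is supported on $S^*\cap\text{Support}(\beta)$ (size $k-\ell$) and $\beta_{\text{new}}$ on the remaining $\ell$ coordinates outside $S^*$. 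Then $Y-X\beta = (X\beta^* - X\beta_{\cap}) + W - X\beta_{\text{new}} = X\beta^*_{\setminus} + W - X\beta_{\text{new}}$, where $\beta^*_{\setminus}$ is $\beta^*$ restricted to the $\ell$ coordinates of $S^*$ not covered by $\beta$. Conditioning on the relevant block of $X$ and on $W$, the vector $\widetilde Y := X\beta^*_{\setminus}+W$ has i.i.d.\ $N(0, 2\ell+\sigma^2)$ coordinates, and minimizing $\|Y-X\beta\|_2$ over all valid $\beta$ with this fixed intersection pattern becomes exactly a Pure Noise problem with effective noise variance $2\ell+\sigma^2$ and with $\ell$ "free" coordinates to choose from $p-k$ columns --- i.e.\ the role of $p$ in the Pure Noise bound is played by (a polynomial in) $p$, and the role of the sparsity by $\ell$. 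This is the key translation; I would state it as a lemma.

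\textbf{Part (a).} For the lower bound I would run the first-moment computation directly in this reduced picture, using the $\|\cdot\|_\infty$ proxy as in the Methods section: for a threshold $t$, count $\beta$ with $\|\widetilde Y - X\beta_{\text{new}}\|_\infty\le t$. Each coordinate of $\widetilde Y - X\beta_{\text{new}}$ is $N(0,2\ell+\sigma^2+\ell)$-ish (more precisely the relevant conditional Gaussian), so $\pr(\|\cdot\|_\infty\le t)\approx (t/\sqrt{2\ell+\sigma^2})^n$ up to constants when $t$ is small, and the number of competitors is $\binom{p-k}{\ell}\binom{k}{\ell}\le p^{2\ell}$-ish; actually the clean count here is $\binom{p-(k-\ell)}{\ell}$ giving $\approx p^\ell$. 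Setting the expected count to vanish forces $t \gtrsim \sqrt{2\ell+\sigma^2}\exp(-\ell\log p/n)$. One then converts the $\|\cdot\|_\infty$ bound to an $\|\cdot\|_2$ bound (they differ by controllable constants, which is where the $e^{-3/2}$ slack comes from), takes a union bound over the $k+1$ values of $\ell$, and absorbs the $k\log k \le Cn$ hypothesis to kill lower-order terms. The constant $e^{-3/2}$ is not tight; I would just track constants loosely enough to get it.

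\textbf{Part (b).} This is the main obstacle and is where the conditional second moment does its work. I would apply the Pure Noise upper-bound machinery from Section~\ref{section:BetaZero} with effective variance $\sigma_{\text{eff}}^2 = 2k+\sigma^2$ (the case $\ell=k$, so no intersection with $\beta^*$, $\widetilde Y = X\beta^* + W$ has i.i.d.\ $N(0,2k+\sigma^2)$ entries) and with $k$ free coordinates chosen among $p-k$ columns. The target threshold $t = D_0\sqrt{2k+\sigma^2}\exp(-k\log p/n)$ is a constant factor $D_0$ above the first-moment threshold from part~(a) with $\ell=k$. The Pure Noise analysis should give: conditionally on $\widetilde Y$, $\E{Z_t\mid \widetilde Y}$ is large (exponential in $n$, of order $D_0^{\Theta(n)}$ once $t$ is inflated by $D_0$), and $\E{Z_t^2\mid \widetilde Y}/(\E{Z_t\mid\widetilde Y})^2 = 1+o(1)$ for $D_0$ large and $n\le k\log p/(3\log D_0)$ --- the correlation/overlap integral being controlled precisely because $n$ is small relative to $k\log p$, which keeps the dominant contribution to the second moment at overlap zero. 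Paley--Zygmund conditionally on $\widetilde Y$ then gives $\pr(Z_t \ge \tfrac12\E{Z_t\mid\widetilde Y}\mid \widetilde Y)$ bounded below by a constant, and since $\widetilde Y$ concentrates (its coordinates are $n$ i.i.d.\ Gaussians, so $\|\widetilde Y\|$ and the relevant functionals concentrate), unconditioning yields that with positive probability --- and then, by a standard boosting/concentration argument on the count $Z_t$ itself, w.h.p.\ --- there are at least $D_0^{n/3}$ such $\beta$. The book-keeping to pass from "$Z_t$ is large with constant probability" to "w.h.p." and to nail the exponent $n/3$ (matching the $3\log D_0$ in the hypothesis on $n$) is the delicate part; I expect it to follow from a second application of the second-moment bound (variance of $Z_t$ small relative to its mean squared implies $Z_t\ge \tfrac12\E{Z_t}$ w.h.p.), combined with choosing constants so that $\E{Z_t}\ge D_0^{n/3}$ comfortably. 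The main obstacle throughout is verifying that the conditional second moment does not blow up, i.e.\ the overlap large-deviation estimate --- but this is exactly the content of the Pure Noise section, which I am permitted to invoke.
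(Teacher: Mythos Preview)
Your overall strategy matches the paper's exactly: reduce $\Phi_2(\ell)$ to a Pure Noise instance by fixing the intersection with $\text{Support}(\beta^*)$, then invoke the first-moment bound of Theorem~\ref{theorem:PureNoiseLowerBound} for~(a) and its conditional second-moment half for~(b) with $S=\emptyset$. Two points need correction.

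\textbf{Part (a): the $\ell_\infty$ route does not convert.} From ``no $\beta$ has $\|Y-X\beta\|_\infty\le t$'' you only get $\|Y-X\beta\|_2\ge\|Y-X\beta\|_\infty>t$, hence $n^{-1/2}\|Y-X\beta\|_2>t/\sqrt{n}$, which is off by a factor $\sqrt{n}$ from the claim. The norms do \emph{not} ``differ by controllable constants'' in this direction. The paper (despite what the Methods overview suggests) runs the first moment directly on the $\ell_2$ count $Z_t=|\{\beta:n^{-1/2}\|Y-X\beta\|_2<t\}|$ and uses the chi-squared small-ball estimate $\pr\bigl(\sum_{i=1}^n Z_i^2\le n t_0^2\bigr)\le\exp\bigl(\tfrac{n}{2}(1-t_0^2+2\log t_0)\bigr)$; the choice $t_0=e^{-3/2}\binom{p}{k}^{-1/n}$ is exactly where the constant $e^{-3/2}$ comes from. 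The paper also gets the uniformity over $\ell$ by fixing the intersection \emph{set} $S$ (not just its size), applying the $1-e^{-n}$ bound of Theorem~\ref{theorem:PureNoiseLowerBound} per $S$, and union-bounding over the $2^k$ subsets; $k\log k\le Cn$ then kills $2^k e^{-n}$.

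\textbf{Part (b): two arithmetic/structural slips.} First, $\widetilde Y=X\beta^*+W$ has coordinate variance $k+\sigma^2$, not $2k+\sigma^2$; the $2k+\sigma^2$ appears only as $\sqrt{k+\sigma_{\rm eff}^2}$ in the Pure Noise bound with $\sigma_{\rm eff}^2=k+\sigma^2$. This is also why the hypothesis $\sigma^2\le 2k$ is needed: it is equivalent to the Pure Noise condition $k\le\sigma_{\rm eff}^2\le 3k$. Second, no separate ``boosting'' from constant to high probability is needed. Proposition~\ref{prop:Proposition1} bounds $\E{\min(1,\Upsilon-1)}$ by $k^{-c}$, so conditional Chebyshev gives $Z_{\tau\sqrt{k},\infty}\ge\tfrac12\,\E{Z_{\tau\sqrt{k},\infty}\mid Y}$ with probability $1-O(k^{-c})$ directly, and Lemma~\ref{eq:Zt-zeta} shows this conditional mean is already $\ge D^{n/2}$ w.h.p.\ in $Y$.
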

The proof of this theorem is found in Section~\ref{section:Proof of MainResult1} and relies on the analysis
for the Pure Noise model developed in the next section.
The part (a) of the theorem above gives a lower bound on the optimal value of the  optimization problem $\Phi_{2}\left(\ell\right)$
for all $\ell=0,1,\ldots,k$ w.h.p. 
For this part, as stated, we only need that $k\log k\le Cn$ and $k$ diverging to infinity.
When $\ell=0$ the value of $\phi(\ell)$ is just $n^{-{1\over 2}}\sqrt{\sum_{1\le i\le n}W_i^2}$ which converges to $\sigma$ by the 
Law of Large Numbers. Note that $\sigma$ is also the value of $\sqrt{2l+\sigma^2}\exp \left(-\frac{\ell \log p}{n}\right)$ when
$\ell=0$. Thus the lower bound value in part (a) is tight up to a multiplicative constant when $\ell=0$. Importantly, as
the part (b) of the theorem
shows, the lower bound value is also tight up to a multiplicative constant when $\ell=k$, as in this case not only vectors $\beta$
achieving this bound exist, but the number of such vectors is exponentially large in $n$ w.h.p. as $k$ increases.
This result will be instrumental
for our ``all-or-nothing'' Theorem~\ref{theorem:sharptheorem} below.

Now we will discuss some implications of Theorem ~\ref{theorem:MainResult1}. The expression 
$\left(2\ell+\sigma^2\right)^{1\over 2}\exp\left(-\frac{\ell \log p}{n}\right)$,
appearing in the theorem above,
 motivates the following notation.
Let the function $\Gamma:[0,1]\rightarrow \R_+$ be defined by
\begin{align}\label{eq:Gamma_function}
\Gamma\left(\zeta\right)=
\left(2\zeta k+\sigma^2\right)^{1\over 2}\exp\left(-\frac{\zeta k \log p}{n}\right).
\end{align}
Then the lower bound (\ref{eq:LowerBound}) can be rewritten as 
\begin{align*}
\phi_2\left(\ell\right) \ge e^{-{3\over 2}} \Gamma(\ell/k).
\end{align*}
A similar inequality applies to (\ref{eq:UpperBound}).

Let us make some immediate observations regarding the function $\Gamma$. It
is a strictly log-concave function in $\zeta \in [0,1]$:
\begin{align*}
\log \Gamma\left(\zeta\right)=\frac{1}{2} \log \left(2\zeta k+\sigma^2\right)-\zeta \frac{k \log p}{n}.
\end{align*}
and hence 
\begin{align*}
\min_{0\le \zeta\le 1}\Gamma\left(\zeta\right)=\min\left(\Gamma\left(0\right),\Gamma\left(1\right)\right)=\min\left(\sigma,\sqrt{2k+\sigma^2} \exp\left(-\frac{k \log p}{n}\right)\right).
\end{align*}
Now combining this observation with the results of Theorem~\ref{theorem:MainResult1} 
we obtain as a corollary a tight up to a multiplicative constant approximation
of the value $\phi_2$ of the optimization problem $\Phi_2$.

\begin{theorem}\label{theorem:Value-phi2}
Under the assumptions of parts (a) and (b) of Theorem\ref{theorem:MainResult1}, for every $\epsilon>0$ and for every sufficiently large constant $D_0$ 
if $n\le k\log p/(3\log D_0)$, then w.h.p. as $k$ increases, 
\begin{align*}
e^{-{3\over 2}}\min\left(\sigma,\sqrt{2k+\sigma^2}\exp\left(-{k\log p\over n}\right)\right)
\le
\phi_2\
\le \min\left((1+\epsilon)\sigma,D_0\sqrt{2k+\sigma^2}\exp\left(-{k\log p\over n}\right)\right).
\end{align*}
\end{theorem}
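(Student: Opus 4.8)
The plan is to deduce the theorem directly from Theorem~\ref{theorem:MainResult1}, using the identity $\phi_2=\min_{0\le \ell\le k}\phi_2(\ell)$ together with the log-concavity of $\Gamma$ recorded just before the statement.

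\emph{Lower bound.} First I would apply part~(a) of Theorem~\ref{theorem:MainResult1}. Its conclusion holds, on a single high-probability event, simultaneously for all $\ell\in\{0,1,\dots,k\}$, so on that event $\phi_2(\ell)\ge e^{-3/2}\Gamma(\ell/k)$ for every such $\ell$. Minimizing over $\ell$, and then enlarging the index set from the grid $\{\ell/k\}$ to the whole interval $[0,1]$, gives $\phi_2\ge e^{-3/2}\min_{0\le\ell\le k}\Gamma(\ell/k)\ge e^{-3/2}\min_{0\le\zeta\le 1}\Gamma(\zeta)=e^{-3/2}\min(\sigma,\sqrt{2k+\sigma^2}\exp(-k\log p/n))$, where the last equality is precisely the log-concavity observation stated above. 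This is the claimed lower bound.

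\emph{Upper bound.} Here I would bound $\phi_2$ by the two extreme restricted values $\phi_2(0)$ and $\phi_2(k)$. For $\ell=0$ the feasible region of $\Phi_2(0)$ is the single point $\beta^*$, so $\phi_2\le\phi_2(0)=n^{-1/2}\|Y-X\beta^*\|_2=n^{-1/2}\|W\|_2$; since the coordinates of $W$ are i.i.d.\ $N(0,\sigma^2)$ and $n\to\infty$ (because $n\ge k\log k/C$ with $k\to\infty$), the weak law of large numbers gives $n^{-1}\|W\|_2^2\to\sigma^2$ in probability, hence $\phi_2\le(1+\epsilon)\sigma$ w.h.p. For the other term I would invoke part~(b) of Theorem~\ref{theorem:MainResult1}: its hypotheses ($\sigma^2\le 2k$ and $n\le k\log p/(3\log D_0)$) are exactly the extra assumptions of the present statement, and it asserts that w.h.p.\ the set in~(\ref{eq:UpperBound}) is non-empty. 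Any $\beta$ from that set is feasible for $\Phi_2(k)$ and satisfies $n^{-1/2}\|Y-X\beta\|_2\le n^{-1/2}D_0\sqrt{2k+\sigma^2}\exp(-k\log p/n)\le D_0\sqrt{2k+\sigma^2}\exp(-k\log p/n)$ since $n\ge 1$, so w.h.p.\ $\phi_2\le\phi_2(k)\le D_0\sqrt{2k+\sigma^2}\exp(-k\log p/n)$. Intersecting the three high-probability events (the uniform event of part~(a), the event of part~(b), and the law-of-large-numbers event for $\|W\|_2$) via a union bound and taking the minimum of the two upper bounds completes the argument.

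\emph{Where the difficulty lies.} All the substance sits in Theorem~\ref{theorem:MainResult1} — in particular the conditional second moment analysis behind part~(b) — so as a corollary this statement is essentially bookkeeping. The only points that need a little care are: reconciling the $n^{-1/2}$ normalization in the definition of $\phi_2(\ell)$ with the unnormalized norm bound appearing in~(\ref{eq:UpperBound}), which only loosens the conclusion since $n\ge 1$; checking that the vectors produced by part~(b) are genuinely $k$-sparse, so that they are admissible competitors in $\Phi_2(k)$; and being careful that the finitely many high-probability events are combined so that all conclusions hold on a common event whose probability tends to $1$.
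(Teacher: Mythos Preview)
Your argument is correct and follows exactly the route the paper takes: the lower bound comes from part~(a) of Theorem~\ref{theorem:MainResult1} combined with the log-concavity of $\Gamma$, and the upper bound from $\phi_2\le\min(\phi_2(0),\phi_2(k))$ together with the law of large numbers and part~(b). Your extra care about the $n^{-1/2}$ normalization in~(\ref{eq:UpperBound}) and about the $k$-sparsity of the vectors it produces is warranted---those are indeed minor typos/omissions in the statement of Theorem~\ref{theorem:MainResult1}(b)---and you resolve them correctly.
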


\begin{proof}
By Theorem~\ref{theorem:MainResult1} we have that $\phi_2$ is at least 
\begin{align*}
e^{-{3\over 2}}\min_\zeta\Gamma(\zeta)=e^{-{3\over 2}}\min\left(\Gamma(0),\Gamma(1)\right).
\end{align*}
This establishes the lower bound. For the upper bound we have $\phi_2\le \min(\phi_2(0),\phi_2(k))$. By the Law of Large Numbers,
$\phi_2(0)$ is at most $(1+\epsilon)\sigma$ w.h.p. as $k$ (and therefore $n$) increases. The second part of Theorem\ref{theorem:MainResult1}
gives provides the necessary bound on $\phi_2(k)$.
\end{proof}

As in the introduction, letting $n^*=\frac{2k \log p}{\log \left( \frac{2k}{\sigma^2}+1\right)}$, 
we conclude that $\min_{\zeta}\Gamma\left(\zeta\right)=\Gamma\left(1\right)$ when 
$n<n^{*}$ and $=\Gamma\left(0\right)$ when $n>n^{*}$, with the critical case $n=n^{*}$ (ignoring the integrality of $n^{*}$), giving
$\Gamma\left(0\right)=\Gamma\left(1\right)$. This observation suggests the following \textbf{``all-or-nothing''} type behavior of the problem $\Phi_{2}$,
if $\Gamma$ was an accurate estimate of the value of the optimization problem $\Phi_2$.
When $n>n^{*}$ the solution $\beta_{2}$ of the minimization problem $\Phi_{2}$ is expected to coincide with the ground truth
$\beta^{*}$ since in this case $\zeta=0$, which corresponds to $\ell=0$, minimizes $\Gamma\left(\zeta\right)$. On the other hand, when
$n<n^{*}$, the solution $\beta_{2}$ of the minimization problem $\Phi_{2}$ is not even expected to have any common support with the 
ground truth $\beta^{*}$, as in this case $\zeta=1$, which corresponds to $\ell=k$, minimizes $\Gamma\left(\zeta\right)$. Of course, this is nothing
more than just a suggestion, since by Theorem~\ref{theorem:MainResult1},  $\Gamma\left(\zeta\right)$ 
only provides a lower and upper bounds on the optimization problem $\Phi_{2}$, which  tight only up to a multiplicative constant.
Nevertheless, we can turn this observation into a theorem, which is our second main result.

\begin{theorem}\label{theorem:sharptheorem} 
Let $\epsilon>0$ be arbitrary. 
Suppose $\max \{k,\frac{2k}{\sigma^2}+1\} \leq \exp \left(  \sqrt{ C \log p } \right)$, for some $C>0$ for all $k$ and $n$. Suppose furthermore that $k\rightarrow \infty$ and $\sigma^2/k\rightarrow 0$
as $k\rightarrow\infty$. 
If $n\ge \left(1+\epsilon\right) n^*$, then w.h.p. as $k$ increases
\begin{align*}
\frac{1}{2k}\|\beta_{2}-\beta^{*}\|_{0} \rightarrow 0.
\end{align*}
On the other hand if
$\frac{1}{C} k \log k \le n\le \left(1-\epsilon\right)n^*$,
then w.h.p. as $k$ increases
\begin{align*}
\frac{1}{2k}\|\beta_{2}-\beta^{*}\|_{0}  \rightarrow 1.
\end{align*}
\end{theorem}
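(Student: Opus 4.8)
The plan is to derive the all-or-nothing dichotomy by combining the per-level lower bound of Theorem~\ref{theorem:MainResult1}(a) with an explicit near-optimal competitor, and then exploiting that the log-concave function $\Gamma$ separates its two endpoints $\zeta=0,1$ from every other $\zeta$ by a multiplicative factor that diverges. Write $\ell^{\star}:=k-|\mathrm{Support}(\beta_2)\cap\mathrm{Support}(\beta^{*})|$; since $\beta_2$ and $\beta^{*}$ are both $k$-sparse, $\tfrac{1}{2k}\|\beta_2-\beta^{*}\|_0=\ell^{\star}/k$, and $\beta_2$ is feasible for $\Phi_2(\ell^{\star})$, so by global optimality $\phi_2=\phi_2(\ell^{\star})=\min_{0\le\ell\le k}\phi_2(\ell)$. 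The first step is to check that the present hypotheses imply those of Theorem~\ref{theorem:MainResult1}. Write $R:=2k/\sigma^{2}+1$; the assumption $R\le e^{\sqrt{C\log p}}$ gives $\log R\le\sqrt{C\log p}$, hence $n^{*}=2k\log p/\log R\ge(2/\sqrt{C})\,k\sqrt{\log p}$, and since also $\log k\le\sqrt{C\log p}$ we get $\sqrt{\log p}\ge\log k/\sqrt{C}$, so $n\ge\mathrm{const}\cdot n^{*}\ge\mathrm{const}'\cdot k\log k$ in both regimes and part (a) applies. Moreover $\sigma^{2}/k\to0$ forces $R\to\infty$ and makes $\sigma^{2}\le 2k$ eventually, while in the second regime $n\le(1-\epsilon)n^{*}$ together with $\log R\to\infty$ gives $n\le k\log p/(3\log D_0)$ eventually, so part (b) applies there. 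Finally note $\Gamma(0)=\sigma$ and $\Gamma(1)=\sqrt{2k+\sigma^{2}}\,e^{-k\log p/n}=\sigma\sqrt{R}\,e^{-k\log p/n}$, so $n\gtrless n^{*}$ is exactly $\Gamma(1)\lessgtr\Gamma(0)$.

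\textbf{Regime $n\ge(1+\epsilon)n^{*}$.} By the Law of Large Numbers, $\phi_2\le\phi_2(0)=n^{-1/2}\|W\|_2\to\sigma=\Gamma(0)$, so w.h.p.\ $\phi_2\le(1+o(1))\Gamma(0)$, whereas Theorem~\ref{theorem:MainResult1}(a) gives $\phi_2\ge e^{-3/2}\Gamma(\ell^{\star}/k)$; hence $\Gamma(\ell^{\star}/k)/\Gamma(0)\le e^{3/2}(1+o(1))$ w.h.p. On the other hand, for $\zeta\in(0,1]$, using $k\log p/n\le\tfrac{1}{2(1+\epsilon)}\log R$,
\begin{align*}
\frac{\Gamma(\zeta)}{\Gamma(0)}=\sqrt{1+\zeta(R-1)}\;\exp\!\left(-\frac{\zeta k\log p}{n}\right)\ \ge\ \sqrt{\zeta R}\cdot R^{-\zeta/(2(1+\epsilon))}=\sqrt{\zeta}\ R^{1/2-\zeta/(2(1+\epsilon))},
\end{align*}
and over $\zeta\in[\delta,1]$ the exponent of $R$ is at least $\tfrac{\epsilon}{2(1+\epsilon)}>0$, so $\inf_{\zeta\in[\delta,1]}\Gamma(\zeta)/\Gamma(0)\ge\sqrt{\delta}\,R^{\epsilon/(2(1+\epsilon))}\to\infty$. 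Therefore, for every fixed $\delta>0$, w.h.p.\ $\ell^{\star}/k<\delta$, which proves $\tfrac{1}{2k}\|\beta_2-\beta^{*}\|_0\to0$.

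\textbf{Regime $\frac{1}{C} k\log k\le n\le(1-\epsilon)n^{*}$.} Here Theorem~\ref{theorem:Value-phi2} (equivalently, the existence part of Theorem~\ref{theorem:MainResult1}(b), which furnishes a single binary vector with support disjoint from that of $\beta^{*}$ and residual $O(\Gamma(1))$) gives $\phi_2\le D_0\Gamma(1)$ w.h.p.; combined with $\phi_2=\phi_2(\ell^{\star})\ge e^{-3/2}\Gamma(\ell^{\star}/k)$ this yields $\Gamma(\ell^{\star}/k)/\Gamma(1)\le e^{3/2}D_0$ w.h.p. For $\zeta\in[0,1]$, using $k\log p/n\ge\tfrac{1}{2(1-\epsilon)}\log R$,
\begin{align*}
\frac{\Gamma(\zeta)}{\Gamma(1)}=\sqrt{\frac{1+\zeta(R-1)}{R}}\;\exp\!\left(\frac{(1-\zeta)k\log p}{n}\right)\ \ge\ \sqrt{\zeta}\ R^{(1-\zeta)/(2(1-\epsilon))}.
\end{align*}
Since $\log\Gamma$ is concave, $\zeta\mapsto\Gamma(\zeta)/\Gamma(1)$ attains its minimum over $[0,1-\delta]$ at an endpoint; at $\zeta=1-\delta$ the display gives $\ge\sqrt{1-\delta}\,R^{\delta/(2(1-\epsilon))}$, and at $\zeta=0$ we have $\Gamma(0)/\Gamma(1)=R^{-1/2}e^{k\log p/n}\ge R^{\epsilon/(2(1-\epsilon))}$, both of which diverge; hence $\inf_{\zeta\in[0,1-\delta]}\Gamma(\zeta)/\Gamma(1)\to\infty$, so w.h.p.\ $\ell^{\star}/k>1-\delta$, which proves $\tfrac{1}{2k}\|\beta_2-\beta^{*}\|_0\to1$.

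The main obstacle is not any single estimate — the analytic work is already packaged into Theorem~\ref{theorem:MainResult1} — but rather confirming that the constant-factor slack in the sandwich ($e^{-3/2}$ below, $D_0$ above) does no harm. This holds precisely because the gap $\Gamma(\zeta)/\min\{\Gamma(0),\Gamma(1)\}$ grows like a fixed positive power of $R=2k/\sigma^{2}+1$, and $R\to\infty$ is exactly the hypothesis $\sigma^{2}/k\to0$; were $\sigma^{2}=\Theta(k)$ the separation would be by a constant only and the argument would break. A secondary, purely bookkeeping issue is verifying the (slightly different) side conditions of the two parts of Theorem~\ref{theorem:MainResult1} — namely $k\log k=O(n)$ near the threshold and $n\le k\log p/(3\log D_0)$ in the second regime — from $\max\{k,2k/\sigma^{2}+1\}\le e^{\sqrt{C\log p}}$ and the postulated relation between $n$ and $n^{*}$, as sketched in the first paragraph.
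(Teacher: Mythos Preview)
Your proof is correct and follows essentially the same strategy as the paper: sandwich $\phi_2(\ell^\star)$ between the per-level lower bound $e^{-3/2}\Gamma(\ell^\star/k)$ from Theorem~\ref{theorem:MainResult1}(a) and an upper bound coming from the competitor at $\ell=0$ (Law of Large Numbers) or $\ell=k$ (Theorem~\ref{theorem:MainResult1}(b)), then show the resulting inequality $\Gamma(\ell^\star/k)\le C\cdot\Gamma(\text{endpoint})$ forces $\ell^\star/k$ toward that endpoint because $R=2k/\sigma^2+1\to\infty$.

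The execution differs in one respect worth noting. In the second regime the paper proceeds by a two-step bootstrap: from $c^2\bigl(\tfrac{2\Lambda_p}{\sigma^2}+1\bigr)\le R^{-\epsilon/(1-\epsilon)+\Lambda_p/(k(1-\epsilon))}$ it first extracts $\Lambda_p/k\ge\epsilon/2$, then feeds that back into the left side to get $\Lambda_p/k\to1$. You bypass this by invoking the concavity of $\log\Gamma$ to locate the minimum of $\Gamma(\zeta)/\Gamma(1)$ over $[0,1-\delta]$ at an endpoint, and then verifying that both endpoint values diverge like a positive power of $R$; this is cleaner and makes the role of log-concavity explicit. Your verification that the side conditions of Theorem~\ref{theorem:MainResult1} (namely $k\log k=O(n)$ and, in the second regime, $n\le k\log p/(3\log D_0)$) follow from the hypothesis $\max\{k,R\}\le e^{\sqrt{C\log p}}$ is also correct and matches what the paper checks.
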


The proof of  Theorem~\ref{theorem:sharptheorem} is found in Section~\ref{section:Problem_Phi2}. 
The theorem above confirms the ``all-or-nothing'' type behavior of the optimization problem $\Phi_{2}$, depending on 
how $n$ compares with $n^{*}$. Recall that, according to~\cite{wang2010information}, $n^*$ is an information 
theoretic lower bound for recovering $\beta^*$ from $X$ and $Y$ \emph{precisely}, and also for $n<n^*$ it does 
not rule out the possibility of recovering at least a fraction of bits of $\beta^*$. Our theorem however shows firstly that $n^*$ is exactly the infortmation theoretic threshold for exact recovery and also that if $n<n^*$
the optimization problem $\Phi_2$ fails to recover asymptotically any of the bits of $\beta^*$. We note also that
the value of $n^*$ is naturally larger than the corresponding threshold when $k=1$, namely $2\log p/\log(1+2\sigma^{-2})$,
which is asymptotically $\sigma^2\log p=n_{{\rm inf,1}}$. Interestingly, however this value for $n$, which has appeared also, as explained in the Introduction, as a weaker information theoretic bound also marks a phase transition point as we discuss in the proposition below.

As our result  above shows, the recovery of $\beta^{*}$ is possible by solving $\Phi_{2}$ (say by running the integer programming problem)
when $n>n^{*}$, even though efficient algorithms such as compressive sensing and LASSO algorithms are only known to work when
$n\ge (2k+\sigma^2)\log p$.
This suggests that the region $n\in [n^{*}, (2k+\sigma^2)\log p]$ might correspond to solvable but algorithmically hard regime
for the problem of finding $\beta^{*}$.

We turn to the study of the ``limiting curve" $\Gamma\left(\zeta\right)$. Note that we refer to the curve $\Gamma=\Gamma(\zeta)$ as the limiting curve because (1) from Theorem~\ref{theorem:MainResult1} for $\zeta:=\ell/k$ it provides a deterministic lower bound for the value of $\phi_2(\ell)$ (up to a multiplicative constant) but also an upper bound when $\ell=0$ and $\ell=k$  (up to a multiplicative constant) and (2) from the example of Theorem~\ref{theorem:sharptheorem}  it appears that structural properties of the curve $\Gamma$, such as the behavior of its maximum argument, accurately suggest a similar behavior for the actual values of $\phi_2(\ell)$,

Studying the properties of the "limiting curve" $\Gamma\left(\zeta\right)$ we discover an intriguing link between its behavior and the three
fundamental thresholds discussed above. Namely, the threshold $n_{{\rm inf,1}}=\sigma^{2}\log p$, the threshold 
$n^{*}=\frac{2k}{\log \left( \frac{2k}{\sigma^2}+1\right)}\log p$, and finally the threshold $n_{\text{LASSO/CS}}=(2k+\sigma^2)\log p$. 
For the illustration of different cases outlined in the proposition above see Figure~\ref{fig:subfigures}.

\begin{proposition}\label{prop:GammaMonotonic}
The function $\Gamma$ satisfies the following properties.
\begin{enumerate}
\item When $n\le \sigma^2 \log p$, $\Gamma$  is a strictly decreasing function of $\zeta$. (Figure 1(a)),
\item When $\sigma^2 \log p< n < n^*$, $\Gamma$ is not monotonic and it attains its minimum at $\zeta=1$. (Figure 1(b)),
\item When $n=n^*$, $\Gamma$ is not monotonic and it attains its minimum at $\zeta=0$ and $\zeta=1$. (Figure 1(c))
\item When $n^*<n< (2k+\sigma^{2}) \log p$, $\Gamma$ is not monotonic and it attains its minimum at $\zeta=0$. (Figure 1(d))
\item When $n>\left(2k+\sigma^2\right) \log p$, $\Gamma$ is a strictly increasing function of $\zeta$. (Figure 1(d))
\end{enumerate}
\end{proposition}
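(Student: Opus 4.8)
The plan is to work directly with the logarithm of $\Gamma$, since $\Gamma$ is log-concave and positive. Writing $g(\zeta) \triangleq \log\Gamma(\zeta) = \tfrac12\log(2\zeta k + \sigma^2) - \zeta\,\tfrac{k\log p}{n}$, the sign structure of $g'$ governs monotonicity of $\Gamma$. Differentiating, $g'(\zeta) = \tfrac{k}{2\zeta k + \sigma^2} - \tfrac{k\log p}{n}$, which is strictly decreasing in $\zeta$ on $[0,1]$; hence $\Gamma$ is unimodal (log-concave), and everything is decided by comparing $g'(0)$ and $g'(1)$ with $0$. First I would record these two quantities: $g'(0) = \tfrac{k}{\sigma^2} - \tfrac{k\log p}{n}$ and $g'(1) = \tfrac{k}{2k+\sigma^2} - \tfrac{k\log p}{n}$.

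Next I would translate the five cases into the two inequalities $g'(0) \lessgtr 0$ and $g'(1) \lessgtr 0$. Observe $g'(1) \ge 0 \iff n \ge (2k+\sigma^2)\log p = n_{\text{LASSO/CS}}$, and $g'(0) \le 0 \iff n \le \sigma^2\log p = n_{\text{inf},1}$. For the middle regime the relevant marginal is the sign of $\Gamma(1)-\Gamma(0)$, equivalently of $g(1)-g(0) = \tfrac12\log\!\big(\tfrac{2k+\sigma^2}{\sigma^2}\big) - \tfrac{k\log p}{n} = \tfrac12\log\!\big(\tfrac{2k}{\sigma^2}+1\big) - \tfrac{k\log p}{n}$, which is $\le 0 \iff n \le \tfrac{2k\log p}{\log(2k/\sigma^2 + 1)} = n^*$. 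Then I would assemble the cases: (1) if $n \le n_{\text{inf},1}$ then $g'(0)\le 0$, so (since $g'$ is decreasing) $g' \le 0$ on all of $[0,1]$, giving $\Gamma$ strictly decreasing (strictness holds for $n<n_{\text{inf},1}$, and one should note the boundary is degenerate; the statement as written uses $\le$, so I would either keep non-strict at the endpoint or observe $g'$ is strictly negative on $(0,1]$). (5) if $n > n_{\text{LASSO/CS}}$ then $g'(1) > 0$, so $g' > 0$ throughout, giving $\Gamma$ strictly increasing. (2)--(4) when $n_{\text{inf},1} < n < n_{\text{LASSO/CS}}$ we have $g'(0) > 0 > g'(1)$, so $g$ has an interior maximum and $\Gamma$ is not monotonic; the minimum over $[0,1]$ is attained at an endpoint and is $\Gamma(0)$ or $\Gamma(1)$ according to the sign of $g(1)-g(0)$ computed above — hence at $\zeta=1$ when $n_{\text{inf},1}<n<n^*$ (case 2), at both endpoints when $n=n^*$ (case 3), and at $\zeta=0$ when $n^*<n<n_{\text{LASSO/CS}}$ (case 4). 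This uses the already-recorded fact $\min_{\zeta}\Gamma(\zeta) = \min(\Gamma(0),\Gamma(1))$.

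There is no real obstacle here — the proposition is a routine calculus exercise in unimodality once the three thresholds are matched to the derivative and endpoint-difference conditions. The only mildly delicate point is bookkeeping at the boundary cases: one should check $n^* \ge n_{\text{inf},1}$ and $n^* \le n_{\text{LASSO/CS}}$ so that the five regimes genuinely partition (up to the measure-zero critical values) the range of $n$. For $n^* \ge n_{\text{inf},1}$ one uses $\log(1+x) \le x$ with $x = 2k/\sigma^2$, giving $\tfrac{2k}{\log(2k/\sigma^2+1)} \ge \sigma^2$; for $n^* \le n_{\text{LASSO/CS}}$ one uses $\log(1+x) \ge \tfrac{x}{1+x}$, i.e. $\log(1+2k/\sigma^2) \ge \tfrac{2k}{2k+\sigma^2}$, giving $\tfrac{2k}{\log(2k/\sigma^2+1)} \le 2k+\sigma^2$. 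With these two comparisons in hand the five cases are mutually consistent and exhaustive, and the proof is complete.
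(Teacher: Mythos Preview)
Your proposal is correct and follows essentially the same approach as the paper: both pass to $\log\Gamma$, compute its derivative, observe that the derivative is strictly decreasing in $\zeta$, and then read off the five cases from the resulting sign structure. The paper phrases the middle step by solving explicitly for the unique critical point $\zeta^* = \tfrac{1}{2k\log p}(n-\sigma^2\log p)$ and comparing it to $0$ and $1$, whereas you equivalently check the signs of $g'(0)$ and $g'(1)$; your additional verification that $n_{\mathrm{inf},1}\le n^*\le n_{\text{LASSO/CS}}$ via the inequalities $\tfrac{x}{1+x}\le\log(1+x)\le x$ is a nice bookkeeping touch that the paper leaves implicit.
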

In particular, we see that both the  bound $n_{\rm inf,1}=\sigma^{2}\log p$, and  $n_{\rm LASSO/CS}=\left(2k+\sigma^2\right) \log p$ 
mark the phase transition change of (lack of) monotonicity property of the limiting curve
$\Gamma$. We also summarize our findings in Table~\ref{table:Table1}. The proof of this proposition is found in Section~\ref{section:Problem_Phi2}.

To study the apparent algorithmic hardness of the problem in the regime $n\in [n_{\rm inf,1},n_{\rm LASSO/CS}]$,
as well as to see whether the picture suggested by the curve $\Gamma$ is actually accurate,
we now study the geometry of the solution space of the problem $\Phi_{2}$. We establish in particular, that the solutions $\beta$ which
are sufficiently ``close'' to optimality in $\Phi_2$, that is the $\beta$'s which have objective value $\|Y-X\beta\|_{2}$ close to the optimal value $\phi_2$, break into two separate clusters; namely those which have a ``large'' overlap with $\beta^*$, 
and those which are far from it, namely those which have a ``small'' overlap with $\beta^*$. 
As discussed in  Introduction, such an Overlap Gap Property (OGP) appears to mark the onset of algorithmic hardness for many
randomly generated constraint satisfaction problems. Here we demonstrate its presence in the context of high dimensional regression problems.

\begin{figure}[ht!]
     \begin{center}
        \subfigure[The behavior of $\Gamma$ for $n=10<\sigma^2\log p$.]{%
           \label{fig:first}
            \includegraphics[width=0.4\textwidth]{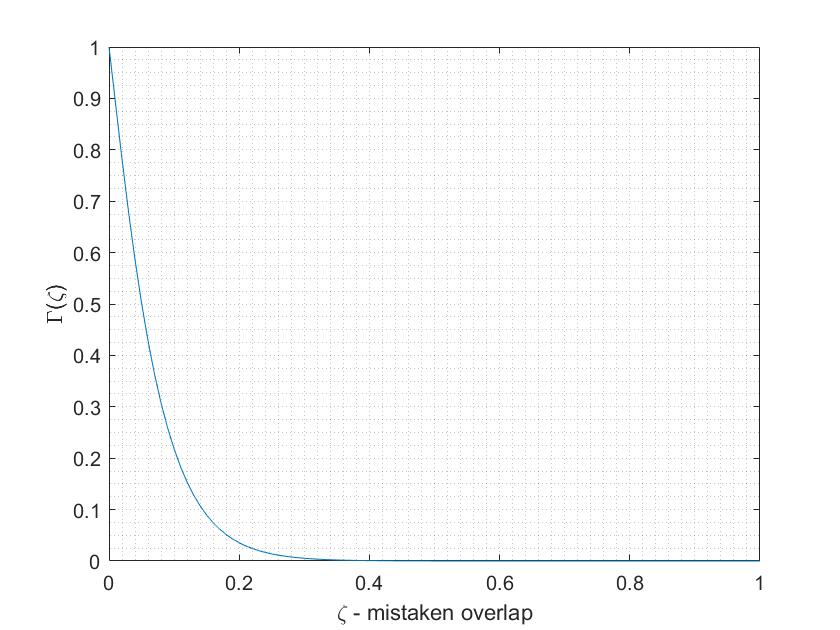}
        }%
        \subfigure[The behavior of $\Gamma$ for $\sigma^2 \log p<n=120<n^*$.]{%
           \label{fig:second}
           \includegraphics[width=0.4\textwidth]{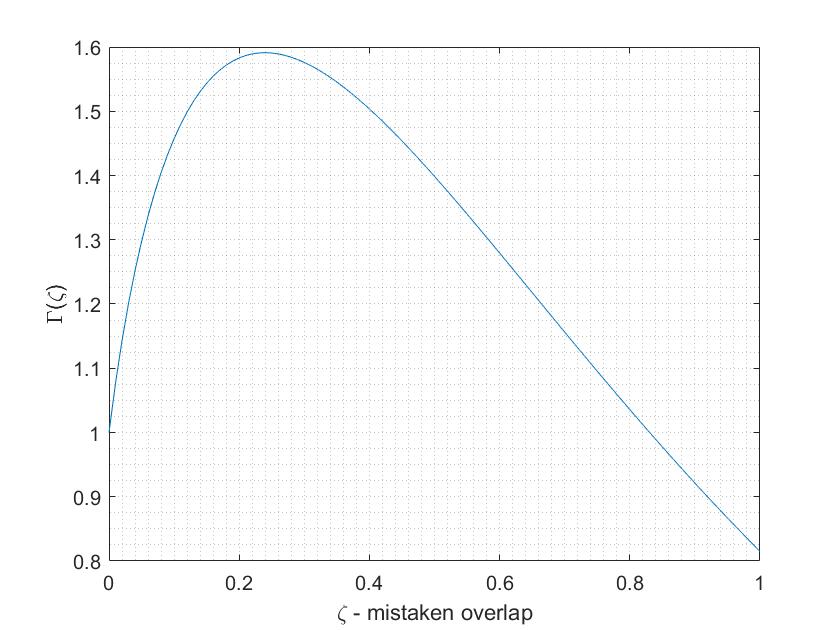}
        }\\ 
        \subfigure[The behavior of $\Gamma$ for $n=136=n^*$.]{%
            \label{fig:third}
            \includegraphics[width=0.4\textwidth]{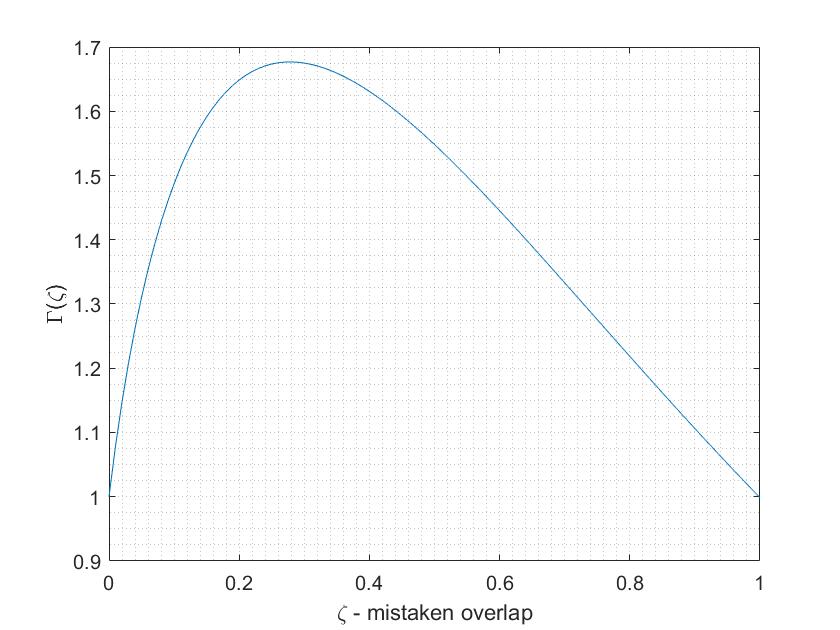}
        }%
        \subfigure[The behavior of $\Gamma$ for $n^*<n=200<(2k+\sigma^2) \log p$.]{%
            \label{fig:fourth}
            \includegraphics[width=0.4\textwidth]{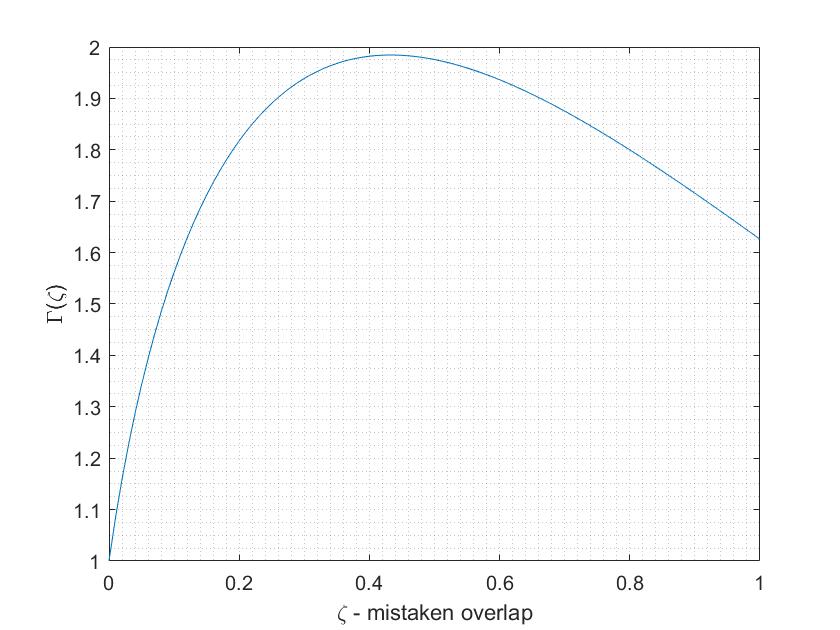}
        }\\%
        \subfigure[The behavior of $\Gamma$ for $(2k+\sigma^2) \log p<n=450$.]{%
            \label{fig:fifth}
            \includegraphics[width=0.4\textwidth]{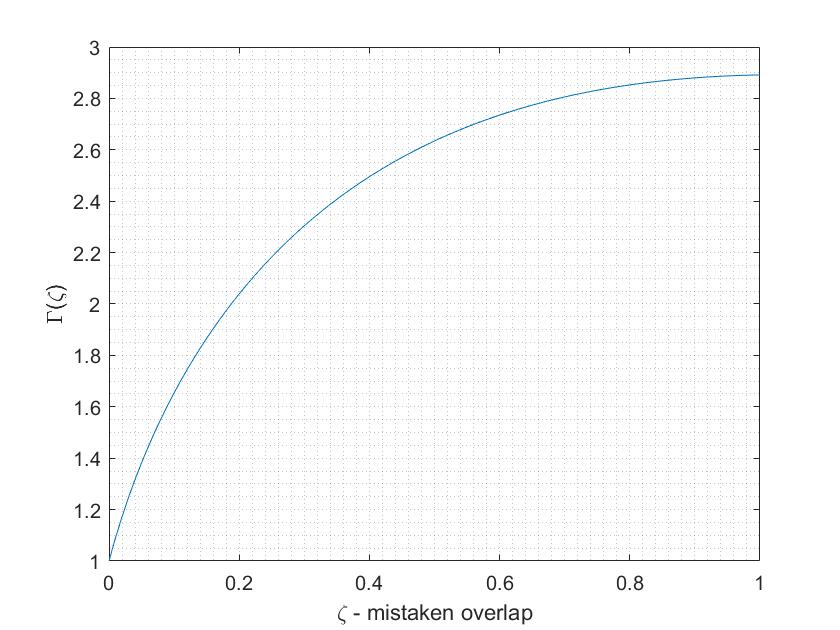}
        }%
    \end{center}
    \caption{%
       The five different phases of the function $\Gamma$ as $n$ grows. We consider the case when $p=10^9, k=10$ and $\sigma^2=1$. In this case $\left \lceil{\sigma^2 \log p}\right \rceil=21,\left \lceil{n^*}\right \rceil=137$ and $\left \lceil{(2k+\sigma^2) \log p}\right \rceil=435$.
     }%
   \label{fig:subfigures}
\end{figure}


\begin{table}
\begin{tabular}{| c | l |}
\hline
$n<n_{\text{inf},1}$       
& $\Gamma$ is monotonically decreasing                                           \\ 
\hline
$n_{\text{inf},1}< n<n^*$ &  $\Gamma$ is not monotonic                                \\ 
 & and attains its minimum at $\zeta=1$ \\ \hline
$n^*<n<n_{\text{LASSO/CS}}$        & $\Gamma$ is not monotonic   \\
&   and  attains its minimum at $\zeta=0$                           \\ 
                              \hline
$n_{\text{LASSO/CS}}<n$            & $\Gamma$ is monotonically increasing                           \\ 

\hline
\end{tabular}
\centering
\caption{The phase transition property of the limiting curve $\Gamma\left(\zeta\right)$} \label{table:Table1}
\end{table}

The presence of the OGP is indeed suggested by the lack of monotonicity of the limiting curve $\Gamma$ when $\sigma^{2}\log p<n<(2k+\sigma^2)\log p$. 
Indeed, in this case fixing any value  $\gamma$ strictly smaller than the maximum value of $\Gamma$, but larger than 
both $\Gamma\left(0\right)$ and $\Gamma\left(1\right)$, we see that set of overlaps $\zeta$ achieving value $\le \gamma$ is disjoint union of two intervals
of the form $[0,\zeta_{1}]$ and $[\zeta_{2},1]$ with $\zeta_{1}<\zeta_{2}$.
Of course, as before this is nothing but a suggestion, since the function
$\Gamma$ is only a lower bound on the objective value $\Phi_{2}(\ell)$ for $\zeta=\ell/k$. In the next theorem we establish that the OGP
indeed takes place, in the case where $n$ is between the information-theoretic threshold $n^*$ and a constant multiple of $n_{\text{LASSO/CS}}$. The case where $n$ lies between $\sigma^2 \log p$ and $n^*$ is discussed subsequent to the statement of the Theorem. 
Given any $r\ge 0$, let
\begin{align*}
S_{r} := \{ \beta \in \{0,1 \}^p : ||\beta||_0=k, n^{-\frac{1}{2}}||Y-X \beta||_2 < r \}.
\end{align*}

\begin{theorem}[The Overlap Gap Property]\label{theorem:OGP}{theorem:sharptheorem} 
Suppose the assumptions of Theorem~\ref{theorem:MainResult1} hold and for some $C>0$, $k\log k \leq Cn$.
For every sufficiently large constant $D_0$ there exist sequences $0<\zeta_{1,k,n}<\zeta_{2,k,n}<1$ satisfying
\begin{align*} 
\lim_{k\rightarrow\infty}k\left(\zeta_{2,k,n}-\zeta_{1,k,n}\right)= + \infty,
\end{align*}
as $k\rightarrow\infty$, and such that if $r_k=D_0\max\left(\Gamma(0),\Gamma(1)\right)$ and
$n^* \le n\le k\log p/(3\log D_0)$ then w.h.p. as $k$ increases the following holds

\begin{itemize}
\item[(a)] For every $\beta \in S_{r_k}$ 
\begin{align*}
\left(2k\right)^{-1}\|\beta-\beta^{*}\|_{0}<\zeta_{1,k,n} \text{ or }
\left(2k\right)^{-1}\|\beta-\beta^{*}\|_{0}>\zeta_{2,k,n}.
\end{align*}

\item[(b)]  $\beta^* \in S_{r_k}$. In particular the set 
\begin{align*}
S_{r_{k}}\cap \{\beta: \left(2k\right)^{-1}\|\beta-\beta^{*}\|_0<\zeta_{1,k,n} \}
\end{align*}
is non-empty.

\item[(c)] The cardinality of the set
\begin{align*}
|S_{r_{k}}\cap \{\beta: \|\beta-\beta^{*}\|_0\}=2k\}|,
\end{align*}
is at least $D_0^{n\over 3}$. In particular  the set $S_{r_{k}}\cap \{\beta: \|\beta-\beta^{*}\|_0\}=2k\}$ has exponentially
many in $n$ elements.
\end{itemize}
\end{theorem}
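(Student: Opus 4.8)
The plan is to deduce all three parts from the tight (up to multiplicative constant) estimates of $\phi_2(\ell)$ already established in Theorem~\ref{theorem:MainResult1}, together with elementary properties of the log-concave function $\Gamma$. First I would set the threshold $r_k = D_0 \max(\Gamma(0),\Gamma(1))$ and observe that, by part (a) of Theorem~\ref{theorem:MainResult1}, any $\beta$ with $\|\beta\|_0=k$ and $\|\beta-\beta^*\|_0 = 2\ell$ satisfies $n^{-1/2}\|Y-X\beta\|_2 \ge \phi_2(\ell) \ge e^{-3/2}\Gamma(\ell/k)$ w.h.p., \emph{simultaneously} over all $\ell$ (this is how the theorem is stated). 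Hence if $\beta \in S_{r_k}$ then $e^{-3/2}\Gamma(\ell/k) < D_0\max(\Gamma(0),\Gamma(1))$, i.e. $\Gamma(\ell/k) < e^{3/2}D_0\max(\Gamma(0),\Gamma(1))$. So part (a) reduces to a purely analytic claim: the set $\{\zeta \in [0,1] : \Gamma(\zeta) < e^{3/2}D_0\max(\Gamma(0),\Gamma(1))\}$ is contained in $[0,\zeta_{1,k,n}] \cup [\zeta_{2,k,n},1]$ for suitable $\zeta_1 < \zeta_2$ with $k(\zeta_2-\zeta_1)\to\infty$.

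The analytic core is to show that $\Gamma$ rises high enough in the interior to create a genuine gap of width $\gg 1/k$. Here I would use the hypothesis $\sigma^2\to\infty$ together with $n \ge (e^7 D_0^2+1)\sigma^2\log p$, which is exactly the regime (case 2 of Proposition~\ref{prop:GammaMonotonic}, $n_{\mathrm{inf},1} < n < n^*$, pushed well past $n_{\mathrm{inf},1}$) where $\Gamma$ is unimodal with an interior maximum that is substantially larger than both endpoint values. Concretely, $\log\Gamma(\zeta) = \tfrac12\log(2\zeta k + \sigma^2) - \zeta k\log p/n$; maximizing over $\zeta$ the interior critical point is at $\zeta^\dagger$ with $2\zeta^\dagger k + \sigma^2 = n/\log p$, and the hypothesis on $n$ forces $n/\log p \ge (e^7 D_0^2+1)\sigma^2$, so $\Gamma(\zeta^\dagger)/\Gamma(0) = \Gamma(\zeta^\dagger)/\sigma$ is at least of order $e^{7/2}D_0$ up to lower-order corrections — comfortably above the constant $e^{3/2}D_0$ we need to clear. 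Then, because $\log\Gamma$ is strictly concave, the super-level set $\{\Gamma \ge e^{3/2}D_0\max(\Gamma(0),\Gamma(1))\}$ is a nondegenerate interval $[\zeta_1,\zeta_2]$, and a quantitative concavity estimate (the second derivative $\frac{d^2}{d\zeta^2}\log\Gamma = -k^2/(2\zeta k+\sigma^2)^2$ is bounded below in absolute value on a neighborhood of $\zeta^\dagger$) gives $\zeta_2-\zeta_1$ bounded below by a constant, so in particular $k(\zeta_2-\zeta_1)\to\infty$. I would set $\zeta_{1,k,n},\zeta_{2,k,n}$ to be the two solutions of $\Gamma(\zeta) = e^{3/2}D_0\max(\Gamma(0),\Gamma(1))$, clamped to stay strictly inside $(0,1)$.

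Parts (b) and (c) are then immediate. For (b): $n^{-1/2}\|Y-X\beta^*\|_2 = n^{-1/2}\|W\|_2 \to \sigma$ by the Law of Large Numbers, and $\sigma = \Gamma(0) \le \max(\Gamma(0),\Gamma(1)) < r_k$ w.h.p. since $D_0$ is a large constant; and $\|\beta^*-\beta^*\|_0 = 0 < 2k\zeta_{1,k,n}$, so $\beta^* \in S_{r_k}\cap\{\beta:(2k)^{-1}\|\beta-\beta^*\|_0 < \zeta_{1,k,n}\}$, which is therefore nonempty. For (c): apply part (b) of Theorem~\ref{theorem:MainResult1}, which (using $\sigma^2\to\infty \Rightarrow \sigma^2 \le 2k$ eventually, and $n \le k\log p/(3\log D_0)$) yields at least $D_0^{n/3}$ vectors $\beta$ with $\|\beta-\beta^*\|_0 = 2k$ and $\|Y-X\beta\|_2 \le D_0\sqrt{2k+\sigma^2}\exp(-k\log p/n) = D_0\Gamma(1) \le D_0\max(\Gamma(0),\Gamma(1))$, hence $n^{-1/2}\|Y-X\beta\|_2 < r_k$ (note the strict inequality in the definition of $S_r$ is handled by replacing $D_0$ by $D_0+1$ or by a routine $\epsilon$-room argument), so all of these lie in $S_{r_k}\cap\{\beta:\|\beta-\beta^*\|_0 = 2k\}$.

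The main obstacle is the quantitative analysis of $\Gamma$ in part (a): one must verify not merely that $\Gamma$ is non-monotone but that its interior maximum exceeds $e^{3/2}D_0$ times the endpoint maximum \emph{and} that the two level-crossing points are separated by more than $1/k$ — this is precisely where the strengthened hypothesis $n \ge (e^7 D_0^2 + 1)\sigma^2\log p$ (rather than just $n > n_{\mathrm{inf},1}$) and $\sigma^2\to\infty$ are used, and getting the constants to line up ($e^7 D_0^2$ versus the required $e^{3}D_0^2$ with slack for lower-order terms) requires care. Everything else is bookkeeping on top of Theorem~\ref{theorem:MainResult1}.
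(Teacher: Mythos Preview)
Your high-level reduction is exactly the paper's: part (a) follows from Theorem~\ref{theorem:MainResult1}(a) once you show the purely analytic fact that $\Gamma(\zeta) \ge e^{3/2}D_0\max(\Gamma(0),\Gamma(1))$ on some interval $(\zeta_1,\zeta_2)$ with $k(\zeta_2-\zeta_1)\to\infty$, and parts (b) and (c) are immediate from the Law of Large Numbers and Theorem~\ref{theorem:MainResult1}(b). The paper packages the analytic fact as a separate lemma and then the theorem's proof is a few lines.

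There is, however, a real gap in your analytic argument for the width of the super-level set. You write that the second derivative of $\log\Gamma$ is ``bounded \emph{below} in absolute value'' and that this ``gives $\zeta_2-\zeta_1$ bounded below by a constant.'' Both assertions are wrong. To get a \emph{lower} bound on the width of a super-level set of a concave function you need an \emph{upper} bound on $|(\log\Gamma)''|$, not a lower bound. More substantively, the width is \emph{not} bounded below by a constant in the regime $n\le n^*$: one has $|(\log\Gamma)''(\zeta)| = 2k^2/(2\zeta k+\sigma^2)^2 \le 2k^2/\sigma^4$, and combining this with the height margin $h\ge 3/2$ gives only $\zeta_2-\zeta_1 \gtrsim \sigma^2/k$, which tends to $0$ since $\sigma^2\le 2k$. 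What saves you is that $k\cdot(\sigma^2/k)=\sigma^2\to\infty$, and this is precisely (and only) where the hypothesis $\sigma^2\to\infty$ is used. The paper sidesteps the curvature analysis entirely by exhibiting explicit endpoints $\zeta_1 = e^7D_0^2\sigma^2/(2k)$, $\zeta_2 = e^7D_0^2\sigma^2/k$ and verifying the bound at each by hand, using log-concavity for the interior.

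You also only treat the regime $n<n^*$ (you compare $\Gamma(\zeta^\dagger)$ to $\Gamma(0)$ alone). But the theorem's upper limit $n\le k\log p/(3\log D_0)$ can exceed $n^*$ whenever $2k/\sigma^2$ is large, and there one must compare to $\Gamma(1)$ instead. The paper handles this second case separately with the fixed choice $\zeta_1=1/5$, $\zeta_2=1/4$, using the upper bound on $n$ to force $\Gamma(\zeta)/\Gamma(1)\ge e^3D_0$ on $[1/5,1/4]$; here the width genuinely is a constant and $\sigma^2\to\infty$ is not needed.
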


The proof of Theorem~\ref{theorem:OGP} is found in Section~\ref{section:OGP}.
The property $k\left(\zeta_{2,k,n}-\zeta_{1,k,n}\right) \rightarrow \infty$ in the statement of the theorem implies in particular that 
the difference $\left(\zeta_{2,k,n}-\zeta_{1,k,n}\right)$ grows faster than $1/k$ as
$k$ diverges, ensuring that for many overlap values $\ell$, the ratio $2\ell/k$ falls within
the interval $[\zeta_{1,k,n},\zeta_{2,k,n}]$. Namely, the overlap gap interval is non-vacuous for all large enough $k$. Note that for $k$ such that $\max \{k ,\frac{2k}{\sigma^2}+1 \} \leq \exp \left( \sqrt{C \log p} \right)$ for large $k$ it holds $\frac{1}{C}k \log k <n^*$ and in particular the result of Theorem~\ref{theorem:OGP} holds for all $n \in [n^*,k\log p/(3\log D_0)]$ w.h.p. since the constraint $k \log k \leq Cn$ becomes redundant.

The study of Overlap Gap Property in the case where $\sigma^2 \log p <n<n^*$ does not have a clear algorithmic value, since the problem becomes information-theoretic impossible. Nevertheleess, the first moment curve is also non-monotonic in that regime suggesting that the Overlap Gap Property still holds. Under the additional stringent assumption that $\sigma^2 \rightarrow + \infty$ as $k \rightarrow + \infty$  it can be established that Overlap Gap Property indeed appears in that regime. The proof follows by almost identical arguments with the proof of Theorem \ref{theorem:OGP} by setting $\zeta_{1,k,n}:=\frac{e^7D_0^2 \sigma^2}{2k}$ and $\zeta_{2,k,n}:=\frac{e^7D_0^2 \sigma^2}{k}$. 

%
%
%
%
%

\section{The Pure Noise Model}\label{section:BetaZero}
In this subsection we consider a modified model corresponding to the case $\beta^{*}=0$, which we dub as pure noise model. This model serves
as a technical building block towards proving Theorem ~\ref{theorem:MainResult1}. The model is described as follows.

\subsection*{The Pure Noise Model}

Let $X \in \mathbb{R}^{n \times p}$ be an $n \times p$ matrix with i.i.d. standard normal entries, and $Y \in \mathbb{R}^n$ be a vector with i.i.d. $N\left(0,\sigma^2\right)$ entries.  $Y,X$ are  independent. We study the optimal value $\psi_2$ of the following optimization problem: 

$$\begin{array}{clc} \left(\Psi_2\right) & \min  &n^{-\frac{1}{2}}  ||Y-X\beta||_2 \\ &\text{s.t.}&\beta \in \{0,1\}^p  \\
&& ||\beta||_0=k. 
\end{array}$$
That is, we no longer have ground truth vector $\beta^{*}$, and instead search for a vector $\beta$ which makes $X\beta$ as close to an independent
vector $Y$ as possible in $\|\cdot\|_{2}$ norm.

We now state our main result for the pure noise model case. 
\begin{theorem}\label{theorem:PureNoiseLowerBound}
The following holds for all $n,p,k,\sigma$:
\begin{align}
\mathbb{P}\left(\psi_2  \ge e^{-3/2}{\sqrt{k+\sigma^2}\exp \left(-\frac{k \log p}{n}\right)}\right) \geq 1-e^{-n}. \label{eq:FirstMomentBound}
\end{align}
Furthermore, for every $C>0$ and every sufficiently large constant $D_0$,  if $k\log k\le Cn$,
$k\le \sigma^2\le 3k$, and $n\le k\log p/(2\log D_0)$, the cardinality of the set
\begin{align*}
{\Big\{}\beta \in \{0,1\}^p: \|\beta\|_0=k, n^{-{1\over 2}}\|Y-X\beta\|_2\le D_0\sqrt{k+\sigma^2}\exp\left(-{k\log p\over n}\right){\Big\}}
\end{align*}
is at least $D_0^{n\over 3}$ w.h.p. as $k\rightarrow\infty$.
\end{theorem}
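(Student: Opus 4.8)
The plan is to split the statement into its two halves. For the lower bound (\ref{eq:FirstMomentBound}) I would use a first moment / union bound argument over all $\binom{p}{k}$ candidate supports. Conditioning on $Y$, for a fixed $k$-sparse binary $\beta$ the vector $X\beta$ is an $n$-dimensional Gaussian with i.i.d.\ $N(0,k)$ coordinates independent of $Y$, so $Y - X\beta$ has i.i.d.\ $N(0,k+\sigma^2)$ coordinates (unconditionally, and conditionally on $Y$ it is Gaussian with that covariance plus the shift $Y$). The event $n^{-1/2}\|Y-X\beta\|_2 < r$ says that a fixed point $Y$ lies in a ball of radius $r\sqrt n$ around the Gaussian vector $X\beta$; equivalently $\|Y-X\beta\|_2^2 < n r^2$, a small-ball event for a non-central chi-square-type quantity. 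I would bound $\pr(\|Y - X\beta\|_2 \le r\sqrt n)$ by $(r^2/(k+\sigma^2))^{n/2}$ up to a $e^{O(n)}$ factor — this is the standard Gaussian small-ball estimate, most cleanly obtained by writing the density of each coordinate and integrating, or by a Chernoff bound on $-\|Y-X\beta\|_2^2$. Taking $r = e^{-3/2}\sqrt{k+\sigma^2}\exp(-k\log p/n)$ makes this probability at most $e^{-3n}(e^{-k\log p/n})^{n} = e^{-3n} p^{-k}$; multiplying by $\binom{p}{k}\le p^k$ and summing gives total probability $\le e^{-3n}$, and then a slightly more careful accounting (or absorbing a factor) yields the $e^{-n}$ bound claimed. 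This part is genuinely elementary and holds for all parameters.

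For the second (conditional second moment) half — the existence of exponentially many near-optimal $\beta$ at the threshold $r_0 = D_0\sqrt{k+\sigma^2}\exp(-k\log p/n)$ — I would set $Z = Z_{r_0}$ to be the number of $k$-sparse binary $\beta$ with $n^{-1/2}\|Y-X\beta\|_2 \le r_0$, and work conditionally on $Y$. First compute $\E{Z\mid Y}$: by the small-ball estimate in the other direction (a matching lower bound on $\pr(\|Y-X\beta\|_2 \le r_0\sqrt n \mid Y)$, which is where the largeness of $D_0$ and the Gaussian density lower bound enter, and where the assumption $k\le\sigma^2\le 3k$ controls the ratio $\|Y\|_2^2/(k+\sigma^2)$ so the ``non-centrality'' shift is harmless), one gets $\E{Z\mid Y}\ge \binom{p}{k} (D_0'^2 \exp(-2k\log p/n))^{n/2} \ge D_0^{\Theta(n)}$ with high probability over $Y$. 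Then bound $\E{Z^2\mid Y} = \sum_{\beta_1,\beta_2} \pr(\beta_1,\beta_2 \text{ both good}\mid Y)$ by grouping pairs according to the overlap $|\text{Support}(\beta_1)\cap\text{Support}(\beta_2)| = k - \ell$. For a fixed overlap, $(X\beta_1, X\beta_2)$ is, coordinatewise, a bivariate Gaussian with correlation $(k-\ell)/k$, so the joint small-ball probability is a product over $n$ i.i.d.\ bivariate Gaussians landing near the point $(Y_i,Y_i)$; this is where the large deviations / coupled-bivariate-normal analysis alluded to in the Methods section is carried out. The key inequality to prove is that, after summing the number of pairs $\binom{p}{k}\binom{k}{\ell}\binom{p-k}{\ell}$ against the joint small-ball bound, the total is dominated by the $\ell = k$ term (independent $\beta_1,\beta_2$), so $\E{Z^2\mid Y} \le (1+o(1))(\E{Z\mid Y})^2$ — or at worst $\le e^{O(n)}(\E{Z\mid Y})^2$, which is still enough. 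Then the conditional Paley--Zygmund inequality gives $\pr(Z \ge \tfrac12 \E{Z\mid Y}\mid Y) \ge c\, e^{-O(n)}$, and since $\E{Z\mid Y} \ge D_0^{\Theta(n)}$ with $D_0$ large, $Z$ is at least $D_0^{n/3}$ with probability bounded below; unconditioning on the high-probability event for $Y$ and, if necessary, a boosting/union argument over independent blocks (or simply noting the estimate holds w.h.p.\ as $k\to\infty$ since the relevant exponents diverge) completes the proof.

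The main obstacle is the second-moment ratio bound: showing that the overlap sum $\sum_{\ell=0}^k \binom{p}{k}\binom{k}{\ell}\binom{p-k}{\ell} q(\ell)$, where $q(\ell)$ is the joint small-ball probability for two $\beta$'s with overlap $k-\ell$, is dominated by its endpoint $\ell=k$. This requires a careful estimate of $q(\ell)$ as a function of the correlation $\rho_\ell = (k-\ell)/k$: one needs that the joint density of two correlated Gaussians at $(Y_i,Y_i)$, raised to the $n$-th power, decays fast enough in $\ell$ to beat the combinatorial factor $\binom{k}{\ell}\binom{p-k}{\ell} \le (ep^2/\ell^2)^\ell$, which grows in $\ell$. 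The competition is delicate precisely near $\ell$ small (large correlation): when $\rho$ is close to $1$ the bivariate Gaussian is nearly degenerate and $q(\ell)$ is close to the single-$\beta$ probability rather than its square, which is the classical failure mode of the second moment method — it is exactly to tame this that the paper introduces the \emph{conditional} version (conditioning on $Y$ removes the spurious correlation coming from fluctuations of $Y$ itself), and I would expect the regime restriction $n \le k\log p/(2\log D_0)$ and the largeness of $D_0$ to be exactly what is needed to push the large-$\rho$ contribution below $(\E{Z\mid Y})^2$. Carrying out this large-deviation estimate for the bivariate Gaussian small-ball probability, uniformly in the correlation parameter, is the technical heart of the argument.
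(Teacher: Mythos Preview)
Your plan for the lower bound (\ref{eq:FirstMomentBound}) is correct and matches the paper's argument essentially line for line: union bound over $\binom{p}{k}\le p^k$ supports, reduce to a chi-square small-ball estimate for $\|Y-X\beta\|_2^2/(k+\sigma^2)$, plug in the large-deviation rate $f(t_0)=\tfrac{1}{2}(1-t_0^2+2\log t_0)$ at $t_0=e^{-3/2}\binom{p}{k}^{-1/n}$.

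The second half, however, contains a real gap. You define $Z$ via the $\ell_2$ event $n^{-1/2}\|Y-X\beta\|_2\le r_0$ but then analyze the conditional second moment as if ``the joint small-ball probability is a product over $n$ i.i.d.\ bivariate Gaussians landing near the point $(Y_i,Y_i)$.'' That factorization is false for the $\ell_2$ event: the constraint $\sum_{i}(Y_i-(X\beta)_i)^2\le r_0^2n$ couples all coordinates, and the joint event for two overlapping $\beta$'s is a small-ball for two correlated \emph{sums}, not a product of coordinate-wise events. The paper's key maneuver---which you are missing---is to pass to the $\ell_\infty$ norm via the trivial bound $n^{-1/2}\|x\|_2\le\|x\|_\infty$, count instead
\[
Z_{t\sqrt{k},\infty}=\bigl|\{\beta:\|Y-X\beta\|_\infty<t\sqrt{k}\}\bigr|,
\]
and only then obtain the product structure $\E{Z\mid Y}=\binom{p}{k}\prod_i p_{t,Y_i/\sqrt{k}}$ and $\E{Z^2\mid Y}=\sum_\ell(\cdots)\prod_i q_{t,Y_i/\sqrt{k},\ell/k}$, where $p_{t,y}$ and $q_{t,y,\rho}$ are the one-dimensional and bivariate interval probabilities. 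The entire correlated-Gaussian analysis you outline (and the paper carries out) lives at the coordinate level and simply does not apply to the $\ell_2$ count as written.

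A second, smaller issue: your fallback ``at worst $\E{Z^2\mid Y}\le e^{O(n)}(\E{Z\mid Y})^2$, which is still enough'' would \emph{not} be enough. Paley--Zygmund would then give only $\pr(Z\ge\tfrac12\E{Z\mid Y}\mid Y)\ge e^{-O(n)}$, which is not w.h.p., and there is no natural ``independent blocks'' structure here to boost. The paper actually proves the much stronger bound $\mathbb{E}_Y\bigl[\min\{1,\Upsilon-1\}\bigr]\le k^{-c}$ (their Proposition on the ratio $\Upsilon=\E{Z^2\mid Y}/\E{Z\mid Y}^2$), which via conditional Chebyshev gives $\pr(Z<\tfrac12\E{Z\mid Y})\le 4k^{-c}\to 0$ directly. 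Getting the ratio this tight requires the explicit bound $q_{t,y,\rho}/p_{t,y}^2\le\sqrt{(1+\rho)/(1-\rho)}\,e^{\rho y^2}$ together with a delicate case split at $\rho_*=1-\tfrac{n\log D}{3k\log(p/k^2)}$ and a concavity argument for the function $\rho\mapsto\rho^{-1}\log\tfrac{1-\rho}{1+\rho}$; this is where the assumptions $k\le\sigma^2\le 3k$, $k\log k\le Cn$, and $n\le k\log p/(2\log D_0)$ are all genuinely used.
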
 
In the theorem above the value  of the constant $D_0$ may depend on $C$ (but does not depend on any other parameters, such as $n,p$ or $k$).
We note that in the second part of the theorem, our assumption $k\rightarrow\infty$ by our other assumptions also implies that both $n$ and $p$
diverge to infinity. 
The theorem above says that the value $\sqrt{k+\sigma^2}\exp\left(-{k\log p\over n}\right)$ is the tight value of $\psi_2$ 
for the optimization  problem $\Psi_2$, up to a multiplicative constant. Moreover, for the upper bound part, according to the second part 
of the theorem,  the number
of solutions achieving asymptotically this value is exponentially large in $n$. The assumption $k\le \sigma^2\le 3k$
is adopted so that the result of the theorem is transferable to the original model where $\beta^*$ is a $k$-sparse binary vector,
in the way made precise in the following section.

The proof of Theorem~\ref{theorem:PureNoiseLowerBound} is the subject of this section. The lower bound is obtained by a simple moment
argument. The upper bound is the part which consumes the bulk of the proof and will employ a certain conditional second moment method.
Since for any $x\in \R^n$ we have $n^{-{1\over 2}}\|x\|_2\le \|x\|_\infty$, the result will be implied by looking instead at the
cardinality of the set
\begin{align}\label{eq:SwitchNorm}
{\Big\{}\beta \in \{0,1\}^p:\|\beta\|_0=k, \|Y-X\beta\|_\infty\le D_0\sqrt{k+\sigma^2}\exp\left(-{k\log p\over n}\right){\Big\}},
\end{align}
and establishing the same result for this set.

\subsection{The Lower Bound. Proof of (\ref{eq:FirstMomentBound}) of Theorem~\ref{theorem:PureNoiseLowerBound}}

\begin{proof}
Observe that  $ p^k \geq \binom{p}{k}$ implies  $\exp\left( \frac{k  \log p}{n}\right) \geq \binom{p}{k}^\frac{1}{n}$ and therefore
\begin{align*}
 \mathbb{P}\left( \psi_2\ge  e^{-\frac{3}{2}}\exp \left(-\frac{k \log p}{n}\right)\sqrt{k+\sigma^2}\right)  
 \geq \mathbb{P}\left( \psi_2\ge e^{-\frac{3}{2}}\binom{p}{k}^{-{1\over n}}\sqrt{k+\sigma^2}\right).
\end{align*} 
Thus it suffices to show
\begin{align*}
\mathbb{P}\left( \psi_2\ge e^{-\frac{3}{2}}\binom{p}{k}^{-{1\over n}}\sqrt{k+\sigma^2}\right)\ge 1-e^{-n}.
\end{align*}
Given any $t>0$, let 
\begin{align*}
Z_t&=| \{ \beta \in \{0,1\}^p  : |\beta||_0=k,n^{-\frac{1}{2}}||Y-X\beta||_{2}<t \}|\\
&=\sum_{\beta \in \{0,1\}^p |, |\beta||_0=k } {\bf 1}\left(n^{-\frac{1}{2}}||Y-X\beta||_{2}<t\right),
\end{align*}  ${\bf 1}\left(A\right)$ denotes the indicator function applied to the event $A$. 
Let $t_0:=e^{-\frac{3}{2}} \binom{p}{k}^{-\frac{1}{n}}$. Observe that $t_0 \in \left(0,1\right)$. We  have

\begin{align*}
\mathbb{P}\left(\psi_2<e^{-\frac{3}{2}} \binom{p}{k}^{-\frac{1}{n}} \sqrt{k+\sigma^2}\right) 
&= \mathbb{P}\left(Z_{t_0\sqrt{k+\sigma^2}} \geq 1\right) \\
&\leq \E{Z_{t_0\sqrt{k+\sigma^2}}}.
\end{align*}

Now notice that $Z_{t_0\sqrt{k+\sigma^2}}$ is a sum of the $\binom{p}{k}$ indicator variables, each one of them referring to the event that a specific $k$-sparse binary $\beta$ satisfies $n^{-\frac{1}{2}}||Y-X\beta||_2<t_0\sqrt{k+\sigma^2}$ namely it satisfies  $||Y-X\beta||_2^2<t_0^2\left(k+\sigma^2\right)n $.

Furthermore, notice that for fixed $\beta \in \{0,1 \}^p$ and $k$-sparse, $Y-X\beta=Y-\sum_{i \in S} X_i$ for $S\triangleq\mathrm{Support}\left(\beta\right)$,
where $X_{i}$ is the $i$-th column of $X$.
Hence since $Y,X$ are independent, $Y_i$ are i.i.d. $N\left(0,\sigma^2\right)$ and $X_{i,j}$ are i.i.d. 
$N\left(0,1\right)$, then $||Y-X\beta||_2^2$ is distributed as
$\left(k+\sigma^2\right)\sum_{i=1}^{n}Z_i^2$ where $Z_i$ i.i.d. standard normal Gaussian, namely $\left(k+\sigma^{2}\right)$ multiplied
by a random variable with chi-squared distribution with $n$ degrees of freedom. 
Hence for a fixed k-sparse $\beta \in \{0,1\}^p$, after rescaling, it holds 
\begin{align*}
\mathbb{P}\left(||Y-X\beta||_2n^{-\frac{1}{2}}<t_0\sqrt{k+\sigma^2}\right)=\mathbb{P}\left(\sum_{i=1}^{n}Z_i^2 \leq t_0^2  n\right).
\end{align*} 
Therefore  
\begin{align*}
&\E{Z_{t_0\sqrt{k+\sigma^2}}}=\E{\sum_{\beta \in \{0,1\}^p |, |\beta||_0=k } 1\left(n^{-\frac{1}{2}}||Y-X\beta||_{2}<t\right)}\\
&=\binom{p}{k}\mathbb{P}\left(||Y-X\beta||_2n^{-\frac{1}{2}}<t_0\sqrt{k+\sigma^2}\right)\\
&=\binom{p}{k}\mathbb{P}\left(\sum_{i=1}^{n}Z_i^2 \leq t_0^2  n\right).
\end{align*} 
We conclude
\begin{equation}
\mathbb{P}\left(\psi_2<e^{-\frac{3}{2}} \binom{p}{k}^{-\frac{1}{n}} \sqrt{k+\sigma^2}\right) \leq \E{Z_{t_0\sqrt{k+\sigma^2}}}=\binom{p}{ k} \mathbb{P}\left(\sum_{i=1}^{n}Z_i^2 \leq t_0^2  n\right).
\label{eq:1keypr}
\end{equation}  
Using standards  large deviation theory estimates (see for example~\cite{large_deviations_ShWeiss}), 
for the sum of $n$ chi-square distributed random variables we obtain that for $t_0 \in \left(0,1\right)$,
\begin{equation}
\mathbb{P}\left(\sum_{i=1}^{n}Z_i^2 \leq n  t_0^2\right)\leq \exp \left(nf\left(t_0\right) \right)
\label{eq:larged}
\end{equation}
 with $f\left(t_0\right)\triangleq\frac{1-t_0^2+2\log \left(t_0\right)}{2}$.

Since $f\left(t_0\right)<\frac{1}{2}+\log t_0$, and as we recall 
$t_0=e^{-\frac{3}{2}} \binom{p}{k}^{-\frac{1}{n}}<1$ we obtain,
\begin{align*}
&f\left(t_0\right)<-1-\frac{1}{n} \log \binom{p}{k},
\end{align*}
which implies
\begin{align*}
\exp \left(n f\left(t_0\right)\right)<\exp\left(-n\right) \binom{p}{k}^{-1},
\end{align*}
which implies
\begin{align*}
 \binom{p}{k}\exp \left(n f\left(t_0\right)\right)<\exp\left(-n\right).
\end{align*}
Hence using the above inequality, (\ref{eq:larged}) and (\ref{eq:1keypr}) we get 
$$ 
\mathbb{P}\left(\psi_2<e^{-\frac{3}{2}} \binom{p}{k}^{-\frac{1}{n}} \sqrt{k+\sigma^2}\right)   \leq \exp\left(-n \right),
$$ and the proof of (\ref{eq:FirstMomentBound}) is complete.

\end{proof}

We now turn to proving the upper bound part  of Theorem~\ref{theorem:PureNoiseLowerBound}. We begin by establishing several 
preliminary results.

\subsection{Preliminaries}
We first observe that $k\log k\le Cn$ and $n\le k\log p/(2\log D_0)$, implies $\log k\le C\log p/(2\log D_0)$. In particular,
for $D_0$ sufficiently large
\begin{align}\label{eq:k-less-p}
k^4\le p.
\end{align}

We establish the following two auxiliary lemmas.

\begin{lemma}
\label{binomiallemma}
If $m_1,m_2 \in \mathbb{N}$ with $m_1 \geq 4$ and $m_2 \leq \sqrt{m_1}$ then $$\binom{m_1}{m_2} \geq \frac{m_1^{m_2}}{4 m_2!}.$$
\end{lemma}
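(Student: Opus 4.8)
\textbf{Proof plan for Lemma~\ref{binomiallemma}.}
The plan is to bound $\binom{m_1}{m_2}$ from below by keeping track of each of the $m_2$ factors in the falling-factorial representation
\[
\binom{m_1}{m_2}=\frac{m_1(m_1-1)\cdots(m_1-m_2+1)}{m_2!}=\frac{1}{m_2!}\prod_{j=0}^{m_2-1}(m_1-j).
\]
The desired inequality is equivalent to showing $\prod_{j=0}^{m_2-1}(m_1-j)\ge m_1^{m_2}/4$, i.e.\ that the product of the ratios $(m_1-j)/m_1 = 1-j/m_1$ over $j=0,\dots,m_2-1$ is at least $1/4$. So the whole lemma reduces to the estimate
\[
\prod_{j=1}^{m_2-1}\left(1-\frac{j}{m_1}\right)\ \ge\ \frac14 .
\]

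First I would take logarithms and use the elementary bound $\log(1-x)\ge -x-x^2$ valid for $x\in[0,1/2]$ (or more crudely $\log(1-x)\ge -2x$ for $x\in[0,1/2]$, which is cleaner); here each $x=j/m_1$ with $j\le m_2-1\le \sqrt{m_1}-1$, so $x\le 1/\sqrt{m_1}\le 1/2$ once $m_1\ge 4$, so the bound applies. This gives
\[
\sum_{j=1}^{m_2-1}\log\left(1-\frac{j}{m_1}\right)\ \ge\ -\frac{2}{m_1}\sum_{j=1}^{m_2-1}j\ =\ -\frac{(m_2-1)m_2}{m_1}\ \ge\ -\frac{m_2^2}{m_1}\ \ge\ -1,
\]
using the hypothesis $m_2\le\sqrt{m_1}$, hence $m_2^2\le m_1$, in the last step. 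Exponentiating yields $\prod_{j=1}^{m_2-1}(1-j/m_1)\ge e^{-1}>1/4$, which is even a little stronger than needed. Substituting back gives $\binom{m_1}{m_2}\ge e^{-1} m_1^{m_2}/m_2!\ge m_1^{m_2}/(4m_2!)$, as claimed.

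I would handle the degenerate small cases separately if necessary: when $m_2\le 1$ the product is empty and the inequality $\binom{m_1}{m_2}\ge m_1^{m_2}/(4m_2!)$ holds trivially ($\binom{m_1}{0}=1\ge 1/4$ and $\binom{m_1}{1}=m_1\ge m_1/4$), so the interesting range is $2\le m_2\le\sqrt{m_1}$, where the argument above applies. There is no real obstacle here; the only point requiring a moment's care is verifying that the arguments $j/m_1$ stay in the region where the chosen logarithm inequality is valid, which is exactly where the hypothesis $m_2\le\sqrt{m_1}$ (together with $m_1\ge 4$) is used, and the slack between $e^{-1}$ and $1/4$ means one can afford a fairly lossy bound throughout.
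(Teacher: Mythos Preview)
Your proof is correct and follows essentially the same path as the paper's: both reduce to showing $\prod_{j=1}^{m_2-1}(1-j/m_1)\ge 1/4$ and exploit the hypothesis $m_2\le\sqrt{m_1}$ to control the size of the product. The only minor difference is in how that last bound is extracted: the paper bounds every factor uniformly by $1-1/\sqrt{m_1}$ and then invokes $(1-1/x)^x\ge 1/4$ for $x\ge 2$, whereas you take logarithms, use $\log(1-x)\ge -2x$ on $[0,1/2]$, and sum to get the slightly sharper constant $e^{-1}$. Both arguments are equally elementary and neither offers a real advantage over the other here.
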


\begin{proof}
We have,
\begin{align*}
\binom{m_1}{m_2} \geq \frac{m_1^{m_2}}{4 m_2!}
\end{align*}  
holds if an only if
\begin{align*}
\prod_{i=1}^{m_2-1}\left(1-\frac{i}{m_1}\right) \geq \frac{1}{4}.
\end{align*}
Now  $m_2 \leq \sqrt{m_1}$ 
implies 
\begin{align*}
\prod_{i=1}^{m_2-1}\left(1-\frac{i}{m_1}\right) &\geq \prod_{i=1}^{\left \lfloor{\sqrt{m_1}} \right \rfloor}\left(1-\frac{i}{m_1}\right) \\
&\ge \left(1-\frac{1}{\sqrt{m_1}}\right)^{\sqrt{m_1}},\end{align*} 
It is easy to verify that  $x \geq 2$ implies  $\left(1-\frac{1}{x}\right)^x \geq \frac{1}{4}$. This  completes the proof.

\end{proof}

\begin{lemma}
\label{concavelemma}
The function $f: [0,1) \rightarrow \mathbb{R}$ defined by $$f\left(\rho\right):=\frac{1}{\rho} \log \left( \frac{1-\rho}{1+\rho}\right),$$ for $\rho \in [0,1)$ is concave.

\end{lemma}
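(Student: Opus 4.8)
The plan is to prove concavity not by brute-force differentiation of $f$ but by exhibiting $f$ as a pointwise limit of concave functions, using its Taylor expansion. First I would expand, for $|\rho|<1$,
\begin{align*}
\log\left(\frac{1-\rho}{1+\rho}\right)=\log(1-\rho)-\log(1+\rho)=-2\sum_{j\ge 0}\frac{\rho^{2j+1}}{2j+1},
\end{align*}
where the even-power terms cancel in the difference of the two logarithmic series. Dividing by $\rho$ (and noting that the value $f(0)=-2$ prescribed by continuity agrees with the $j=0$ term) yields the representation
\begin{align*}
f(\rho)=-2\sum_{j\ge 0}\frac{\rho^{2j}}{2j+1},\qquad \rho\in[0,1).
\end{align*}

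Next I would observe that every summand $\rho\mapsto -\frac{2}{2j+1}\,\rho^{2j}$ is concave on $[0,1)$: for $j=0$ it is constant, and for $j\ge 1$ its second derivative equals $-\frac{2}{2j+1}\cdot 2j(2j-1)\rho^{2j-2}\le 0$ for $\rho\ge 0$. Hence each partial sum $f_N(\rho):=-2\sum_{j=0}^N \frac{\rho^{2j}}{2j+1}$, being a finite sum of concave functions, is concave on $[0,1)$. Since the power series has radius of convergence $1$, we have $f_N\to f$ pointwise on $[0,1)$, and concavity is preserved under pointwise limits: the inequality $f_N(\lambda\rho_1+(1-\lambda)\rho_2)\ge \lambda f_N(\rho_1)+(1-\lambda)f_N(\rho_2)$ holds for every $N$ and passes to the limit. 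Therefore $f$ is concave on $[0,1)$.

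There is essentially no hard step here; the only thing to get right is the algebraic observation that after dividing by $\rho$ the logarithmic difference becomes a power series in $\rho^2$ with all coefficients negative, which is exactly what makes it a superposition of concave functions. As an alternative route one could compute directly that $f''(\rho)=\rho^{-3}h(\rho)$ with $h(\rho)=\rho^2 g''(\rho)-2\rho g'(\rho)+2g(\rho)$ and $g(\rho)=\log\frac{1-\rho}{1+\rho}$; then $h(0)=0$ and $h'(\rho)=\rho^2 g'''(\rho)$, while $g'(\rho)=-2/(1-\rho^2)$, $g''(\rho)=-4\rho/(1-\rho^2)^2$ and $g'''(\rho)=-(4+12\rho^2)/(1-\rho^2)^3<0$ on $[0,1)$, so $h'\le 0$, hence $h\le 0$ and $f''\le 0$ on $(0,1)$, extending to $[0,1)$ by continuity. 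I would prefer the series argument, since it sidesteps the endpoint $\rho=0$ and the bookkeeping of three derivatives.
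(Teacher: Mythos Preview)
Your proof is correct and takes a genuinely different route from the paper's. The paper proceeds by brute-force differentiation: it computes $f''(\rho)$ explicitly as a ratio whose sign is governed by the numerator $g(\rho)=-4\rho^3+(\rho^2-1)^2\log\!\left(\frac{1-\rho}{1+\rho}\right)+2\rho$, then shows $g\le 0$ by computing $g'$ and $g''$, verifying $g''<0$ via the inequality $\log(1+x)\le x$, and chasing the implications $g'\le g'(0)=0$ and $g\le g(0)=0$. Your power-series argument bypasses all of this: once you observe that $f(\rho)=-2\sum_{j\ge 0}\rho^{2j}/(2j+1)$ has only nonpositive even-power terms, concavity on $[0,\infty)\cap\{|\rho|<1\}$ is immediate, and the limit $\rho\to 0$ is handled automatically. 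This is shorter, more transparent, and actually yields strict concavity on $(0,1)$ for free. Your sketched alternative via $h'(\rho)=\rho^2 g'''(\rho)$ is close in spirit to the paper's calculation but is more cleanly organized than what the paper does; either way, the series argument is the nicer one.
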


\begin{proof}
The second derivative of $f$ equals $$\frac{2\left(-4\rho^3+\left(\rho^2-1\right)^2\log \left(\frac{1-\rho}{1+\rho} \right)+2\rho \right)}{\rho^3\left(1-\rho^2\right)^2}.$$ Hence, it suffices to prove that the function $g: [0,1) \rightarrow \mathbb{R}$ defined by $$g\left(\rho\right):=-4\rho^3+\left(\rho^2-1\right)^2\log \left(\frac{1-\rho}{1+\rho} \right)+2\rho$$ is non-positive. But for $\rho \in [0,1)$ $$g'\left(\rho\right)=4 \rho \left(1-\rho^2\right) \log \left(\frac{1+\rho}{1-\rho}\right)-10\rho^2 \text{ and }g''\left(\rho\right)=4 \left( \left(1-3\rho^2\right) \log \left(\frac{1+\rho}{1-\rho}\right)-3\rho \right).$$ We claim the second derivate of $g$ is always negative.
If $1-3\rho^2<0$, then $g''\left(\rho\right)<0$ is clearly negative. Now suppose
$1-3 \rho^2>0$. The inequality $\log \left(1+x\right) \leq x$ implies $\log \left(\frac{1+\rho}{1-\rho}\right) \leq \frac{2\rho }{1-\rho}$. 
Hence, 
\begin{align*}
g''\left(\rho\right) \leq 4 \left(\frac{2\rho }{1-\rho} \left(1-3 \rho^2\right)-3\rho \right) =4 \rho \frac{3\rho-6 \rho^2-1}{1-\rho}<0,
\end{align*}
where the last inequality follows from the fact that
$3\rho-6 \rho^2-1<0$ for all $\rho \in \mathbb{R}$.
 
Therefore $g$ is concave and therefore $g'\left(\rho\right) \leq g'\left(0\right)=0$ which implies that $g$ is also decreasing. In particular for all $\rho \in [0,1)$, $g\left(\rho\right) \leq g\left(0\right)=0$. 
\end{proof}

For any $t>0, y \in \mathbb{R}$ and  a standard Gaussian random variable $Z$ we let
\begin{equation}
p_{t,y}:= \mathbb{P}\left(|Z-y| \leq t\right).
\label{eq:pdef}
\end{equation} 
Observe that
\begin{align*}
p_{t,y}=\int_{[-t,t]}{1\over \sqrt{2\pi}}e^{-{(y+x)^2\over 2}}dx\ge \sqrt{2\over \pi}te^{-{y^2+t^2\over 2}},
\end{align*}
leading to 
\begin{align}\label{eq:bound-log-p}
\log p_{t,y}\ge \log t-{t^2\over 2}-{y^2\over 2}+(1/2)\log(2/\pi).
\end{align}

Similarly, for any $t>0, y \in \mathbb{R}, \rho \in [0,1]$ we let 
\begin{align}
q_{t,y,\rho}:= \mathbb{P}\left(|Z_1-y| \leq t,|Z_2-y| \leq t\right), \label{eq:qdef}
\end{align} 
where the random pair $\left(Z_1,Z_2\right)$ follows a bivariate normal distribution with correlation $\rho$. 
In particular, $q_{t,y,0}=p_{t,y}^2$ and  $q_{t,y,1}=p_{t,y}$.
We now state and prove  a lemma which provides an upper bound on the ratio $\frac{q_{t,y,\rho}}{p^2_{t,y}},$ for any $\rho \in [0,1)$.

\begin{lemma}\label{lemma:q-in-p}
For any $t>0, y \in \mathbb{R},\rho \in [0,1)$,
\begin{align*}\frac{q_{t,y,\rho }}{p_{t,y}^2} \leq  \sqrt{\frac{1+\rho}{1-\rho}}  e^{\rho y^2}.
\end{align*}
\end{lemma}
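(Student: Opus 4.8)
The plan is to split the estimate into a clean two–dimensional reduction followed by a one–dimensional comparison. First, using $2z_1z_2=(z_1^2+z_2^2)-(z_1-z_2)^2$, rewrite
\[
z_1^2-2\rho z_1z_2+z_2^2=(1-\rho)(z_1^2+z_2^2)+\rho(z_1-z_2)^2 ,
\]
so that the bivariate normal density $\varphi_\rho$ with correlation $\rho$ (with $\varphi$ the standard one–dimensional density) satisfies
\[
\varphi_\rho(z_1,z_2)=\frac{1}{2\pi\sqrt{1-\rho^2}}\,e^{-\frac{z_1^2+z_2^2}{2(1+\rho)}}\,e^{-\frac{\rho(z_1-z_2)^2}{2(1-\rho^2)}}\le\frac{1}{2\pi\sqrt{1-\rho^2}}\,e^{-\frac{z_1^2+z_2^2}{2(1+\rho)}},
\]
since $\rho\ge 0$. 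Integrating this over $[y-t,y+t]^2$ and noting that $\int_{y-t}^{y+t}e^{-z^2/(2(1+\rho))}\,dz=\sqrt{2\pi(1+\rho)}\,r_{t,y}$ with $r_{t,y}:=\mathbb{P}\!\left(|N(0,1+\rho)-y|\le t\right)$, I would obtain
\[
q_{t,y,\rho}\le\frac{1+\rho}{\sqrt{1-\rho^2}}\,r_{t,y}^2=\sqrt{\tfrac{1+\rho}{1-\rho}}\;r_{t,y}^2 .
\]
It then remains to prove the one–dimensional bound $r_{t,y}\le e^{\rho y^2/2}\,p_{t,y}$.

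For this, use that a $N(0,1+\rho)$ variable has the law of $G_1+\sqrt\rho\,G_2$ with $G_1,G_2$ independent standard normals. Setting $f(a):=p_{t,a}=\int_{a-t}^{a+t}\varphi(z)\,dz$ and conditioning on $G_2$ gives $r_{t,y}=\mathbb{E}_{G_2}\!\left[f\!\left(y-\sqrt\rho\,G_2\right)\right]$. The function $f$ is log–concave, being a convolution of the log–concave functions $\varphi$ and $\mathbf{1}_{[-t,t]}$, so $g:=\log f$ is concave and $g(x)\le g(y)+g'(y)(x-y)$ for every $x$; taking $x=y-\sqrt\rho\,G_2$ and using the Gaussian moment generating function,
\[
r_{t,y}=\mathbb{E}\!\left[e^{g(y-\sqrt\rho\,G_2)}\right]\le f(y)\,\mathbb{E}\!\left[e^{-\sqrt\rho\,g'(y)\,G_2}\right]=f(y)\,e^{\rho\,g'(y)^2/2}.
\]
Hence it suffices to check $|g'(y)|\le|y|$.

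To bound $g'(y)=f'(y)/f(y)$ I would use $\varphi'(z)=-z\varphi(z)$, which gives $f'(y)=\varphi(y+t)-\varphi(y-t)=-\int_{y-t}^{y+t}z\varphi(z)\,dz$, so
\[
g'(y)=-\,\frac{\int_{y-t}^{y+t}z\varphi(z)\,dz}{\int_{y-t}^{y+t}\varphi(z)\,dz}.
\]
Assume $y\ge 0$; the general case follows since $p_{t,y}=p_{t,-y}$. Then $|y+t|\ge|y-t|$ and $\varphi$ is decreasing in $|z|$, so $\varphi(y+t)\le\varphi(y-t)$, giving $f'(y)\le 0$ and thus $g'(y)\le 0\le y$. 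Moreover $\int_{y-t}^{y+t}(z-y)\varphi(z)\,dz=\int_0^t u\bigl(\varphi(y+u)-\varphi(y-u)\bigr)\,du\le 0$ by the same monotonicity, which rearranges to $\int_{y-t}^{y+t}z\varphi(z)\,dz\le y\int_{y-t}^{y+t}\varphi(z)\,dz$, i.e.\ $g'(y)\ge -y$. Therefore $|g'(y)|\le y$, and combining the three displayed inequalities yields $q_{t,y,\rho}\le\sqrt{(1+\rho)/(1-\rho)}\;e^{\rho y^2}\,p_{t,y}^2$; the case $\rho=0$ is immediate.

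The subtlety to flag is that a naive pointwise bound on the Radon–Nikodym derivative $\varphi_\rho(z_1,z_2)/(\varphi(z_1)\varphi(z_2))$ over the square $[y-t,y+t]^2$ only yields a factor $e^{\rho(|y|+t)^2/(1+\rho)}$, which is too weak: it does not reduce to $e^{\rho y^2}$ for large $t$. The point of the argument is that the first reduction is $t$–free, so the entire discrepancy is confined to the one–dimensional ratio $r_{t,y}/p_{t,y}$; controlling that ratio is precisely where the log–concavity plus Gaussian MGF trick is needed instead of a pointwise estimate, and the real content is the inequality $|(\log p_{t,\cdot})'(y)|\le|y|$, which in turn rests on the elementary fact that $\int_{-t}^{t}u\,\varphi(y+u)\,du\le 0$.
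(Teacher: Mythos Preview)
Your proof is correct and takes a genuinely different route from the paper's. The paper argues directly on the two–dimensional integral: it writes the exponent as $(x-\rho z)^2/(2(1-\rho^2))+z^2/2$, applies the linear change of variables $(x_1,x_2)=(x-\rho z,z)$, enlarges the resulting parallelogram to the rectangle $[y(1-\rho)-t(1+\rho),\,y(1-\rho)+t(1+\rho)]\times[y-t,y+t]$, and after one more substitution and a pointwise bound on the exponent obtains the slightly sharper factor $e^{\rho y^2/(1+\rho)}$ before relaxing to $e^{\rho y^2}$. No structural facts beyond basic calculus are used.

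Your decomposition $z_1^2-2\rho z_1z_2+z_2^2=(1-\rho)(z_1^2+z_2^2)+\rho(z_1-z_2)^2$ immediately factorises the problem into $q_{t,y,\rho}\le\sqrt{(1+\rho)/(1-\rho)}\,r_{t,y}^2$, pushing all the work into the one–dimensional comparison $r_{t,y}\le e^{\rho y^2/2}p_{t,y}$. The machinery you bring in for that step—writing $N(0,1+\rho)$ as $G_1+\sqrt\rho\,G_2$, Pr\'ekopa log-concavity of $a\mapsto p_{t,a}$, the tangent-line bound plus Gaussian MGF, and finally the score inequality $|(\log p_{t,\cdot})'(y)|\le|y|$—is more conceptual than the paper's bare-hands computation. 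It loses the intermediate constant $e^{\rho y^2/(1+\rho)}$ but lands on the same stated bound; in exchange, your reduction is $t$–free and the argument would transfer to other log-concave base densities with a bounded score, which the paper's change-of-variables does not obviously do.
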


\begin{proof}
We have 
\begin{align*}
q_{t,y,\rho }&=\frac{1}{2 \pi \sqrt{1-\rho^2}}\int_{[y-t,y+t]^2} \exp \left(-\frac{x^2+z^2-2\rho x z}{2\left(1-\rho^2\right)}\right)dxdz \\
&=\frac{1}{2 \pi \sqrt{1-\rho^2}}\int_{[y-t,y+t]^2} \exp \left(-\frac{\left(x- \rho z\right)^2}{2\left(1-\rho^2\right)}-\frac{z^2}{2}\right)dxdz \\
&\leq \frac{1}{2 \pi \sqrt{1-\rho^2}} \int_{[y-t,y+t]} \exp \left(-\frac{x_2^2}{2} \right)dx_2 
\int_{[y\left(1-\rho\right)-t\left(1+\rho\right),y\left(1-\rho\right)+t\left(1+\rho\right)]} 
\exp \left(-\frac{x_1^2}{2\left(1-\rho^2\right)} \right)dx_1,
\end{align*} 
where in the inequality we have introduced 
the change of variables $\left(x_1,x_2\right) =\left(x-\rho z, z\right)$ 
and upper bounded the transformed domain by 
\begin{align*}
[y\left(1-\rho\right)-t\left(1+\rho\right),y\left(1-\rho\right)+t\left(1+\rho\right)] \times [y-t,y+t].
\end{align*}
Introducing another change of variable $x_1 = x_3\left(1+\rho\right)+y\left(1-\rho\right)$, the expression on the right-hand side
of the inequality above becomes
\begin{align*}
&= \frac{1}{2 \pi \sqrt{1-\rho^2}} \int_{[y-t,y+t]} \exp \left(-\frac{x_2^2}{2} \right)dx_2  \left(1+\rho\right)  \int_{[-t,t]} 
\exp \left(-\frac{\left(x_3\left(1+\rho\right)+y\left(1-\rho\right)\right)^2}{2\left(1-\rho^2\right)} \right)dx _3,
\end{align*}

\begin{align*}
& = \exp\left(-\frac{y^2\left(1-\rho\right)}{2\left(1+\rho\right)}\right) \frac{1}{2 \pi} 
\sqrt{\frac{1+\rho}{1-\rho}} \int_{[y-t,y+t]} \exp \left(-\frac{x_2^2}{2} \right)dx_2 \times \\
&\times  
\int_{[-t,t]} \exp \left(-\frac{x_3^2\left(1+\rho\right)^2+2x_3y\left(1-\rho^2\right)}{2\left(1-\rho^2\right)} \right)dx _3 \\
& \leq \exp\left(-\frac{y^2\left(1-\rho\right)}{2\left(1+\rho\right)}\right) \frac{1}{2 \pi} \sqrt{\frac{1+\rho}{1-\rho}} \int_{[y-t,y+t]} \exp \left(-\frac{x_2^2}{2} \right)dx_2  \int_{[-t,t]} \exp \left(-\frac{x_3^2}{2} + x_3y \right)dx _3\\
&=\exp\left( \frac{y^2\rho}{1+\rho} \right) \frac{1}{2 \pi} \sqrt{\frac{1+\rho}{1-\rho}} \int_{[y-t,y+t]} \exp \left(-\frac{x_2^2}{2} \right)dx_2  \int_{[-t,t]} \exp \left(-\frac{\left(x_3+y\right)^2}{2}\right)dx_3\\
&=\exp\left( \frac{y^2\rho}{1+\rho} \right) \frac{1}{2 \pi} \sqrt{\frac{1+\rho}{1-\rho}} \left(\int_{[y-t,y+t]} 
\exp \left(-\frac{x_2^2}{2} \right)dx_2\right)^2,
\end{align*} which is exactly:

\begin{align*}
\exp\left(\frac{y^2\rho}{1+\rho}\right)\sqrt{\frac{1+\rho}{1-\rho}} p_{t,y}^2  
\le \exp\left(y^2\rho\right)  \sqrt{\frac{1+\rho}{1-\rho}} p_{t,y}^2
\end{align*}
This completes the proof of Lemma~\ref{lemma:q-in-p}.
\end{proof}

\subsection{Roadmap of the Upper Bound's proof}

Recall, that our goal is to establish the required bound on the cardinality of the set (\ref{eq:SwitchNorm}) instead. Thus
for every $s>0$ we consider the counting random variable of interest,
\begin{align*}
Z_{s,\infty}=|\{ \beta \in \{0,1\}^p : \|\beta\|_0=k,||Y-X\beta||_{\infty}<s\}|.
\end{align*} Our goal is to establish that under our assumptions for sufficiently large constant $D_0>0$ and $s=D_0\sqrt{k+\sigma^2}\exp\left(-\frac{k \log p}{n}\right)$ it holds \begin{align}\label{GoalRoad} Z_{s,\infty} \geq D_0^{\frac{n}{3}}\end{align} w.h.p. as $k \rightarrow + \infty$.

To establish this we use a  conditional second moment method where the conditioning is happening on the ``target" vector $Y$.  We first show that the conditional first moment satisfies a similar property to (\ref{GoalRoad}); it holds \begin{align}\label{Step1Road}\E{Z_{s,\infty}|Y} \geq D_0^{\frac{n}{4}}\end{align} w.h.p. as $k \rightarrow + \infty$ (Lemma \ref{eq:Zt-zeta}). This step follows from standard algebraic manipulations and an appropriate use of the Law of Large Numbers.

 To establish (\ref{GoalRoad}) from (\ref{Step1Road}) we study the conditional second moment $\E{Z_{s,\infty}^2|Y}$ as well and specifically the ratio the squared first moment, \begin{align*}
\Upsilon=\Upsilon(Y)\triangleq 
\frac{\E{Z_{t\sqrt{k},\infty}^2|Y} }{\E{Z_{t\sqrt{k},\infty}|Y}^2},
\end{align*}where we have used for convenience $s=t\sqrt{k}$ for some $t$ which throughout the proof of order $O\left(1\right)$. The second moment analysis is done in two parts. The first part is an observation; if $\Upsilon(Y)$ converges to $1$ in expectation, then  (\ref{Step1Road}) implies (\ref{GoalRoad}). The proof of this part is based on the fact that for any probability measure and any positive random variable $R$ using Chebyshev's inequality,
\begin{align}\label{chebroad0}\mathbb{P}\left(R< \frac{\E{R}}{2}\right) \leq \mathbb{P}\left(|R-\E{R}|>\frac{\E{R}}{2}\right) \leq \frac{\E{R^2} }{\E{R}^2}-1.\end{align}We then  consider the conditional probability measure $\mathbb{P}$ on the random variable $Y$ for our setting and apply the above inequality for $R=Z_{t\sqrt{k},\infty}$ to derive,
\begin{align}\label{chebroad00} \mathbb{P}\left(Z_{t\sqrt{k},\infty} \geq \E{Z_{t\sqrt{k},\infty}|Y} |Y \right) \leq \Upsilon(Y)-1\end{align}
and therefore \begin{align}\label{chebroad} \mathbb{P}\left(Z_{t\sqrt{k},\infty} \geq \E{Z_{t\sqrt{k},\infty}|Y}\right) \leq \mathbb{E}_Y\{\Upsilon(Y)-1\}.\end{align}The first part follows immediately from (\ref{chebroad}).

Unfortunately we cannot establish that $\Upsilon=\Upsilon(Y)$ converges to $1$ in expectation due to a \textit{lottery effect};  it turns out that $\Upsilon$ can take arbitrary large values but with negligible probability which make the expected value of $\Upsilon$ to explode. The second part is to show that $\min\{\Upsilon,2\}$, the truncated version of $\Upsilon$, indeed converges to $1$ in expectation, as $k \rightarrow + \infty$. The exact statement of this part can be found in Proposition \ref{prop:Proposition1}. Note that the argument with the Chebyshev's inequality described above can be easily adapted to work for the truncated version of $\Upsilon$ simply because the probability on the right hand side of (\ref{chebroad00}) is upper bounded by $1$ allowing to improve (\ref{chebroad}) to 
\begin{align}\label{chebroad3} \mathbb{P}\left(Z_{t\sqrt{k},\infty} \geq \E{Z_{t\sqrt{k},\infty}|Y}\right) \leq \mathbb{E}_Y\{\min\{\Upsilon-1,1\}\}=\mathbb{E}_Y\{\min\{\Upsilon,2\}-1\}.\end{align}Establishing Proposition \ref{prop:Proposition1} comprises the bulk of the proof and requires the use various concentration of measure inequalities and properties of the (uni-variate and bi-variate) Gaussian density function.

\subsection{Conditional second moment bounds}

We start this subsection with obtaining estimates on  $\E{Z_{t \sqrt{k},\infty}|Y}$ and $\E{Z_{t \sqrt{k},\infty}^2|Y}$ for $t=O\left(1\right)$.

A direct calculation gives 
\begin{align*}
\E{ Z_{t\sqrt{k},\infty}|Y}= \binom{p}{k}
\prod_{i=1}^{ n} 
\mathbb{P}\left(|\frac{Y_i}{\sqrt{k}}-V|<t\right)=
 \binom{p}{k} \prod_{i=1}^{ n} p_{t,\frac{Y_i}{\sqrt{k}}},
\end{align*}
where $V$ is a standard normal random variable and $p_{t,y}$ was defined in (\ref{eq:pdef}). Similarly,
\begin{align*}
\E{Z_{t \sqrt{k},\infty}^2|Y}=\sum_{\ell=0}^{k}\binom{p}{k-\ell,k-\ell,\ell,p-2k+\ell}\prod_{i=1}^{n} 
\mathbb{P}\left(|Y_i-V_1^{\ell}|<t\sqrt{k},|Y_i-V_2^{\ell}|<t \sqrt{k}\right),
\end{align*} 
where   $V_1^{\ell},V_2^{\ell}$ are each $N\left(0,k\right)$ random variables with covariance  $l$. 
In terms of $q_{t,y,\rho}$  defined in (\ref{eq:qdef})  we have for every $l$, 
\begin{align*}
\mathbb{P}\left(|Y_i-V_1^{\ell}|<t\sqrt{k},|Y_i-V_2^{\ell}|<t \sqrt{k}\right)=q_{t,\frac{Y_i}{\sqrt{k}},\frac{\ell}{k}}.
\end{align*} 
Hence,
\begin{align*}
\E{Z_{t \sqrt{k+\sigma^2},\infty}^2|Y}=\sum_{\ell=0}^{k}\binom{p}{k-\ell,k-\ell,\ell,p-2k+\ell}   \prod_{i=1}^{ n} q_{t,\frac{Y_i}{\sqrt{k}},\frac{\ell}{k}}.
\end{align*} 
We obtain
\begin{align*}
\Upsilon=\Upsilon(Y)= \sum_{\ell=0}^{k}\frac{\binom{p}{k-\ell,k-\ell,\ell,p-2k+\ell}}{\binom{p}{k}^2}\prod_{i=1}^{n} 
\frac{q_{t,{Y_i\over \sqrt{k}},\frac{\ell}{k}}}{p_{t,{Y_i\over \sqrt{k}}}^2}.
\end{align*}
Now for $\ell=0$ and all $i=1,2,...,n$ we have $q_{t,{Y_i\over \sqrt{k}},0}=p_{t,{Y_i\over \sqrt{k}}}^2$ a.s. 
and therefore the first term of this sum equals  $\frac{\binom{p}{k,k,p-2k}}{\binom{p}{k}^2} \leq 1$.

We now analyze terms corresponding to $\ell \ge 1$. We have  for all $\ell=1,..,k$ 
$$
\binom{k}{\ell} \leq \frac{k^{\ell}}{\ell!} \leq k^{\ell},\binom{p-k}{k-\ell} \leq \frac{\left(p-k\right)^{k-\ell}}{\left(k-\ell\right)!}.
$$ 
By (\ref{eq:k-less-p}) we have $k^4\le p$ implying $k\leq \sqrt{p}$ and  applying Lemma~\ref{binomiallemma} we have
\begin{align*}
\binom{p}{k} \geq \frac{p^k}{4  k!}.
\end{align*} 
Combining the above we get that for every $\ell=1,...,k$  it holds:
\begin{align*}
\frac{\binom{p}{k-\ell,k-\ell,l,p-2k+\ell}}{\binom{p}{k}^2} = \binom{k}{\ell} \frac{\binom{p-k}{k-\ell}}{\binom{p}{k}} 
\leq k^{\ell}  \frac{\left(p-k\right)^{k-\ell}}{\left(k-\ell\right)!}   \frac{4 k!}{p^k} \leq 4 \left( \frac{p}{k^2}\right)^{-\ell}.
\end{align*}
Hence we have 
\begin{equation}
\Upsilon \leq 1+4\sum_{\ell=1}^{k} \left( \frac{p}{k^2}\right)^{-\ell}\prod_{i=1}^{n} \frac{q_{t,{Y_i\over \sqrt{k}},\frac{\ell}{k}}}{p_{t,{Y_i\over \sqrt{k}}}^2} .
\label{eq:uppboundA}
\end{equation} 
Our key result regarding the conditional second moment estimate and its ratio to the square of the conditional first moment
estimate is the following proposition.
\begin{proposition}\label{prop:Proposition1}
Suppose  $k\log k\le Cn$ for all $k$ and $n$ for some constant $C>0$. 
Then  for all sufficiently large constants $D>0$ there exists $c>0$ such that for 
$n\le \frac{k \log \left(\frac{p}{k^2}\right)}{ 2\log D}$ 
and $t=D\sqrt{1+\sigma^2}  \left(\frac{p}{k^2}\right)^{-\frac{k}{n}}$ we have
\begin{align*}
\mathbb{E}_Y\left(\min \{1,\Upsilon-1 \}\right) \leq {1\over k^c}.
\end{align*} 
\end{proposition}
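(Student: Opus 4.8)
The plan is to bound $\mathbb E_Y[\min\{1,\Upsilon-1\}]$ by restricting to a high–probability ``good event'' on which $\Upsilon-1$ is deterministically small, starting from (\ref{eq:uppboundA}). Write $P:=p/k^2$ and $S:=\sum_{i=1}^n Y_i^2$; I may and will assume $\sigma^2\le 3k$, the regime in which the proposition is invoked, and this enters only through $e^{S/(2k)}$. I would take $\mathcal G=\{S\le 2n\sigma^2\}\cap\{\max_{i\le n}|Y_i|^2\le A\,k\log n\}$ for a suitable absolute constant $A$; a $\chi^2$ upper-tail bound of exactly the kind behind (\ref{eq:larged}), plus a union bound over the $n$ coordinates, gives $\mathbb P(\mathcal G^c)\le e^{-c_0 n}$, which since $n\ge\tfrac1C k\log k$ is $o(k^{-c})$ for every fixed $c$. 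On $\mathcal G^c$ we use $\min\{1,\Upsilon-1\}\le 1$; so it remains to show $\Upsilon-1\le k^{-c}$ on $\mathcal G$, for some $c>0$ and all large $D$. I split $\sum_{\ell=1}^{k}$ in (\ref{eq:uppboundA}) at $\ell_0:=(1-\eta)k$ for a small absolute constant $\eta$, fixed before the threshold on $D$.

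\emph{Low overlap ($1\le\ell\le\ell_0$).} Here $\rho:=\ell/k\le 1-\eta$, so Lemma~\ref{lemma:q-in-p} gives $\prod_i\tfrac{q_{t,Y_i/\sqrt k,\rho}}{p_{t,Y_i/\sqrt k}^2}\le\big(\tfrac{1+\rho}{1-\rho}\big)^{n/2}\exp\big(\tfrac{\ell}{k^2}S\big)$; combining $\log\tfrac{1+\rho}{1-\rho}\le\tfrac{2\rho}{1-\rho}\le\tfrac{2\ell}{k\eta}$ with $\tfrac{\ell}{k^2}S\le\tfrac{6\ell n}{k}$ on $\mathcal G$, the $\ell$–th summand of (\ref{eq:uppboundA}) is $\le 4P^{-\ell}\exp\big(\tfrac{\ell n}{k}(\tfrac1\eta+6)\big)$, which, since $n/k\le\tfrac{\log P}{2\log D}$, is $\le 4P^{-\ell/2}$ once $D$ is large in terms of $\eta$. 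Summing the geometric series and using $P\ge k^2$ (from (\ref{eq:k-less-p})) bounds this range by $O(P^{-1/2})=O(k^{-1})$. (Lemma~\ref{concavelemma}, which makes $\log$ of the summand convex in $\ell$, gives a shortcut: one need only check the endpoints $\ell=1,\ell_0$.)

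\emph{High overlap ($\ell_0<\ell\le k$).} Put $j:=k-\ell<\eta k$. The coefficient bound $4P^{-\ell}$ of (\ref{eq:uppboundA}) is far too lossy near the diagonal, so I would instead use $\binom k\ell\le\tfrac{k^j}{j!}$, $\binom{p-k}{k-\ell}\le\tfrac{p^j}{j!}$ and Lemma~\ref{binomiallemma} (valid as $k\le\sqrt p$ by (\ref{eq:k-less-p})) to get a coefficient $\le\tfrac{4k^j k!}{(j!)^2 p^{\,k-j}}$, and keep only $q_{t,y,\rho}\le p_{t,y}$ for the probabilities, so $\prod_i\tfrac{q}{p^2}\le\prod_i p_{t,Y_i/\sqrt k}^{-1}$. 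Now distinguish the width of the scaled window $t$. When $t$ is wide (which, given $n\le\tfrac{k\log P}{2\log D}$, occurs when $n$ is not too small, and which on $\mathcal G$ keeps every $|Y_i|/\sqrt k$ well inside $[-t,t]$), $p_{t,Y_i/\sqrt k}\ge\mathbb P(|Z|\le t/2)$ is bounded away from $0$, so $\prod_i p_{t,Y_i/\sqrt k}^{-1}\le 2^n$, and then even the crude $p^{\,k-j}\ge k^{4(k-j)}$ from (\ref{eq:k-less-p}) pushes the summand down to $e^{-\Omega(k\log k)}$ for $\eta$ small. When $t$ is narrow, so $e^{nt^2/2}$ is tame, I would use (\ref{eq:bound-log-p}): substituting $t^{-n}=D^{-n}(1+\sigma^2)^{-n/2}P^{k}$, the $P^{k}$ cancels $p^{-(k-j)}$ up to a harmless $p^{j}$, and on $\mathcal G$ ($e^{S/(2k)}\le e^{3n}$) the summand is $\le\tfrac{4(kp)^j}{(j!)^2 k^k}\big(\tfrac{e^{O(1)}\sqrt{\pi/2}}{D\sqrt{1+\sigma^2}}\big)^{\!n}e^{nt^2/2}=e^{-\Omega(n\log k)}$ for $D$ large, the prefactor $(kp)^jk^{-k}$ being absorbed via $n\ge\tfrac1C k\log k$ and $j<\eta k$. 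Since $\sum_{j\ge0}(j!)^{-2}=O(1)$, the whole high–overlap range is $e^{-\Omega(\min\{k,n\}\log k)}$. Adding the two ranges gives $\Upsilon-1\le k^{-1/2}$ on $\mathcal G$ for $k$ large, which with $\mathbb P(\mathcal G^c)=o(k^{-1/2})$ yields the claim with $c=\tfrac12$.

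\emph{Main obstacle.} The crux is the high–overlap regime $\rho\to1$: Lemma~\ref{lemma:q-in-p} degenerates ($\sqrt{(1+\rho)/(1-\rho)}\to\infty$) and the uniform coefficient bound of (\ref{eq:uppboundA}) wastes a factor $\sim k^k$, forcing one to exploit the genuine smallness of the near–diagonal multinomial coefficient while coping with the two opposite behaviours of $p_{t,y}$ — of order $1$ when the window is wide, of order $t$ when it is narrow — and hence to split on $t$, equivalently on how $n$ sits relative to $k\log(p/k^2)$. Handling the crossover between these regimes cleanly, and fixing $\eta$, $\ell_0$ and the threshold for $D$ so that \emph{both} ranges of $\ell$ close simultaneously, is where essentially all of the technical work lies; the low–overlap estimate and the good–event bookkeeping are routine.
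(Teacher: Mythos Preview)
Your overall architecture (restrict to a high--probability good event, then show $\Upsilon-1$ is small deterministically there) is a reasonable alternative to the paper's route, and your low--overlap estimate is correct. The genuine gap is in the high--overlap range, and it is precisely the issue you flag at the end but do not resolve: with a \emph{fixed} absolute constant $\eta$, the prefactor $(kp)^j$ with $j<\eta k$ cannot be absorbed.

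Concretely, take $p$ with $\log p\gg\log k$ (say $\log p=(\log k)^3$) and $n$ at its lower end $n\asymp k\log k$. Then $t=D\sqrt{1+\sigma^2}\,P^{-k/n}\to0$, so you are in your ``narrow'' regime. Your summand there is
\[
\frac{4(kp)^j}{(j!)^2\,k^{k}}\Bigl(\tfrac{e^{O(1)}}{D\sqrt{1+\sigma^2}}\Bigr)^{n}e^{nt^2/2},
\]
and even granting the full $k^{-n/2}$ coming from $\sqrt{1+\sigma^2}\ge\sqrt{k}$, the factor $\exp(-\Omega(n\log k))=\exp(-\Omega(k\log^2 k))$ cannot kill $(kp)^{\eta k}=\exp(\Theta(\eta k\log p))=\exp(\Theta(\eta k(\log k)^3))$. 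Your sentence ``the prefactor $(kp)^jk^{-k}$ being absorbed via $n\ge\tfrac1C k\log k$ and $j<\eta k$'' is exactly where the argument breaks. A similar problem hits your wide case as written: using only $p\ge k^4$ gives a bound $e^{-\Omega(k\log k)}$ on the combinatorial part, but the $2^n$ you multiply by can be as large as $e^{\Theta(k\log p/\log D)}$, which again dominates when $\log p\gg\log k$. (The wide case is salvageable if you keep the full $p^{(1-\eta)k}$ in the denominator rather than degrading it to $k^{4(1-\eta)k}$, but the narrow case is not, for fixed~$\eta$.)

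The fix---and this is what the paper does---is to make the split point \emph{adaptive}: set $\rho^\ast=1-\dfrac{n\log D}{3k\log(p/k^2)}$ instead of a fixed $1-\eta$. Then for every $\rho>\rho^\ast$ one has $(1-\rho)k\log(p/k^2)<\tfrac{n}{3}\log D$, which is precisely the statement that the leftover factor $P^{(1-\rho)k}$ (your $(kp)^j$, up to $k$--powers) is at most $D^{n/3}$ and hence is killed by the $D^{-n}$ coming from $t^{-n}$. The paper also avoids your good--event bookkeeping by bounding, for each $\rho$ separately, the \emph{tail probability} that the $\rho$--summand exceeds a threshold $\zeta^n/k$, via a Chernoff bound on $\sum_iY_i^2/k$; this replaces your deterministic control on $\mathcal G$ and sidesteps the wide/narrow dichotomy altogether.
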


\begin{proof}
Fix a parameter $\zeta \in \left(0,1\right)$ which will be optimized later. We have,
 \begin{align*}
\mathbb{E}_Y\left(\min \{1,\Upsilon-1 \}\right)&
=\mathbb{E}_Y\left(\min \{1,\Upsilon-1 \}{\bf 1}\left(\min \{1,\Upsilon-1 \} \geq \zeta^n\right)\right)\\
&+\mathbb{E}_Y\left(\min \{1,\Upsilon-1 \}{\bf 1}\left(\min \{1,\Upsilon-1 \}\leq \zeta^n\right)\right) \\
&\leq \mathbb{P}\left(\min \{1,\Upsilon-1 \} \geq \zeta^n\right)+\zeta^n.
\end{align*}  
Observe that if $ \Upsilon \geq 1+\zeta^n$, then (\ref{eq:uppboundA}) implies that at least one of the summands of 
\begin{align*}
\sum_{\ell=1}^{k}  4\left(\frac{p}{k^2}\right)^{-\ell}\prod_{i=1}^{n} \frac{q_{t,{Y_i\over \sqrt{k}},\ell}}{p_{t,{Y_i\over \sqrt{k}}}^2}
\end{align*} for $\ell=1,2..,k$ should be at least  $\frac{\zeta^n}{k}$. Hence applying the  union bound,
\begin{align*}
\mathbb{P}\left(\min \{1,\Upsilon-1 \} \geq \zeta^n\right)
&\leq \mathbb{P}\left(\Upsilon \geq 1+\zeta^n\right)\\
&\leq \mathbb{P}\left(\bigcup_{\ell=1}^{k} \{4 \left(\frac{p}{k^2}\right)^{-\ell}\prod_{i=1}^{n} 
\frac{q_{t,{Y_i\over \sqrt{k}},{\ell\over k}}}{p_{t,{Y_i\over \sqrt{k}}}^2}\ge\frac{\zeta^n}{k} \}\right) \\
&\leq \sum_{\ell=1}^{k} \mathbb{P}\left(4 \left(\frac{p}{k^2}\right)^{-\ell}\prod_{i=1}^{n} \frac{q_{t,{Y_i\over \sqrt{k}},{\ell\over k}}}{p_{t,{Y_i\over \sqrt{k}}}^2}\ge\frac{\zeta^n}{k}\right)
\end{align*} 
Introducing parameter $\rho=\frac{\ell}{k}$ we obtain
\begin{equation}
\mathbb{E}_Y\left(\min \{1,\Upsilon-1 \}\right)  \leq \zeta^n+\mathbb{P}\left(\min \{1,\Upsilon-1 \} 
\geq \zeta^n\right) \leq \zeta^n+\sum_{\rho=\frac{1}{k},\frac{2}{k},..,1} \mathbb{P}\left(\Upsilon_{\rho}\right),
\label{eq:targetexp}
\end{equation} where for all $\rho=\frac{1}{k},..,\frac{k-1}{k},\frac{k}{k}$ we define 
$$\Upsilon_{\rho}\triangleq
{\Big \{}4 \left(\frac{p}{k^2}\right)^{-\rho k}\prod_{i=1}^{n} \frac{q_{t,{Y_i\over \sqrt{k}},\rho }}{p_{t,{Y_i\over \sqrt{k}}}^2}\ge \frac{\zeta^n}{k} {\Big\}}.
$$ 

Next we obtain an upper bound on $\mathbb{P}\left(\Upsilon_{\rho}\right)$ for any $\rho \in (0,1]$ as a function of $\zeta$.
Set 
\begin{align*}
\rho_*:=1- \frac{n \log D}{3k \log (p/k^2)}.
\end{align*} 
The cases 
 $\rho \leq \rho_*$ and  $\rho>\rho_*$ will be considered separately.
\begin{lemma}\label{lemma:rho-large}
For  all $\rho \in (\rho_*,1]$ and $\zeta\in (0,1)$.
\begin{align*} 
\mathbb{P}\left(\Upsilon_{\rho}\right) \leq 2^n\left(D^{-{1\over 18}}\zeta^{-{1\over 6}}\right)^n.
\end{align*} 
\end{lemma}

\begin{proof}
Since $\rho>\rho_{*}$  then
\begin{equation}
-\left(1-\rho\right) \frac{k \log \left(\frac{p}{k^2}\right)}{n} \geq - \frac{1}{3}\log D.
\label{eq:assump1}
\end{equation} 
Now we have $ q_{t,{Y_i\over \sqrt{k}},\rho} \leq p_{t,{Y_i\over \sqrt{k}}}$ which implies 
$\frac{q_{t,{Y_i\over \sqrt{k}},\rho }}{p_{t,{Y_i\over \sqrt{k}}}^2} \leq p_{t,{Y_i\over \sqrt{k}}}^{-1}$, which after taking logarithms and dividing both the sides by $n$ gives
\begin{align*} 
\mathbb{P}\left(\Upsilon_{\rho}\right)
&\leq \mathbb{P}\left(\frac{1}{n} \sum_{i=1}^{n} -\log p_{t,{Y_i\over \sqrt{k}}} \geq \log \zeta - \frac{\log 4 k}{n} +\rho \frac{k \log \frac{p}{k^2}}{n}\right).
\end{align*}
Applying (\ref{eq:bound-log-p}) we obtain
\begin{align*}
\mathbb{P}\left(\Upsilon_{\rho}\right)
&\leq \mathbb{P}\left(-\log t+{t^2\over 2}+{1\over n}\sum_{i=1}^n{Y_i^2\over 2k}+(1/2)\log(2/\pi) 
\geq \log \zeta - \frac{\log 4 k}{n} +\rho \frac{k \log \frac{p}{k^2}}{n}\right),
\end{align*}
Recall that $t=D\sqrt{1+\sigma^2}  \left(\frac{p}{k^2}\right)^{-\frac{k}{n}}$, namely 
$\log t\ge \log D-{k\over n}\log\left(\frac{p}{k^2}\right)$ and thus applying (\ref{eq:assump1})
\begin{align*}
\log t+\rho \frac{k \log \frac{p}{k^2}}{n}&\ge -(1-\rho)\frac{k \log \frac{p}{k^2}}{n}+\log D \\
&\ge {2\over 3}\log D.
\end{align*}
By the bound on $n$, we have
$t\le D\sqrt{1+\sigma^2}/D^2\le 2/D\le 1$ for sufficiently large $D$. The same applies to $t^2/2$.
Also since $k\log k\le Cn$ then $\log(4k)/n\le C/k+\log 4/(k\log k)$. 
Then for sufficiently large $D$ we obtain
\begin{align*}
\mathbb{P}\left(\Upsilon_{\rho}\right) &\leq \mathbb{P}\left({1\over n}\sum_{i=1}^n{Y_i^2\over 2k}\geq \log \zeta+(1/3)\log D\right) \\
&=\mathbb{P}\left(\exp\left({1\over 6}\sum_{i=1}^n{Y_i^2\over 2k}\right)\geq \zeta^{n\over 6}D^{n\over 18}\right)\\
&\le {1\over \zeta^{n\over 6}D^{n\over 18}}\left(\E{\exp\left({Y_1^2\over 12k}\right)}\right)^n
\end{align*}
Recall that since $Y_1$ has distribution $N(0,\sigma^2)$ and $\sigma^2\le 3k$ then
\begin{align*}
\E{\exp\left({Y_1^2\over 12k}\right)}={1\over \sqrt{1-2\sigma^2/(12k)}}\le \sqrt{2}.
\end{align*}
We obtain a bound
\begin{align*}
\mathbb{P}\left(\Upsilon_{\rho}\right) &\leq 2^n\left(D^{-{1\over 18}}\zeta^{-{1\over 6}}\right)^n,
\end{align*}
as claimed. 
\end{proof}

\begin{lemma}\label{lemma:rho-less-rho-star} 
For  all $\rho \in [{1\over k},\rho_*]$ and $\zeta\in (0,1)$. 
\begin{align*}
\mathbb{P}\left(\Upsilon_{\rho}\right) \leq 4^n\left(D^{1\over 2}\zeta^k \right)^{-n/12}.
\end{align*}
\end{lemma}

\begin{proof}
Applying Lemma~\ref{lemma:q-in-p} we have
\begin{align*}
\mathbb{P}\left(\Upsilon_{\rho}\right)&=\mathbb{P}
\left(4 \left(\frac{p}{k^2}\right)^{-\rho k}\prod_{i=1}^{n} \frac{q_{t,{Y_i\over \sqrt{k}},\rho }}{p_{t,{Y_i\over \sqrt{k}}}^2}\ge \frac{\zeta^n}{k}\right)\\
&\leq \mathbb{P}\left(4 \left(\frac{p}{k^2}\right)^{-\rho k}\prod_{i=1}^{n} 
\left( \sqrt{\frac{1+\rho }{1-\rho}} \exp\left( \rho {Y_i^2\over k}\right) \right)\ge \frac{\zeta^n}{k} \right) \\
&= \mathbb{P}\left( \rho \sum_{i=1}^n {Y_i^2\over kn} \geq \log \zeta -\frac{ \log 4 k}{n}+ \frac{1}{2} 
\log \left( \frac{1-\rho}{1+\rho}\right)+ \frac{\rho k  \log \left(\frac{p}{k^2}\right)}{n} \right) \\
&=\mathbb{P}\left( \sum_{i=1}^n {Y_i^2\over kn} \geq \rho^{-1}\log \zeta -\rho^{-1}\frac{ \log 4 k}{n}+ 
\frac{1}{2\rho} \log \left( \frac{1-\rho}{1+\rho}\right)+ \frac{ k  \log \left(\frac{p}{k^2}\right)}{n} \right).
\end{align*}
Let 
\begin{align*}
f\left(\rho\right)=\rho^{-1}\log \zeta -\rho^{-1}\frac{ \log 4 k}{n}+ 
\frac{1}{2\rho} \log \left( \frac{1-\rho}{1+\rho}\right)+ \frac{ k  \log \left(\frac{p}{k^2}\right)}{n}. 
\end{align*}
Applying Lemma~\ref{concavelemma} and that $\zeta<1$ we can see that the function $f$ is concave. 
This implies that the minimum value of $f$ for $\rho \in [\frac{1}{k},\rho_*]$ is either $f\left(\frac{1}{k}\right)$ or $f\left(\rho_*\right)$,
and therefore
\begin{align}
\mathbb{P}\left(\Upsilon_{\rho}\right) &\leq \mathbb{P}\left( \sum_{i=1}^{n} {Y_i^2\over kn} \geq \min \{ f\left(\frac{1}{k}\right),f\left(\rho_*\right) \}\right) \notag\\
& \leq \mathbb{P}\left( \sum_{i=1}^{n} {Y_i^2\over kn} \geq f\left(\frac{1}{k}\right)\right)+ \mathbb{P}\left( \sum_{i=1}^{n} {Y_i^2\over kn} \geq f\left(\rho_*\right)\right).  \label{eq:rhooo}
\end{align}

Now we apply a standard Chernoff type bound on  
$\mathbb{P}\left( \sum_{i=1}^{n} {Y_i^2\over k} \geq n w\right)$ for $w \in \mathbb{R}$. 
We have
$\E{\exp \left(\theta Y_i^2/k\right) }=\frac{1}{\sqrt{1-2(\sigma^2/k) \theta}}<\infty$ if $\theta<\frac{1}{2\sigma^2/k}$. 
Since in our case $1 \leq \E{Y_i^2\over k}=\sigma^2/k \leq 3$, to obtain a finite bound we set $\theta=\frac{1}{12}<\frac{1}{6}$ and obtain 
\begin{align*}
\E{\exp\left({Y_i^2\over 12k}\right)}
=\frac{1}{\sqrt{1-\frac{\sigma^2}{6k}}} \leq \sqrt{2}.
\end{align*}
Therefore, we obtain
\begin{align*}
\mathbb{P}\left( \sum_{i=1}^{n} {Y_i^2\over k} \geq n w\right) &\leq 
\exp\left(-n\frac{w}{12}\right) 
\left(\E{\exp\left({Y_i^2\over 12k}\right)}\right)^n \\
&\leq 2^{n\over 2}\exp(-nw/12) .
\end{align*}
We obtain 
\begin{equation}
\mathbb{P}\left(\Upsilon_{\rho}\right) \leq 
2^{n\over 2}\exp(-nf(1/k)/12) + 2^{n\over 2}\exp(-nf(\rho^*)/12).
\label{eq:bound1}
\end{equation}
Now we obtain bounds on  $f\left(\frac{1}{k}\right)$ and $f\left(\rho_*\right)$. We have
\begin{align*}
f\left({1\over k}\right)=k\log \zeta -\frac{ k\log 4 k}{n}+ 
\frac{k}{2} \log \left( \frac{1-{1\over k}}{1+{1\over k}}\right)+ \frac{ k  \log \left(\frac{p}{k^2}\right)}{n}. 
\end{align*}
We have by our assumption $k\log k\le Cn$ that $k\log (4k)/n \le Ck\log(4k)/(k\log k)$. The sequence
$\frac{k}{2} \log \left( \frac{1-{1\over k}}{1+{1\over k}}\right)$ is bounded by a universal constant for $k\ge 2$.
Finally, we have $n\le k\log(p/k^2)/(2\log D)$. Thus for sufficiently large $D$,
\begin{align*}
f\left({1\over k}\right)\ge k\log\zeta +\log D,
\end{align*}
implying
\begin{align*}
2^{n\over 2}\exp(-nf(1/k)/12)\le 2^{n\over 2}\left(D\zeta^k\right)^{-n/12}.
\end{align*}
Now we will bound $f\left(\rho_*\right)$. We have
\begin{align*}
f\left(\rho^*\right)=(1/\rho^*)\log \zeta -(1/\rho^*)\frac{ \log 4 k}{n}+ 
\frac{1}{2\rho^*} \log \left( \frac{1-\rho^*}{1+\rho^*}\right)+ \frac{ k  \log \left(\frac{p}{k^2}\right)}{n}. 
\end{align*}
Applying upper bound on $n$, we have  $\rho_*>1/2$. Then $-1/(2\rho^*)\log(1+\rho^*)\ge -\log 2$. We obtain
\begin{align*}
f\left(\rho^*\right)=2\log \zeta -2\frac{ \log 4 k}{n}+ 
\log \left(1-\rho^*\right)+ \frac{ k  \log \left(\frac{p}{k^2}\right)}{n}. 
\end{align*}
We have again 
\begin{align}\label{eq:bound-log-4k}
2\log(4k)/n\le 2C\log(4k/k).
\end{align}
Applying the value of $\rho^*$ we have
\begin{align*}
\log \left(1-\rho^*\right)+ \frac{ k  \log \left(\frac{p}{k^2}\right)}{n}=
-\log\left({3k\log (p/k^2)\over n\log D}\right)+ \frac{ k  \log \left(\frac{p}{k^2}\right)}{n}.
\end{align*}
Consider 
\begin{align*}
-\log\left({3k\log (p/k^2)\over \log D}\right)+\log n+\frac{ k  \log \left(\frac{p}{k^2}\right)}{n}.
\end{align*}
For every $a>0$, the function $\log x+a/x$ is a decreasing on $x\in (0,a]$ and thus, applying the bound $n\le k\log(p/k^2)/(2\log D)$,  
the expression above is at least
\begin{align*}
-\log\left({3k\log (p/k^2)\over \log D}\right)+\log \left(k\log(p/k^2)/(2\log D)\right)+2\log D 
&=-\log 3-\log 2+2\log D\\
&\ge (3/2)\log D,
\end{align*}
for sufficiently large $D$. Combining with (\ref{eq:bound-log-4k}) we obtain that for sufficiently large $D$
\begin{align*}
f(\rho^*)\ge 2\log \zeta+\log D,
\end{align*}
Combining two bounds we obtain
\begin{align*}
\mathbb{P}\left(\Upsilon_{\rho}\right) &\leq 
2^{n\over 2}\left(D\zeta^k\right)^{-{n\over 12}}+
2^{n\over 2}\left(D\zeta^2\right)^{-n/12} \\
&\le 2^{{n\over 2}+1}\left(D\zeta^k\right)^{-{n\over 12}}.
\end{align*}
\end{proof}

We now return to the proof of Proposition~\ref{prop:Proposition1}.
Combining the results of Lemma~\ref{lemma:rho-large}
and Lemma~\ref{lemma:rho-less-rho-star}, and assuming $k\ge 6\cdot 12=72$, we obtain that 
\begin{align*}
\mathbb{P}\left(\Upsilon_{\rho}\right) &\leq 2^n\left(D^{1\over 18}\zeta^6\right)^{-n}+2^{{n\over 2}+1}\left(D\zeta^k \right)^{-n/12} \\
&\le 2^{n+1}\left(D^{1\over 2}\zeta^k \right)^{-n/12}
\end{align*}
for all $\rho\in [1/k,1]$ and $\zeta\in (0,1)$. Recalling (\ref{eq:targetexp})  we obtain
\begin{align*}
\mathbb{E}_Y\left(\min \{1,\Upsilon-1 \}\right)  &\leq \zeta^n+(2k)2^{n}\left(D^{1\over 2}\zeta^k \right)^{-n/12}.
\end{align*}
Let $D_1\triangleq D^{1\over 2}/2^{12}$ and rewrite the bound above as
\begin{align*}
\zeta^n+(2k)\left(D_1\zeta^k \right)^{-n/12}.
\end{align*}
Assume $D$ is large enough so that $D_1>1$ and let $\zeta=1/D_1^{1\over 2k}<1$. We obtain a bound
\begin{align*}
D_1^{-{n\over 2k}}+(2k)D_1^{-n/24}.
\end{align*}
Finally since $n\ge (1/C)k\log k$, we obtain a bound of the form $1/k^c$ for some constant $c>0$ as claimed.
This completes the proof of Proposition~\ref{prop:Proposition1}.

\end{proof}


\subsection{The Upper Bound}

\begin{proof}[Proof  of Theorem~\ref{theorem:PureNoiseLowerBound}]
By an assumption of the theorem, we have $k^4\le p$. Thus 
\begin{align*}
k\log p \le 2k\log (p/k^2).
\end{align*}
Then 
\begin{align}
n&\le {k\log p\over 2\log D_0} 
\le {k\log(p/k^2)\over \log D_0}
={k\log(p/k^2)\over 2\log D_0^{1\over 2}}. \label{eq:Bound-on-n}
\end{align}
Our goal is to obtain a lower bound on the cardinality of the set
\begin{align*}
{\Big\{}\beta \in \{0,1\}^p: \|\beta\|_0=k, \|Y-X\beta\|_\infty\le D_0\sqrt{k}\sqrt{1+\sigma^2/k}\exp\left(-{k\log p\over n}\right){\Big\}},
\end{align*}
Recall that $k\le \sigma^2\le 3k$. Letting
\begin{align*}
t_0=D_0\sqrt{1+\sigma^2/k}\exp\left(-{k\log p\over n}\right),
\end{align*}
our goal is then obtaining a lower bound on  $Z_{t_0\sqrt{k}}$.
Since $k\log k\le Cn$, then for sufficiently large $D_0$, 
\begin{align*}
t_0\ge D_0^{1\over 2}\sqrt{1+\sigma^2/k}\exp\left(-{k\log (p/k^2)\over n}\right)\triangleq \tau,
\end{align*}
and thus it suffices to obtain the claimed bound on $Z_{t_1\sqrt{k}}$. We note that by our bound (\ref{eq:Bound-on-n})
\begin{align}\label{eq:bound-on-tau}
\tau\le D_0^{1\over 2}\sqrt{1+\sigma^2/k}/D_0\le 2/D_0^{1\over 2}\le 1,
\end{align}
provided $D_0$ is sufficiently large.
Let $D=D_0^{1\over 2}$. Then, by the
definition of $\tau$ and by (\ref{eq:Bound-on-n}) the assumptions of  Proposition~\ref{prop:Proposition1}
are satisfied for this choice of $D$ and $t=\tau$.

\begin{lemma}\label{eq:Zt-zeta}
The following bound holds
with high probability with respect to $Y$ as $k$ increases
\begin{align*}
n^{-1}\log \E {Z_{\tau\sqrt{k} , \infty} |Y } \ge (1/2)\log D.
\end{align*}
\end{lemma}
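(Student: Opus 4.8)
The plan is to compute $\E{Z_{\tau\sqrt{k},\infty}\mid Y}$ exactly using the formula derived above, namely
\begin{align*}
\E{Z_{\tau\sqrt{k},\infty}\mid Y}=\binom{p}{k}\prod_{i=1}^n p_{\tau,Y_i/\sqrt{k}},
\end{align*}
and then take logarithms, divide by $n$, and show the resulting quantity is at least $(1/2)\log D=(1/4)\log D_0$ with high probability over $Y$. First I would lower bound $\log\binom{p}{k}$. Since $k^4\le p$ we have $k\le\sqrt p$, so Lemma~\ref{binomiallemma} gives $\binom{p}{k}\ge p^k/(4k!)\ge (p/k)^k/4$ (using $k!\le k^k$), hence $n^{-1}\log\binom{p}{k}\ge (k/n)\log(p/k)-n^{-1}\log 4$. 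Actually it will be cleaner to keep $n^{-1}\log\binom{p}{k}\ge (k/n)\log(p/k^2)+\text{(lower order)}$, or simply to work with $\binom{p}{k}\ge p^k/(4k!)$ directly and carry the $k!$ term, since $k\log k\le Cn$ controls $n^{-1}\log k!$. The point is $n^{-1}\log\binom{p}{k}$ is close to $(k/n)\log p$ up to terms that vanish as $k\to\infty$ under the standing assumption $k\log k\le Cn$.

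Next I would handle the product term $n^{-1}\sum_{i=1}^n\log p_{\tau,Y_i/\sqrt{k}}$ using the pointwise bound (\ref{eq:bound-log-p}):
\begin{align*}
\log p_{\tau,y}\ge \log\tau-\frac{\tau^2}{2}-\frac{y^2}{2}+\frac12\log(2/\pi).
\end{align*}
Applying this with $y=Y_i/\sqrt k$ and averaging,
\begin{align*}
n^{-1}\sum_{i=1}^n\log p_{\tau,Y_i/\sqrt k}\ge \log\tau-\frac{\tau^2}{2}+\frac12\log(2/\pi)-\frac{1}{2kn}\sum_{i=1}^n Y_i^2.
\end{align*}
Now I would control the empirical average $\frac{1}{n}\sum_i Y_i^2$: since $Y_i\sim N(0,\sigma^2)$ with $\sigma^2\le 3k$, we have $\E{Y_i^2/k}=\sigma^2/k\le 3$, and by the law of large numbers (or a one-line Chernoff bound of the type already used in Lemma~\ref{lemma:rho-less-rho-star}), $\frac{1}{kn}\sum_i Y_i^2\le 4$ with high probability as $n\to\infty$ (recall $k\to\infty$ forces $n\to\infty$). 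So with high probability the product term is at least $\log\tau-\tau^2/2+\frac12\log(2/\pi)-2$, and since $\tau\le 1$ by (\ref{eq:bound-on-tau}), the $-\tau^2/2$ term is bounded, so this is $\ge \log\tau-O(1)$.

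Finally I would combine. Recalling $\tau=D^{1/2}\sqrt{1+\sigma^2/k}\,\exp(-k\log(p/k^2)/n)=D_0^{1/2}\sqrt{1+\sigma^2/k}\,e^{-k\log(p/k^2)/n}$, we get $\log\tau=\frac12\log D_0+\frac12\log(1+\sigma^2/k)-\frac{k\log(p/k^2)}{n}$. Adding $n^{-1}\log\binom{p}{k}$, the term $\frac{k}{n}\log(p/k^2)$ cancels against $-\frac{k\log(p/k^2)}{n}$ (this is exactly why $\tau$ was defined this way), leaving
\begin{align*}
n^{-1}\log\E{Z_{\tau\sqrt k,\infty}\mid Y}\ge \tfrac12\log D_0-O(1)
\end{align*}
with high probability, where the $O(1)$ is a universal constant coming from $\frac12\log(2/\pi)$, the $-2$, the $-\tau^2/2$, the $\frac12\log(1+\sigma^2/k)$ (bounded since $\sigma^2/k\le 3$), and the lower-order corrections in the binomial estimate (all controlled by $k\log k\le Cn$ and $k\to\infty$). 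Choosing $D_0$ (hence $D$) large enough that $\frac12\log D_0$ dominates this constant by a factor of at least $2$ yields $n^{-1}\log\E{Z_{\tau\sqrt k,\infty}\mid Y}\ge\frac14\log D_0=\frac12\log D$, as claimed. The only mild obstacle is bookkeeping the various $O(1)$ and $o(1)$ contributions carefully enough to see they are all absorbed once $D_0$ is large; none of the individual estimates is delicate, and each cancellation (especially the key one between $\log\binom pk$ and the exponent in $\tau$) is forced by the definitions.
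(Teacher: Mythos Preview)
Your proposal is correct and follows essentially the same approach as the paper's proof: compute $\E{Z_{\tau\sqrt{k},\infty}\mid Y}=\binom{p}{k}\prod_i p_{\tau,Y_i/\sqrt{k}}$, apply the pointwise lower bound (\ref{eq:bound-log-p}) to each factor, use Lemma~\ref{binomiallemma} so that $n^{-1}\log\binom{p}{k}$ cancels the $-k\log(p/k^2)/n$ in $\log\tau$, control $n^{-1}\sum_i Y_i^2/k$ by the law of large numbers (bounded since $\sigma^2/k\le 3$), and absorb the remaining $O(1)$ constants by taking $D$ large. One small notational slip: you wrote $\tau=D^{1/2}\sqrt{1+\sigma^2/k}\,e^{-k\log(p/k^2)/n}$, but since $D=D_0^{1/2}$ the correct expression is $\tau=D\sqrt{1+\sigma^2/k}\,e^{-k\log(p/k^2)/n}=D_0^{1/2}\sqrt{1+\sigma^2/k}\,e^{-k\log(p/k^2)/n}$; your subsequent computation of $\log\tau=\tfrac12\log D_0+\ldots$ and final conclusion $\tfrac14\log D_0=\tfrac12\log D$ are nonetheless correct.
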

\begin{proof}
As before for $Y=(Y_1,\ldots,Y_n)$,
\begin{align*}
\E{ Z_{\tau\sqrt{k},\infty}|Y}= \binom{p}{k} \prod_{i=1}^{ n} \mathbb{P}\left(|\frac{Y_i}{\sqrt{k}}-X|<t|Y\right)=\binom{p}{k} \prod_{i=1}^{ n} p_{\tau,\frac{Y_i}{\sqrt{k}}},
\end{align*}
where $X$ is the standard normal random variable. 
Taking logarithms,
\begin{equation}
\log \E{ Z_{\tau\sqrt{k},\infty}|Y} = \log \binom{p}{k} +\sum_{i=1}^{n} \log p_{\tau,\frac{Y_i}{\sqrt{k}}}.
\label{eq:lem0}
\end{equation} 
Applying (\ref{eq:bound-log-p}), we have
\begin{align*}
n^{-1}\log \E{ Z_{\tau\sqrt{k},\infty}|Y}& \ge n^{-1}\log \binom{p}{k} +\log \tau-{\tau^2\over 2}+(1/2)\log(2/\pi)
-n^{-1}\sum_{i=1}^{n}{Y_i^2\over 2k}
\end{align*}
Using
\begin{align*}
\tau\ge D\exp\left(-{k\log(p/k^2)\over n}\right),
\end{align*}
and $\tau\le 1$, we obtain
\begin{align*}
n^{-1}\log \E{ Z_{\tau\sqrt{k},\infty}|Y}& \ge n^{-1}\log \binom{p}{k} +\log D-{k\log(p/k^2)\over n}-{1\over 2}+(1/2)\log(2/\pi)
-n^{-1}\sum_{i=1}^{n}{Y_i^2\over 2k}
\end{align*}
Since by (\ref{eq:k-less-p}) we have $k\le \sqrt{p}$, applying
Lemma~\ref{binomiallemma}  we have $\frac{1}{n}\log \binom{p}{k}-\frac{k}{n} \log \left(\frac{p}{k^2}\right)  \geq 0$.
By Law of Large Numbers and since $Y_i$ is distributed as $N(0,\sigma^2)$ with  $k\le \sigma^2\le 3k$, we
have $n^{-1}\sum_{i=1}^{n}{Y_i^2\over 2k}$ converges to $\sigma^2/(2k)\le 3/2$ as $k$ and therefore $n$ increases. Assuming $D$ 
is sufficiently large we obtain that w.h.p. as $k$ increases,
\begin{align*}
n^{-1}\log \E{ Z_{\tau\sqrt{k},\infty}|Y}& \ge (1/2)\log D.
\end{align*}
This concludes the proof of the lemma.
\end{proof}

Now we claim   that w.h.p. as $k$ increases,
\begin{align}\label{eq:Z-large-half-expectation}
Z_{\tau\sqrt{k},\infty}\ge\frac{1}{2} \E{Z_{\tau\sqrt{k},\infty}|Y}.
\end{align}

We have 
\begin{equation}
\mathbb{P}\left(Z_{\tau\sqrt{k},\infty}<\frac{1}{2} \E{Z_{\tau\sqrt{k},\infty}|Y}\right)
\leq \mathbb{P}\left(|Z_{\tau\sqrt{k},\infty}-\E{Z_{\tau\sqrt{k},\infty}|Y} | \geq \frac{1}{2}\E{Z_{\tau\sqrt{k},\infty}|Y}\right),
\label{eq:easystat}
\end{equation} 
and applying Chebyshev's inequality we obtain,
$$\mathbb{P}\left(|Z_{\tau\sqrt{k},\infty}-\E{Z_{\tau\sqrt{k},\infty}|Y} | \geq \frac{1}{2}\E{Z_{\tau\sqrt{k},\infty}|Y} |Y\right) 
\leq 4 \min \left[ \frac{\E{Z^2_{\tau\sqrt{k},\infty}|Y}}{\E{Z_{\tau\sqrt{k},\infty}|Y}^2}-1 ,1 \right].$$
Hence, taking expectation over $Y$ we obtain,
\begin{equation}
\mathbb{P}\left(|Z_{\tau\sqrt{k},\infty}-\E{Z_{\tau\sqrt{k},\infty}|Y} | \geq \frac{1}{2}\E{Z_{\tau\sqrt{k},\infty}|Y}\right)
\leq 4 \mathbb{E}_Y\left[ \min \left[ \frac{\E{Z^2_{\tau\sqrt{k},\infty}|Y}}{\E{Z_{\tau\sqrt{k},\infty}|Y}^2}-1 ,1 \right]\right]. \notag
\end{equation} 
We conclude  
\begin{equation}
\mathbb{P}\left(Z_{\tau\sqrt{k},\infty}<\frac{1}{2} \E{Z_{\tau\sqrt{k},\infty}|Y}\right)
\leq  4 \mathbb{E}_Y\left[ \min \left[ \frac{\E{Z^2_{\tau\sqrt{k},\infty}|Y}}{\E{Z_{\tau\sqrt{k},\infty}|Y}^2}-1 ,1 \right]\right].
\end{equation}
Applying Proposition~\ref{prop:Proposition1} the assumptions of which have been verified as discussed above, we obtain
\begin{align*}
\mathbb{P}\left(Z_{\tau\sqrt{k},\infty}<\frac{1}{2} \E{Z_{\tau\sqrt{k},\infty}|Y}\right)
&\leq\E{\min\{1,\Upsilon-1\}|Y} \\
&\le k^{-c},
\end{align*}
for some $c>0$. This establishes the claim (\ref{eq:Z-large-half-expectation}).
Combining with Lemma~\ref{eq:Zt-zeta}, we conclude that w.h.p. as $k$ increases
\begin{align*}
n^{-1}\log Z_{\tau\sqrt{k},\infty}&\ge n^{-1}\log \E{Z_{\tau\sqrt{k},\infty}|Y}-\log 2/n \\
&\ge (1/2)\log D-\log 2/n.
\end{align*}
Since $n$ satisfying $Cn\ge k\log k$ increases as $k$ increases, we conclude that w.h.p. as $k$ increases
$Z_{\tau\sqrt{k},\infty}\ge D^{n\over 3}$.
This concludes the proof of the theorem.
\end{proof}

\section{Proof of Theorem~\ref{theorem:MainResult1}}\label{section:Proof of MainResult1}
In this section we  prove Theorem~\ref{theorem:MainResult1}.
The proof is based on a reduction scheme to the simpler optimization problem $\Psi_2$ which is analyzed in the previous section.

To prove Theorem ~\ref{theorem:MainResult1} we will also consider the following restriction of $\Phi_2$. For any $S \subseteq \mathrm{Support}\left(\beta^*\right)$ consider the optimization problem
$\left(\Phi_{2}\left(S\right)\right)$: 
$$
\begin{array}{clc} \left(\Phi_{2}\left(S\right)\right) & \min  &n^{-\frac{1}{2}}||Y-X\beta||_{2} \\ &\text{s.t.}&\beta \in \{0,1\}^p  \\
&& ||\beta||_0=k, \mathrm{Support}\left(\beta\right) \cap \mathrm{Support}\left(\beta^*\right) = S,
\end{array}
$$ 
and set $\phi_{2}\left(S\right)$ its optimal value. Notice that for a binary $k$-sparse $\beta$ with $ \mathrm{Support}\left(\beta\right) \cap \mathrm{Support}\left(\beta^*\right) = S$ we have:
\begin{align*}
&Y-X\beta=X\beta^*+W-X\beta\\
&=\sum_{i \in \mathrm{Support}\left(\beta^*\right)} X_i+W-\sum_{i \in \mathrm{Support}\left(\beta\right)}X_i\\
&=\sum_{i \in \mathrm{Support}\left(\beta^*\right)-S} X_i+W-\sum_{i \in \mathrm{Supp}\left(\beta\right)-S}X_i\\
&=Y'-X'\beta_1,
\end{align*} where we have defined $Y',X',\beta_1$ as following:

\begin{enumerate}
\item $X' \in \mathbb{R}^{n \times \left(p-k\right)}$ to be the matrix which is $X$ after deleting the columns corresponding to 
$\mathrm{Support(\beta^*)}$

\item $Y':=\sum_{i \in \mathrm{Support(\beta^*})-S} X_i+W$

\item $\beta_1\in \{0,1\}^{p-k}$ is obtained from $\beta$ after deleting coordinates in $\mathrm{Support(\beta^*)}$.
 Notice that $||\beta_1||_0=k-|S|$.
\end{enumerate}  

Hence, solving $\Phi_2\left(S\right)$ can be written equivalently with respect to $Y',X',\beta'$ as following,  $$\begin{array}{clc} \left(\Phi_{2}\left(S\right)\right) & \min  &n^{-\frac{1}{2}}||Y'-X'\beta'||_{2} \\ &\text{s.t.}&\beta' \in \{0,1\}^{p-k}  \\
&& ||\beta'||_0=k-|S|.

\end{array}$$

We claim that the above problem is satisfying all the assumptions of Theorem~\ref{theorem:PureNoiseLowerBound} except for one of the
assumptions which we discuss below. 
Indeed, $Y',X'$ are independent since they are functions of disjoint parts of $X$, $X'$ has standard Gaussian i.i.d. elements,  
$Y'=\sum_{i \in \mathrm{Support(\beta^*})-S} X_i+W$ has iid Gaussian elements with zero mean and variance $\left(k-|S|\right)+\sigma^2 $,
and the sparsity  of $\beta'$ is $k-|S|$. The only difference is  that the ratio between the 
variance $\left(k-|S|\right)+\sigma^2 $ and the sparsity  $k-|S|$ is no longer necessarily upper bounded by 3, since this holds if and only if 
$\sigma^2 \leq 2\left(k-|S|\right)$, which does not hold necessarily, though it does hold in the special
case $S=\emptyset$, provided $\sigma^2\le 2k$.
Despite the absence of this assumption for general $S$ 
we can still apply the lower bound (\ref{eq:FirstMomentBound}) of Theorem~\ref{theorem:PureNoiseLowerBound},
since the restriction on the relative value of the standard deviation of $Y_i$ and other restrictions on $p,n,k$ were needed only
for the upper bound. 
Hence, applying the first part of  Theorem~\ref{theorem:PureNoiseLowerBound} we  conclude the optimal value 
$\phi_2\left(S\right)$ satisfies

\begin{align}\label{eq:PhiS}
\mathbb{P}&\left(\phi_2\left(S\right)\ge 
e^{-{3\over 2}}\sqrt{2\left(k-|S|\right)+\sigma^2}
\exp\left(-\frac{\left(k-|S|\right)\log\left(\left(p-k\right)\right)}{n}\right)\right) \notag\\
&\ge 1-\exp(-n).
\end{align}

Also applying the second part of this theorem to  the special case $S=\emptyset$ we obtain the following corollary
for the case $\sigma^2\le 2k$.

\begin{coro}\label{coro:Case-2k}
Suppose $\sigma^2\le 2k$.
For every $C>0$ and every sufficiently large constant $D_0$,  if $k\log k\le Cn$,
 and $n\le k\log (p-k)/(2\log D_0)$, the cardinality of the set
\begin{align*}
{\Big\{}\beta \in \{0,1\}^p: \|\beta\|_0=k, n^{-{1\over 2}}\|Y'-X'\beta\|_2\le D_0\sqrt{2k+\sigma^2}\exp\left(-{k\log (p-k)\over n}\right){\Big\}}
\end{align*}
is at least $D_0^{n\over 3}$ w.h.p. as $k\rightarrow\infty$. 
\end{coro}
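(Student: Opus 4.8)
The plan is to recognize Corollary~\ref{coro:Case-2k} as a direct specialization of the second (upper bound) part of Theorem~\ref{theorem:PureNoiseLowerBound}, obtained by relabeling parameters. With $S=\emptyset$ the reduction developed in this section produces the pair $(Y',X')$ together with the optimization over $k$-sparse binary $\beta'\in\{0,1\}^{p-k}$, and this is \emph{exactly} an instance of the Pure Noise model under the substitutions $p\mapsto\tilde p:=p-k$, $k\mapsto\tilde k:=k$ and $\sigma^2\mapsto\tilde\sigma^2:=k+\sigma^2$. Indeed $X'$ consists of the columns of $X$ outside $\mathrm{Support}(\beta^*)$, while $Y'=\sum_{i\in\mathrm{Support}(\beta^*)}X_i+W$ depends only on the complementary columns of $X$ and on $W$, so $Y'$ and $X'$ are independent; moreover $X'$ has i.i.d.\ standard normal entries, and each entry of $Y'$ is a sum of $k$ independent $N(0,1)$ variables plus one independent $N(0,\sigma^2)$ variable, hence the entries of $Y'$ are i.i.d.\ $N(0,k+\sigma^2)$.

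Next I would verify that the hypotheses of the upper bound part of Theorem~\ref{theorem:PureNoiseLowerBound} hold under this substitution. The requirement $\tilde k\log\tilde k\le Cn$ is just $k\log k\le Cn$, which is assumed; the requirement $n\le\tilde k\log\tilde p/(2\log D_0)$ is just $n\le k\log(p-k)/(2\log D_0)$, which is assumed. The one genuinely substantive point is the variance-to-sparsity constraint $\tilde k\le\tilde\sigma^2\le 3\tilde k$: this reads $k\le k+\sigma^2\le 3k$, whose left inequality is automatic and whose right inequality is precisely the standing hypothesis $\sigma^2\le 2k$ — this is exactly why that hypothesis appears. Finally $\tilde k=k\to\infty$ forces $n\to\infty$ (via $Cn\ge k\log k$) and $\tilde p=p-k\to\infty$ (via $\log(p-k)\ge(2\log D_0/C)\log k$), so the ``w.h.p.\ as $k\to\infty$'' regime of Theorem~\ref{theorem:PureNoiseLowerBound} is in force; the internally derived condition $\tilde k^4\le\tilde p$ likewise follows from these bounds once $D_0$ is large.

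It then only remains to read off the conclusion: substituting the parameters into the set appearing in the second part of Theorem~\ref{theorem:PureNoiseLowerBound} and using $\tilde k+\tilde\sigma^2=k+(k+\sigma^2)=2k+\sigma^2$, the relevant threshold becomes $D_0\sqrt{2k+\sigma^2}\exp\!\left(-k\log(p-k)/n\right)$, and the theorem yields that the number of $k$-sparse binary $\beta$ with $n^{-1/2}\|Y'-X'\beta\|_2$ below this threshold is at least $D_0^{n/3}$ w.h.p., which is the claim. I do not expect a real obstacle here: all the content lives in Theorem~\ref{theorem:PureNoiseLowerBound}, and the only care points are the bookkeeping of which quantity plays the role of the ``noise variance'' (so that the ratio lands in the admissible window $[1,3]$, which happens precisely when $\sigma^2\le2k$) and choosing the corollary's constant $D_0$ large enough for the possibly $C$-dependent largeness requirement of Theorem~\ref{theorem:PureNoiseLowerBound}, which is harmless since $D_0$ is allowed to depend on $C$.
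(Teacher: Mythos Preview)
Your proposal is correct and is exactly the paper's approach: the paper introduces the corollary with the single sentence ``applying the second part of this theorem to the special case $S=\emptyset$ we obtain the following corollary for the case $\sigma^2\le 2k$,'' and you have simply written out in full the parameter substitution $(\tilde p,\tilde k,\tilde\sigma^2)=(p-k,\,k,\,k+\sigma^2)$ and checked that the hypothesis $\tilde k\le\tilde\sigma^2\le3\tilde k$ of Theorem~\ref{theorem:PureNoiseLowerBound} is equivalent to $\sigma^2\le 2k$. The only superfluous remark is the one about $\tilde k^4\le\tilde p$: that inequality is derived \emph{inside} the proof of Theorem~\ref{theorem:PureNoiseLowerBound} from its own hypotheses, so once you have verified those hypotheses you need not recheck it.
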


\begin{proof}[Proof of Theorem~\ref{theorem:MainResult1}]
Applying the union bound and (\ref{eq:PhiS}) we obtain 
\begin{align*}
\mathbb{P}&\left(\phi_2\left(\ell\right)\ge 
e^{-{3\over 2}}\sqrt{2\ell+\sigma^2}
\exp\left(-\frac{\ell\log\left(p-k\right)}{n}\right),~\forall~0\le \ell\le k\right) \notag\\
&\ge 1-\sum_{0\le \ell\le k}{k\choose \ell}\exp(-n) \\
&\ge 1-2^k\exp(-n).
\end{align*}
Since $k\log k\le Cn$, we have $2^k\exp(-n)\rightarrow 0$ as $k$ increases. Replacing
$p-k$ by a larger value $p$ in the exponent we complete the proof of part (a) of the theorem.

We now establish the second part of the theorem. It follows almost immediately from Corollary~\ref{coro:Case-2k}.
Since $k\log k\le Cn$, the bound $n\le k\log p/(3\log D_0)$ implies $\log k\le C\log p/(3\log D_0)$ and in particular
$k\log(p-k)=k\log p-O(\frac{k^2}{ p})$ and $k^2\over p$ converges to zero as $k$ increases, provided $D_0$ is sufficiently large.
Then we obtain
$n\le \exp(-k\log (p-k)/(2\log 2D_0))$
for all sufficiently large $k$. By a similar reason we may now replace $\exp(-k\log (p-k))$ by $\exp(-k\log p)$ in the 
upper bound on $n^{-{1\over 2}}\|Y'-X'\beta\|_2$ using the extra factor $2$ in front of $D_0$. This completes
the proof of the second part of the theorem.
\end{proof}

\section{The optimization problem $\Phi_2$}\label{section:Problem_Phi2}

In this section we give proofs of Proposition~\ref{prop:GammaMonotonic} and Theorem~\ref{theorem:sharptheorem}.

\begin{proof}[Proof of Proposition~\ref{prop:GammaMonotonic}]

It is enough to study $f=\log \Gamma$ with respect to monotonicity. We compute the derivative for every 
$\zeta \in [0,1]$, 
\begin{align*}
f'\left(\zeta\right)=-\frac{k \log p}{n}+\frac{k}{2\zeta k+\sigma^2}=-\frac{k}{n\left(2\zeta k+\sigma^2\right)} 
\left( \log p \left(2 \zeta k+\sigma^2\right)-n\right).
\end{align*}
Clearly, $f'$ is strictly decreasing in $\zeta$ and
$f'\left(\zeta\right)=0$ has a unique solution $\zeta^*=\frac{1}{2k\log p}\left(n-\sigma^2 \log p\right)$.
Using the strictly decreasing property of $f'$ and the fact that it has a unique root, 
we conclude that for $\zeta<\zeta^*$, $f'\left(\zeta\right)>0$, and for $\zeta>\zeta^*$, $f'\left(\zeta\right)<0$. 
As a result, if $\zeta^*\le 0$ then $f$ is a decreasing function on $[0,1]$, if
$\zeta^*\ge 1$ $f$ is an increasing function on $[0,1]$, and if  $\zeta^* \in \left(0,1\right)$ then $f$ is non monotonic.
These cases are translated to the cases $n\le \sigma^{2} \log p$, $n\ge (2k+\sigma^{2})\log p$ and $n\in \left(\sigma^{2}\log p,(2k+\sigma^{2})\log p\right)$,
respectively. The minimum value achieved by $f$, and its dependence on $n^{*}$ was already established earlier.
\end{proof}


\begin{proof}[Proof of Theorem~\ref{theorem:sharptheorem}]
 We set 
\begin{align*}
\Lambda_p\triangleq \mathrm{argmin}_{\ell=0,1,..,k} \phi_2\left(\ell\right),
\end{align*} and we remind the reader that
$\mathrm{argmin}_{\ell=0,1,..,k} \phi_2\left(\ell\right)=k-|\mathrm{Support}\left(\beta_2\right) \cap \mathrm{Support}\left(\beta^*\right)|$.

\vspace{.1in}
\textbf{ Case 1}: $n>\left(1+\epsilon\right)n^*$.
Showing $\|\beta_2-\beta^*\|_0/k\rightarrow 0$ as $k$ increases is equivalent to showing  
\begin{align*}
\frac{\Lambda_p}{k} \rightarrow 0,
\end{align*} 
w.h.p. as $k$ increases.
By the definition of $\Lambda_p$ we have: 
\begin{align*}
\phi_2\left(\Lambda_p\right) \leq \phi_2\left(0\right).
\end{align*} 
Recall the definition of function $\Gamma$ from (\ref{eq:Gamma_function}).
From Theorem~\ref{theorem:MainResult1} we have that w.h.p. as $k$ increases that
$\phi_2\left(\Lambda_p\right) \geq e^{-{3\over 2}} \Gamma\left(\frac{\Lambda_p}{k}\right).$ Combining the above two inequalities we derive that w.h.p.:
\begin{equation}
e^{-{3\over 2}}\Gamma\left(\frac{\Lambda_p}{k}\right) \leq \phi_2\left(0\right).
\label{eq:s111}
\end{equation}

Now from  $Y=X\beta^*+W$ we have 
$$
\phi_2\left(0\right)=n^{-\frac{1}{2}}||Y-X\beta^*||_2=n^{-\frac{1}{2}}||W||_2.
$$ 
Hence, $$\frac{1}{\sigma^2}\phi_2^{2}\left(0\right)=\frac{1}{\sigma^2}n^{-1}||W||^2_2=\frac{1}{n}\sum_{i=1}^{n} \left(\frac{W_i}{\sigma}\right)^2,$$ where $W_i$ are i.i.d. $N\left(0,\sigma^2\right)$. But by the Law of Large Numbers, w.h.p. 
$\frac{1}{\sigma^2}\phi_2^{2}\left(0\right)
=\frac{1}{n}\sum_{i=1}^{n} \left(\frac{W_i}{\sigma}\right)^2$ is less than $4  \E{\left(\frac{W_i}{\sigma}\right)^2}=4$. Hence, since $\Gamma\left(0\right)=\sigma$, this means that w.h.p. as $k$ (and therefore $n$) increases it holds:
$$\phi_2\left(0\right) \leq 2\sigma=2\Gamma\left(0\right).$$
Combining this with (\ref{eq:s111}) we get that w.h.p. as $k$ increases 
$$e^{-{3\over 2}}\Gamma\left(\frac{\Lambda_p}{k}\right) \leq 2 \sigma, $$
or equivalently 
$$e^{-{3\over 2}} \sqrt{2 \Lambda_p+\sigma^2} e^{-\frac{\Lambda_p \log p}{n}} \leq 2\sigma,$$ 
which we rewrite as
$$ e^{-{3\over 2}}\sqrt{\frac{2\Lambda_p}{\sigma^2}+1} \leq 2e^{\frac{\Lambda_p \log p}{n}}.$$
Now applying $n>\left(1+\epsilon\right)n^*$, we obtain, 
\begin{align*}
2e^{\frac{\Lambda_p\log p}{n}}<2e^{\frac{\Lambda_p\log p}{n^*\left(1+\epsilon\right)}}=2\left(\frac{2k}{\sigma^2}+1\right)^{\frac{\Lambda_p}{2\left(1+\epsilon\right)k}}.
\end{align*}  
But  $\Lambda_p \leq k$, and therefore 
\begin{align*}
2\left(\frac{2k}{\sigma^2}+1\right)^{\frac{\Lambda_p}{2\left(1+\epsilon\right)k}} 
\leq 2\left(\frac{2k}{\sigma^2}+1\right)^{\frac{1}{2\left(1+\epsilon\right)}}.
\end{align*} 
Combining  we obtain that w.h.p. as $k$ increases, 
$$
e^{-{3\over 2}}\sqrt{\frac{2\Lambda_p}{\sigma^2}+1} \leq 2\left(\frac{2k}{\sigma^2}+1\right)^{\frac{1}{2\left(1+\epsilon\right)}},
$$ 
which after squaring and rearranging gives w.h.p.,
$$
\frac{2\Lambda_p}{\sigma^2} \leq 4e^3 \left(\frac{2k}{\sigma^2}+1\right)^{\frac{1}{\left(1+\epsilon\right)}}-1,
$$ 
which we further rewrite as
\begin{equation}
\frac{\Lambda_p}{k} \leq \frac{\sigma^2}{2k} \left(4e^3 \left(\frac{2k}{\sigma^2}+1\right)^{\frac{1}{\left(1+\epsilon\right)}}-1 \right).
\label{eq:2}
\end{equation}
We claim that this upper bound   tends to zero, as $k \rightarrow +\infty$. 
Indeed, let $x_k=\frac{k}{\sigma^2}$. By the assumption of the theorem 
$x_k \rightarrow + \infty$. But the right-hand side of (\ref{eq:2}) can be upper bounded by a constant multiple of 
$x_k^{-1} x_k^{\frac{1}{1+\epsilon}}=x_k^{-\frac{\epsilon}{1+\epsilon}}$, which converges to zero as $k$ increases. 
Therefore from (\ref{eq:2}), $\frac{\Lambda_p}{k} \rightarrow 0$ w.h.p. as $k$ increases, and the proof is complete in that case.

\vspace{.1in}
\textbf{ Case 2}: $\frac{1}{C} k \log k <n<\left(1-\epsilon\right)n^*$.
First we check that this regime for $n$ is well-defined. Indeed the assumption $\max \{k,\frac{2k}{\sigma^2}+1\} \leq \exp \left(  \sqrt{ C \log p } \right)$ implies that  it holds
\begin{align}\label{eq:boundonk}
n^*=\frac{2k \log p}{ \log \left(\frac{2k}{\sigma^2}+1 \right) }\geq \frac{2k \log p}{\sqrt{C \log p}} \geq \frac{2}{C} k \log k>\frac{1}{C}k \log k.
\end{align}

Now we need to show that w.h.p. as $k$ increases 
$$
\frac{\Lambda_p}{k} \rightarrow 1.
$$

By the definition of $\Lambda_p$, $\phi_2\left(\Lambda_p\right)\leq \phi_2\left(1\right)$. 
Again applying  Theorem ~\ref{theorem:MainResult1} we have that w.h.p. as $k$ increases it holds 
$\phi_2\left(\Lambda_p\right) \geq e^{-{3\over 2}} \Gamma\left(\frac{\Lambda_p}{k}\right).$ Combining the above two inequalities we obtain that w.h.p.,
\begin{equation}
e^{-{3\over 2}}\Gamma\left(\frac{\Lambda_p}{k}\right) \leq \phi_2\left(1\right).
\label{eq:s23}
\end{equation} 
Now we apply  the second part of Theorem~\ref{theorem:MainResult1}. Given any $D_0$ from part (b) of Theorem~\ref{theorem:MainResult1}
and since $k/\sigma\rightarrow\infty$,
we have that $\frac{1}{C}k \log k \le n\le (1-\epsilon)n^*$ furthermore then satisfies $\frac{1}{C}k \log k \le n\le k\log p/(3\log D_0)$ for all sufficiently large $k$. 
We obtain that w.h.p. as $k$ increases
$$
\phi_2\left(1\right) \leq  D_0 \Gamma\left(1\right).
$$ 
Using this in (\ref{eq:s23}) and letting $c=1/(e^{3\over 2}D_0)$
we obtain 
$$
c\Gamma\left( \frac{\Lambda_{p}}{k}\right) \leq \Gamma\left(1\right),
$$ 
namely,
$$
c\sqrt{\frac{2 \Lambda_p}{\sigma^2}+1}e^{-\frac{\Lambda_{p} \log p}{n}}\le \sqrt{\frac{2 k}{\sigma^2}+1}e^{-\frac{k\log p}{n}},
$$ 
and therefore
\begin{equation}
c^2\left(\frac{2\Lambda_p+\sigma^2}{2k+\sigma^2}\right) =
c^2\left(\frac{\frac{2\Lambda_p}{\sigma^2}+1}{\frac{2k}{\sigma^2}+1}\right)\leq e^{\frac{2\left(\Lambda_p-k\right)\log p}{n}}.
\label{eq:44}
\end{equation} 
Now using $n\le\left(1-\epsilon\right)n^*$ and $\Lambda_p-k \leq 0$, we obtain 
$$
e^{\frac{2\left(\Lambda_p-k\right)\log p}{n}} 
\leq e^{\frac{2\left(\Lambda_p-k\right)\log p}{\left(1-\epsilon\right)n^*}}=
\left(\frac{2k}{\sigma^2}+1\right)^{-\frac{k-\Lambda_p}{k\left(1-\epsilon\right)}}.
$$ 
Combining the above with (\ref{eq:44}) we obtain that w.h.p., 
\begin{align*} 
c^2\left(\frac{2\Lambda_p+\sigma^2}{2k+\sigma^2}\right) \leq 
\left(\frac{2k}{\sigma^2}+1\right)^{-\frac{k-\Lambda_p}{k\left(1-\epsilon\right)}},
\end{align*} 
or w.h.p., 
\begin{equation}
c^2\left(\frac{2\Lambda_p}{\sigma^2}+1\right) \leq 
\left(\frac{2k}{\sigma^2}+1\right)^{-\frac{\epsilon}{1-\epsilon}+\frac{\Lambda_p}{k\left(1-\epsilon\right)}}.
\label{eq:s24}
\end{equation}
from which we obtain a simpler bound
\begin{align*}
&c^2 \leq \left(\frac{2k}{\sigma^2}+1\right)^{-\frac{\epsilon}{1-\epsilon}+\frac{\Lambda_p}{k\left(1-\epsilon\right)}},
\end{align*}
namely
\begin{align*}
2\log c \leq  \left(-\frac{\epsilon}{1-\epsilon}+\frac{\Lambda_p}{k\left(1-\epsilon\right)}\right) \log \left( \frac{2k}{\sigma^2}+1 \right)
\end{align*}
or
\begin{align*}
\frac{2 \log c}{\log \left( \frac{2k}{\sigma^2}+1 \right)} \left(1-\epsilon\right)+ \epsilon \leq \frac{\Lambda_p}{k}.
\end{align*}
Since by the assumption of the theorem we have  $k/\sigma^{2}\rightarrow\infty$, we obtain that 
$\frac{\Lambda_p}{k}\ge \epsilon/2$ w.h.p. as $k\rightarrow\infty$.
Now we reapply this bound for (\ref{eq:s24}) and obtain that w.h.p.
\begin{align*}
c^2\left(\frac{\epsilon k}{\sigma^2}+1\right) \leq 
\left(\frac{2k}{\sigma^2}+1\right)^{-\frac{\epsilon}{1-\epsilon}+\frac{\Lambda_p}{k\left(1-\epsilon\right)}}.
\end{align*}
Taking logarithm of both sides, we obtain that w.h.p.
\begin{align*}
(1-\epsilon)\log^{-1}\left({2k\over \sigma^{2}}+1\right)\left(\log\left({\epsilon k\over \sigma^{2}}+1\right)+2\log c \right)
+\epsilon\le {\Lambda_{p} \over k}.
\end{align*}
Now again since $k/\sigma^{2}\rightarrow\infty$, it is easy to see that the ratio of two logarithms approaches unity as $k$ increases, 
and thus the limit of the left-hand side is $1-\epsilon+\epsilon=1$ in the limit. Thus $\Lambda_{p}/k$ approaches unity in the limit w.h.p.
as $k$ increases. This completes the proof.
\end{proof}

\section{The Overlap Gap Property}\label{section:OGP}
In this section we prove Theorem~\ref{theorem:OGP}.
We begin by establishing a certain property regarding the  the limiting curve function $\Gamma$. 
\begin{lemma}\label{lemma:BoundGamma}
Under the assumption of Theorem~\ref{theorem:OGP}, there exist sequences
$0<\zeta_{1,k,n}<\zeta_{2,k,n}<1$ such that $\lim_k k\left(\zeta_{2,k,n}-\zeta_{1,k,n}\right)=+ \infty$ 
 and such that for all sufficiently large $k$ 
\begin{align*}
\inf_{\zeta \in \left(\zeta_{1,k,n},\zeta_{2,k,n}\right)} 
\min\left( \frac{\Gamma\left(\zeta\right)}{\Gamma\left(0\right)},\frac{\Gamma\left(\zeta\right)}{\Gamma\left(1\right)} \right) \geq e^3D_0.
\end{align*}
\end{lemma}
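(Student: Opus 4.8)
The plan is to put the interval $(\zeta_{1,k,n},\zeta_{2,k,n})$ symmetrically around the unique maximizer $\zeta^{*}$ of the strictly log-concave function $\Gamma$, to show that the peak value $\Gamma(\zeta^{*})$ exceeds $e^{3}D_0\max\{\Gamma(0),\Gamma(1)\}$ by a definite factor, and then to pick the half-width just small enough that the quadratic dip of $\log\Gamma$ away from $\zeta^{*}$ does not eat up that factor. Write $h:=\log\Gamma$, so by Proposition~\ref{prop:GammaMonotonic} one has $h'(\zeta)=\frac{k}{2\zeta k+\sigma^{2}}-\frac{k\log p}{n}$ and $h''(\zeta)=-\frac{2k^{2}}{(2\zeta k+\sigma^{2})^{2}}<0$, and $|h''|$ is decreasing in $\zeta$.

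First I would record the facts about $\zeta^{*}$ supplied by the $n$-window of Theorem~\ref{theorem:OGP}. The equation $h'=0$ has the unique root $\zeta^{*}=\frac{n-\sigma^{2}\log p}{2k\log p}$, which is a maximum, and satisfies $2\zeta^{*}k+\sigma^{2}=n/\log p$. From $(e^{7}D_0^{2}+1)\sigma^{2}\log p\le n\le\frac{k\log p}{3\log D_0}$ and $\sigma^{2}\le 2k$ we get $0<\zeta^{*}\le\frac{1}{6\log D_0}<1$ (so $\zeta^{*}$ is interior) and $k\zeta^{*}=\tfrac12\!\left(\frac{n}{\log p}-\sigma^{2}\right)\ge\frac{e^{7}D_0^{2}}{2}\sigma^{2}$. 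A direct substitution gives the clean identities
\begin{align*}
h(\zeta^{*})-h(0)=\tfrac12\bigl(U-1-\log U\bigr),\qquad h(\zeta^{*})-h(1)=\tfrac12\bigl(V-1-\log V\bigr),
\end{align*}
with $U=\frac{\sigma^{2}\log p}{n}$ and $V=\frac{(2k+\sigma^{2})\log p}{n}$. The $n$-window forces $U\le\frac{1}{e^{7}D_0^{2}+1}$ and $V\ge 6\log D_0$; since $x\mapsto\tfrac12(x-1-\log x)$ is decreasing on $(0,1)$ and increasing on $(1,\infty)$, this yields
\begin{align*}
h(\zeta^{*})-h(0)>\tfrac12\bigl(\log(e^{7}D_0^{2}+1)-1\bigr)>3+\log D_0,\qquad h(\zeta^{*})-h(1)\ge 3\log D_0-\tfrac12-\tfrac12\log(6\log D_0),
\end{align*}
and for $D_0$ large the right side of the second inequality also exceeds $3+\log D_0=\log(e^{3}D_0)$. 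Consequently
\begin{align*}
g:=h(\zeta^{*})-\log(e^{3}D_0)-\max\{h(0),h(1)\}\ \ge\ \epsilon_0(D_0)\ >\ 0
\end{align*}
for a positive constant $\epsilon_0(D_0)$ depending only on $D_0$.

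Now I would turn $g$ into the interval. Put $\delta:=\min\!\left\{\frac{n}{2k\log p}\sqrt{g},\ \frac{\zeta^{*}}{2}\right\}$ and $\zeta_{1,k,n}:=\zeta^{*}-\delta$, $\zeta_{2,k,n}:=\zeta^{*}+\delta$, so that $0<\frac{\zeta^{*}}{2}\le\zeta_{1,k,n}<\zeta_{2,k,n}\le\frac{3\zeta^{*}}{2}<1$. On $[\zeta^{*}/2,1]$ we have $2\zeta k+\sigma^{2}\ge\zeta^{*}k+\sigma^{2}\ge\tfrac12(n/\log p)$, hence $|h''|\le\frac{8k^{2}(\log p)^{2}}{n^{2}}$ throughout $[\zeta_{1,k,n},\zeta_{2,k,n}]$; since $h'(\zeta^{*})=0$, a second‑order Taylor expansion gives for every $\zeta$ in that interval
\begin{align*}
h(\zeta)\ \ge\ h(\zeta^{*})-\frac{4k^{2}(\log p)^{2}}{n^{2}}\,\delta^{2}\ \ge\ h(\zeta^{*})-g\ =\ \log(e^{3}D_0)+\max\{h(0),h(1)\}
\end{align*}
by the choice of $\delta$. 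Exponentiating, $\Gamma(\zeta)\ge e^{3}D_0\,\Gamma(0)$ and $\Gamma(\zeta)\ge e^{3}D_0\,\Gamma(1)$ hold simultaneously for all $\zeta\in(\zeta_{1,k,n},\zeta_{2,k,n})$, which is the asserted bound on the infimum. Finally $k(\zeta_{2,k,n}-\zeta_{1,k,n})=2k\delta=\min\!\left\{\frac{n}{\log p}\sqrt{g},\ k\zeta^{*}\right\}$, and both quantities are at least a positive $D_0$‑dependent multiple of $\sigma^{2}$ (using $\frac{n}{\log p}\ge(e^{7}D_0^{2}+1)\sigma^{2}$, $g\ge\epsilon_0(D_0)>0$, and the lower bound $k\zeta^{*}\ge\frac{e^{7}D_0^{2}}{2}\sigma^{2}$ above), so since $\sigma^{2}\to\infty$ by hypothesis we obtain $k(\zeta_{2,k,n}-\zeta_{1,k,n})\to+\infty$. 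The main obstacle is the quantitative separation at the peak, specifically the inequality $h(\zeta^{*})-h(1)>\log(e^{3}D_0)$: it uses both ends of the $n$-window together with $\sigma^{2}\le 2k$, and needs $D_0$ taken large enough that $3\log D_0$ dominates $\tfrac12\log(6\log D_0)$; the remaining steps are routine calculus.
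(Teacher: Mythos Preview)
Your argument is correct. The identities $h(\zeta^{*})-h(0)=\tfrac12(U-1-\log U)$ and $h(\zeta^{*})-h(1)=\tfrac12(V-1-\log V)$ are clean and the subsequent bounds via the $n$-window are accurate; the second-order Taylor estimate with the curvature bound $|h''|\le 8k^{2}(\log p)^{2}/n^{2}$ on $[\zeta^{*}/2,3\zeta^{*}/2]$ is sound, and the divergence of $k\delta$ follows as you say from $\sigma^{2}\to\infty$.

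The paper, however, proceeds differently. It splits into the two cases $n\le n^{*}$ and $n\ge n^{*}$ and in each case writes down explicit choices of $\zeta_{1,k,n},\zeta_{2,k,n}$ (namely $\frac{e^{7}D_0^{2}\sigma^{2}}{2k}$ and $\frac{e^{7}D_0^{2}\sigma^{2}}{k}$ in the first case, and the absolute constants $\tfrac15,\tfrac14$ in the second). Because the case split fixes which of $\Gamma(0),\Gamma(1)$ is larger, only one ratio needs to be checked, and log-concavity reduces that to evaluating $\Gamma$ at the two endpoints --- no derivative or Taylor bound is needed. Your route is more unified (no case analysis, single interval centered at the maximizer) and gives a transparent reason \emph{why} the lemma holds, at the price of tracking $h''$ and the gap $g$; the paper's route is more elementary and hands you the endpoints directly, which can be convenient downstream if one ever needs explicit $\zeta_{i,k,n}$ values.
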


\begin{proof}
Recall that 
$\Gamma\left(0\right)=\sigma$ and $\Gamma\left(1\right)=\sqrt{2k+\sigma^2}\exp\left( -\frac{k \log p}{n}\right)$. 
We will rely on the results of Proposition~\ref{prop:GammaMonotonic} and thus recall the definition of $n^{*}$.

Assume now $n^* \le n< \frac{k \log p}{3\log D_0}$. We choose $\zeta_{1,k,n}=\frac{1}{5}$ and $\zeta_{2,k,n}=\frac{1}{4}$. 
Clearly $k\left(\zeta_{2,k,n}-\zeta_{1,k,n}\right) \rightarrow + \infty$. 
Since $n\ge n^{*}$ we know that $\Gamma\left(0\right)<\Gamma\left(1\right)$ and therefore it suffices to show  
\begin{align*}
\inf_{\zeta \in \left(\zeta_{1,k,n},\zeta_{2,k,n}\right)} 
\frac{\Gamma\left(\zeta\right)}{\Gamma\left(1\right)} \geq e^3D_0.
\end{align*}Using the log-concavitiy of $\Gamma$ and squaring both side it suffices to establish
\begin{align*}
\min \left( \left(\frac{\Gamma\left(\zeta_{1,k,n}\right)}{\Gamma\left(1\right)}\right)^2,
\left(\frac{\Gamma\left(\zeta_{2,k,n}\right)}{\Gamma\left(1\right)}\right)^2 \right) > e^6D_0^2.
\end{align*} 
But since $n< k \log p/(3\log D_0)$ have 
\begin{align*}
\min \left( \left(\frac{\Gamma\left(\zeta_{1,k,n}\right)}{\Gamma\left(1\right)}\right)^2,
\left(\frac{\Gamma\left(\zeta_{2,k,n}\right)}{\Gamma\left(1\right)}\right)^2 \right)
& = \min \left( \frac{\frac{2k}{5}+\sigma^2}{2k+\sigma^2}e^{4k \log p\over 5n},
\frac{\frac{3k}{4}+\sigma^2}{2k+\sigma^2}e^{3k \log p\over 4n}  \right)\\
&\ge \min \left( {1\over 4}D_0^{12\over 5},{2\over 3}D_0^{9\over 4}  \right)\\
&>e^6D_0^2,
\end{align*}  
for all sufficiently large $D_0$.
This completes the proof of the lemma.
\end{proof}

Now we return to the proof of  Theorem~\ref{theorem:OGP}. 
\begin{proof}[Proof of Theorem~\ref{theorem:OGP}]
Choose $0<\zeta'_{1,k,n}<\zeta'_{2,k,n}<1$  from Lemma~\ref{lemma:BoundGamma} 
and set $r_k=D_0\max \left( \Gamma\left(0\right), \Gamma\left(1\right) \right)$. We will now prove that for this value of $r_k$ 
and $\zeta_{1,k,n}=1-\zeta'_{2,k,n},\zeta_{2,k,n}=1-\zeta'_{1,k,n}$, the set $S_{r_k}$ satisfies the claim of the theorem. 
Applying the second part of Theorem~\ref{theorem:MainResult1} we obtain $\beta^* \in S_{r_{k}}$ since 
$n^{-\frac{1}{2}}||Y-X\beta^*||_2=n^{-{1\over 2}}\sqrt{\sum_i W_i^2}$ which by the Law of Large Numbers is w.h.p.
at most $2\sigma=2\Gamma(0)<r_k$, provided $D_0$ is sufficiently large.
This establishes (b).
We also note that (c) follows immediately from Theorem~\ref{theorem:MainResult1}.

We now establish part (a). 
Assume there exists a $\beta\in S_{r_{k}}$ with overlap $\zeta \in \left(\zeta_{1,k,n},\zeta_{2,k,n}\right)$. 
This implies that the optimal value of the optimization problem $\Phi_{2}(\ell)$ satisfies 
\begin{equation}
\phi_2\left(k\left(1-\zeta\right)\right)\le r_k.
\label{eq:45}
\end{equation} 
Now  $1-\zeta \in \left(1-\zeta_{2,k,n},1-\zeta_{1,k,n}\right)=\left(\zeta'_{1,k,n},\zeta'_{2,k,n}\right)$ and Lemma~\ref{lemma:BoundGamma}
imply
$$
e^3D_0 \max \{ \Gamma\left(0\right), \Gamma\left(1\right) \}\le\Gamma\left(1-\zeta\right).
$$ 
We obtain
\begin{align*}
r_k\le e^{-3}\Gamma\left(1-\zeta\right),
\end{align*}
which combined with (\ref{eq:45}) contradicts the first part of Theorem~\ref{theorem:MainResult1}.
\end{proof}

\section{Conclusions and Open Questions}
Our paper prompts several new directions for research. Relaxing the assumption that regression coefficients are binary
is a natural first step in extending the results of this paper. We believe that both the general picture and the main approach should remain the same 
in this setting, where 
appropriate discretization of the  coefficient of the regression vector values might be a viable approach. 
Furthermore, it would be interesting to see
if the conditional second moment approach proposed in this paper can be used to obtain squared error associated with  the
relaxation of the problem  such as LASSO and the Compressive Sensing methods.

An interesting question is to see as to what extent $n^*$ is indeed the information theoretic limit for the problem
of recovery of $\beta^*$ in a strong sense. As per the results of~\cite{wang2010information}, the application of the Gaussian channel
estimates imply that below this threshold the precise recovery of $\beta^*$ is impossible information theoretically.
However, it is not ruled out that it might be possible to recover at least a portion of the support of $\beta^*$.
Our results show that the method based on minimizing the squared error is a poor help for this problem as the optimal solution $\beta_2$
misses the support almost completely. But it is not ruled out that some other method is capable of recovering at least some positive
fraction of the support of $\beta^*$. We conjecture that this is not the case and that below $n^*$ the recovery
of $\beta^*$ is impossible in the very strong sense that even obtaining  a fraction of support $\beta^*$ is not possible information
theoretically. Similarly, motivated by the fact that $n_{\rm inf,1}$ is asymptotic information theoretic limit when $k=1$ and $\sigma$ grows,
it would be interesting to see if the recovery of any part of the support of $\beta^*$ is possible when $n<n_{\rm inf,1}$. We conjecture that with $n<n_{\mathrm{inf},1}$ samples, the recovery of even one coordinate of the support of $\beta^*$ is impossible. 

We pose also a geometric question on the importance of $n_{\rm inf,1}$. Notice that for $n<n_{\mathrm{inf},1}$ the first moment curve $\Gamma(\cdot)$ is decreasing, while for $n>n_{\mathrm{inf},1}$ the first moment curve $\Gamma(\cdot)$ first increases and then decreases (in particular, it is non-monotonic). Assuming the $\Gamma(\cdot)$ is a tight approximation of $\Phi_2(\cdot)$ lead to the statement and proof of both Theorems \ref{theorem:OGP} and \ref{theorem:sharptheorem} in the main body paper. Making this assumption now for the behavior around $n_{\rm inf, 1}$ it suggests the following implication for the local geometry of the solution space of $(\Phi_2)$ around $\beta^*$: for $n<n_{\mathrm{inf},1}$ the ground truth $\beta^*$ is \textbf{not} a Hamming-distance local minimum in $(\Phi_2)$ (as $\Phi_2(\cdot)$ is decreasing at $\ell=0$) , while for $n>n_{\mathrm{inf},1}$  the ground truth $\beta^*$ is a Hamming-distance local minimum in $(\Phi_2)$ (as $\Phi_2(\cdot)$ is increasing at $\ell=0$.) Notice that this local property does not imply that recovery of $\beta^*$ is possible: there are potentially many other local minima in $\Phi_2$ (something actually true as $n_{\mathrm{inf},1}<n^*$ and $n^*$ is the proven information-theoretic limit.) We pose the establishment of this property an an interesting information-theoretic open problem.

Our results apply to the case when the sampling size $n$ is essentially of the order $o(k\log p)$ (though a small constant in front 
of $k\log p$ is allowed). Obtaining
estimates of the squared error for the regime between $o(k\log p)$ and the LASSO/Compressive Sensing 
threshold $n_{\text{LASSO/CS}}=(2k+\sigma^2)\log p$ is of interest.
This appears to be a difficult regime, as in this case the gap between the conditional first and second moment widens as $n$ approaches
the order $O(k\log p)$. It is possible that non-rigorous methods of Replica Symmetry Breaking might be of help here to obtain
at least good predictions for the answers. Such predictions are available in the regime when $k,n$ and $p$ are of the same
order~\cite{bayati2011dynamics},\cite{zheng2015does}.

Last but not the least, understanding the algorithmic complexity of the problem of finding $\beta^*$ when $n$ is between
$n^*$ and $n_{\text{LASSO/CS}}$ is of great interest. The Overlap Gap Property established in this paper suggests that
the problem might indeed be algorithmically hard, though such formal hardness results are lacking even for  random constraint
satisfaction problems for which the OGP was known already for a long time. On the other hand, it is often observed
that for random constraint satisfaction problems outside the regime where OGP takes place, even very naive algorithms
such as greedy type algorithms are successful. By drawing an analogy
between this class of problems and the problems of high dimensional regression, it is possible that above say  threshold $n_{\text{LASSO/CS}}$
some version of a greedy algorithm is successful in recovering the regression vector $\beta^*$. Similarly, it would interesting
to establish that the OGP ceases to exist above the threshold $n_{\rm LASSO/CS}$.

\section*{Aknowledgment}
The authors would like to thank Philippe Rigollet for helpful discussions during the preparation of this paper.


\bibliographystyle{amsalpha}


\newcommand{\etalchar}[1]{$^{#1}$}
\providecommand{\bysame}{\leavevmode\hbox to3em{\hrulefill}\thinspace}
\providecommand{\MR}{\relax\ifhmode\unskip\space\fi MR }
\providecommand{\MRhref}[2]{%
  \href{http://www.ams.org/mathscinet-getitem?mr=#1}{#2}
}
\providecommand{\href}[2]{#2}

\end{document}